\title{Reversible and irreversible bracket-based dynamics for deep graph neural networks}
\author{%
  Anthony Gruber$\,\,^\dagger$ \\
  Center for Computing Research\\
  Sandia National Laboratories\\
  Albuquerque, NM. USA \\
  \texttt{adgrube@sandia.gov}\\
  \And
  Kookjin Lee \thanks{K. Lee acknowledges the support from the U.S. National Science Foundation under grant CNS2210137.} \\
  School of Computing and Augmented Intelligence \\
  Arizona State University \\
  Tempe, AZ. USA\\
  \texttt{kookjin.lee@asu.edu}\\
  \And
  Nathaniel Trask \thanks{Sandia National Laboratories is a multimission laboratory managed and operated by National Technology \& Engineering Solutions of Sandia, LLC, a wholly owned subsidiary of Honeywell International Inc., for the U.S. Department of Energy’s National Nuclear Security Administration under contract DE-NA0003525. This paper describes objective technical results and analysis. Any subjective views or opinions that might be expressed in the paper do not necessarily represent the views of the U.S. Department of Energy or the United States Government. This article has been co-authored by an employee of National Technology \& Engineering Solutions of Sandia, LLC under Contract No. DE-NA0003525 with the U.S. Department of Energy (DOE). The employee owns all right, title and interest in and to the article and is solely responsible for its contents. The United States Government retains and the publisher, by accepting the article for publication, acknowledges that the United States Government retains a non-exclusive, paid-up, irrevocable, world-wide license to publish or reproduce the published form of this article or allow others to do so, for United States Government purposes. The DOE will provide public access to these results of federally sponsored research in accordance with the DOE Public Access Plan https://www.energy.gov/downloads/doe-public-access-plan.  The work of N. Trask and A. Gruber is supported by the U.S. Department of Energy, Office of Advanced Computing Research under the "Scalable and Efficient Algorithms - Causal Reasoning, Operators and Graphs" (SEA-CROGS) project, the DoE Early Career Research Program, and the John von Neumann fellowship at Sandia.} \\
  School of Engineering and Applied Science\\
  University of Pennsylvania \\
  Philadelphia, PA. USA \\
  \texttt{ntrask@seas.upenn.edu} \\
}
\newcommand{\bb}[1]{\mathbf{#1}}
\newcommand{\lr}[1]{\left(#1\right)}
\newcommand{\nn}[1]{\left|#1\right|}
\newcommand{\IP}[2]{\left\langle#1,#2\right\rangle}
\newcommand{\IPk}[2]{\left(#1,#2\right)}
\newcommand{\rb}[2]{\left\{#1,#2\right\}}
\newcommand{\ib}[2]{\left[#1,#2\right]}
\newcommand{\sm}{\mathrm{Softmax}}
\newcommand{\lrelu}{\mathrm{LeakyReLU}}
\newcommand{\sig}{\bm{\sigma}}
\newcommand{\xx}{\bb{x}}
\newcommand{\pp}{\bb{p}}
\newcommand{\qq}{\bb{q}}
\newcommand{\LL}{\bb{L}}
\newcommand{\MM}{\bb{M}}
\newcommand{\GG}{\bb{G}}
\newcommand{\HL}[1]{{\color{orange} {#1}}}
\newcommand{\HLs}[1]{{\color{blue} {#1}}}
\newtheorem{theorem}{Theorem}[section]
\newtheorem{proposition}{Proposition}[section]
\newtheorem{remark}[theorem]{Remark}
\begin{document}

\maketitle

\begin{abstract}
Recent works have shown that physics-inspired architectures allow the training of deep graph neural networks (GNNs) without oversmoothing. The role of these physics is unclear, however, with successful examples of both reversible (e.g., Hamiltonian) and irreversible (e.g., diffusion) phenomena producing comparable results despite diametrically opposed mechanisms, and further complications arising due to empirical departures from mathematical theory. This work presents a series of novel GNN architectures based upon structure-preserving bracket-based dynamical systems, which are provably guaranteed to either conserve energy or generate positive dissipation with increasing depth.  It is shown that the theoretically principled framework employed here allows for inherently explainable constructions, which contextualize departures from theory in current architectures and better elucidate the roles of reversibility and irreversibility in network performance. Code is available at the Github repository \url{https://github.com/natrask/BracketGraphs}.
\end{abstract}



\section{Introduction}
Graph neural networks (GNNs) have emerged as a powerful learning paradigm able to treat unstructured data and extract ``object-relation''/causal relationships while imparting inductive biases which preserve invariances through the underlying graph topology \cite{gori2005new,zhou2020graph,hamilton2017inductive,hamilton2017representation}. This framework has proven effective for a wide range of both \textit{graph analytics} and \textit{data-driven physics modeling} problems. Despite successes, GNNs have generally struggle to achieve the improved performance with increasing depth typical of other architectures. Well-known pathologies, such as oversmoothing, oversquashing, bottlenecks, and exploding/vanishing gradients yield deep GNNs which are either unstable or lose performance as the number of layers increase \cite{chen2020measuring,bronstein2021geometric,zhou2020towards,cai2020note}.

To combat this, a number of works build architectures which mimic physical processes to impart desirable numerical properties. For example, some works claim that posing message passing as either a diffusion process or reversible flow may promote stability or help retain information, respectively. These present opposite ends of a spectrum between irreversible and reversible processes, which either dissipate or retain information. It is unclear, however, what role (ir)reversibility plays \cite{wang2021dissecting}. One could argue that dissipation entropically destroys information and could promote oversmoothing, so should be avoided. Alternatively,  in dynamical systems theory, dissipation is crucial to realize a low-dimensional attractor, and thus dissipation may play an important role in realizing dimensionality reduction. Moreover, recent work has shown that dissipative phenomena can actually sharpen information as well as smooth it \cite{digiovanni2023understanding}, although this is not often noticed in practice since typical empirical tricks (batch norm, etc.) lead to a departure from the governing mathematical theory.

In physics, Poisson brackets and their metriplectic/port-Hamiltonian generalization to dissipative systems provide an abstract framework for studying conservation and entropy production in dynamical systems. In this work, we construct four novel architectures which span the (ir)reversibility spectrum, using geometric brackets as a means of parameterizing dynamics abstractly without empirically assuming a physical model. This relies on an application of the data-driven exterior calculus (DDEC) \cite{trask2020enforcing}, which allows a reinterpretation of the message-passing and aggregation of graph attention networks \cite{velivckovic2017graph} as the fluxes and conservation balances of physics simulators \cite{shukla2022scalable}, providing a simple but powerful framework for mathematical analysis. In this context, we recast graph attention as an inner-product on graph features, inducing graph derivative ``building-blocks'' which may be used to build geometric brackets. In the process, we generalize classical graph attention \cite{velivckovic2017graph} to higher-order clique cochains (e.g., labels on edges and loops). The four architectures proposed here scale with identical complexity to classical graph attention networks, and possess desirable properties that have proven elusive in current architectures. On the reversible and irreversible end of the spectrum we have \textit{Hamiltonian} and \textit{Gradient} networks. In the middle of the spectrum, \textit{Double Bracket} and \textit{Metriplectic} architectures combine both reversibility and irreversibility, dissipating energy to either the environment or an entropic variable, respectively, in a manner consistent with the second law of thermodynamics. We summarize these brackets in Table \ref{tab:brackets}, providing a diagram of their architecture in Figure \ref{fig:arch-diag}.

\noindent\textbf{Primary contributions:}

\noindent \textbf{Theoretical analysis of GAT in terms of exterior calculus.} Using DDEC we establish a unified framework for construction and analysis of message-passing graph attention networks, and provide an extensive introductory primer to the theory in the appendices. In this setting, we show that with our modified attention mechanism, GATs amount to a diffusion process for a special choice of activation and weights.


\noindent \textbf{Generalized attention mechanism.} Within this framework, we obtain a natural and flexible extension of graph attention from nodal features to higher order cliques (e.g. edge features). We show attention must have a symmetric numerator to be formally structure-preserving, and introduce a novel and flexible graph attention mechanism parameterized in terms of learnable inner products on nodes and edges.


\noindent \textbf{Novel structure-preserving extensions.} We develop four GNN architectures based upon bracket-based dynamical systems. In the metriplectic case, we obtain the first architecture with linear complexity in the size of the graph while previous works are $O(N^3)$.


\noindent \textbf{Unified evaluation of dissipation.} We use these architectures to systematically evaluate the role of (ir)reversibility in the performance of deep GNNs. We observe best performance for partially dissipative systems, indicating that a combination of both reversibility and irreversibility are important. Pure diffusion is the least performant across all benchmarks. For physics-based problems including optimal control, there is a distinct improvement. All models provide near state-of-the-art performance and marked improvements over black-box GAT/NODE networks.

\begin{table}[t]
  \centering
  \resizebox{\columnwidth}{!}{
  \begin{tabular}{lllll}
    \toprule
    {\bf Formalism} & {\bf Equation} & {\bf Requirements} & {\bf Completeness} & {\bf Character} \\
    Hamiltonian & $\dot\xx = \rb{\xx}{E}$ & $\LL^*=-\LL$, & complete & conservative \\
    & & Jacobi's identity & & \\
    Gradient & $\dot\xx = -\ib{\xx}{E}$ & $\MM^*=\MM$ & incomplete & totally dissipative \\
    Double Bracket & $\dot\xx = \rb{\xx}{E}+\left\{\rb{\xx}{E}\right\}$ & $\LL^*=-\LL$ & incomplete & partially dissipative \\
    Metriplectic & $\dot\xx = \rb{\xx}{E}+\ib{\xx}{S}$ & $\LL^*=-\LL, \MM^*=\MM$, & complete & partially dissipative \\
    & & $\LL\nabla S = \MM\nabla E = \bm{0}$ & & \\
    \bottomrule
    \end{tabular}}
    \vspace{5pt}
    \caption{The abstract bracket formulations employed in this work.  Here $\xx$ represents a state variable, while $E=E(\xx),S=S(\xx)$ are energy and entropy functions. ``Conservative'' indicates purely reversible motion, ``totally dissipative'' indicates purely irreversible motion, and ``partially dissipative'' indicates motion which either dissipates $E$ (in the double bracket case) or generates $S$ (in the metriplectic case).}
  \label{tab:brackets}
\end{table}

\section{Previous works}

\noindent \textbf{Neural ODEs:} Many works use neural networks to fit dynamics of the form $\dot{x} = f(x,\theta)$ to time series data. Model calibration (e.g., UDE \cite{rackauckas2020universal}), dictionary-based learning (e.g., SINDy \cite{brunton2016discovering}), and neural ordinary differential equations (e.g., NODE \cite{chen2018neural}) pose a spectrum of inductive biases requiring progressively less domain expertise. Structure-preservation provides a means of obtaining stable training without requiring domain knowledge, ideally achieving the flexibility of NODE with the robustness of UDE/SINDy. The current work learns dynamics on a graph while using a modern NODE library to exploit the improved accuracy of high-order integrators \cite{politorchdyn,xhonneux2020continuous,gu2020implicit}.

\noindent \textbf{Structure-preserving dense networks:} For dense networks, it is relatively straightforward to parameterize reversible dynamics, see for example: Hamiltonian neural networks \cite{Greydanus2019hnn,finzi2020simplifying,chen2021data,gruverdeconstructing}, Hamiltonian generative networks \cite{toth2019hamiltonian}, Hamiltonian with Control (SymODEN) \cite{zhongsymplectic}, Deep Lagrangian networks \cite{lutter2018deep} and Lagrangian neural networks \cite{cranmer2020lagrangian}. Structure-preserving extensions to dissipative systems are more challenging, particularly for \textit{metriplectic} dynamics \cite{guha2007metriplectic} which require a delicate degeneracy condition to preserve discrete notions of the first and second laws of thermodynamics. For dense networks such constructions are intensive, suffering from $O(N^3)$ complexity in the number of features  \cite{lee2021machine,lee2022structure,zhang2022gfinns}. In the graph setting we avoid this and achieve linear complexity by exploiting exact sequence structure. Alternative dissipative frameworks include Dissipative SymODEN \cite{zhong2020dissipative} and port-Hamiltonian \cite{desai2021port}. We choose to focus on metriplectic parameterizations due to their broad potential impact in data-driven physics modeling, and ability to naturally treat fluctuations in multiscale systems \cite{grmela2018generic}.

\noindent \textbf{Physics-informed vs structure-preserving:}
"Physics-informed" learning imposes physics by penalty, adding a regularizer corresponding to a physics residual. The technique is simple to implement and has been successfully applied to solve a range of PDEs \cite{raissi2019physics}, discover data-driven models to complement first-principles simulators \cite{raissi2018hidden,masi2022multiscale,patel2022thermodynamically}, learn metriplectic dynamics \cite{hernandez2021structure}, and perform uncertainty quantification \cite{yang2019adversarial,zhang2019quantifying}. Penalization poses a multiobjective optimization problem, however, with parameters weighting competing objectives inducing pathologies during training, often resulting in physics being imposed to a coarse tolerance and qualitatively poor predictions \cite{wang2021understanding,wang2022and}. In contrast, structure-preserving architectures exactly impose physics by construction via carefully designed networks. Several works have shown that penalty-based approaches suffer in comparison, with structure-preservation providing improved long term stability, extrapolation and physical realizability.

\noindent \textbf{Structure-preserving graph networks:} Several works use discretizations of specific PDEs to combat oversmoothing or exploding/vanishing gradients, e.g. telegraph equations \cite{rusch2022graph} or various reaction-diffusion systems \cite{choi2022gread}. Several works develop Hamiltonian flows on graphs \cite{sanchez2019hamiltonian,bishnoienhancing}. For metriplectic dynamics, \cite{hernandez2022thermodynamics} poses a penalty based formulation on graphs. We particularly focus on GRAND, which poses graph learning as a diffusive process \cite{chamberlain2021grand}, using a similar exterior calculus framework and interpreting attention as a diffusion coefficient. We show in Appendix~\ref{app:gats} that their analysis fails to account for the asymmetry in the attention mechanism, leading to a departure from the governing theory. To account for this, we introduce a \textit{modified attention mechanism} which retains interpretation as a part of diffusion PDE. In this purely irreversible case, it is of interest whether adherence to the theory provides improved results, or GRAND's success is driven by something other than structure-preservation.


\begin{figure}[tb]
    \centering
    \includegraphics[width=\textwidth]{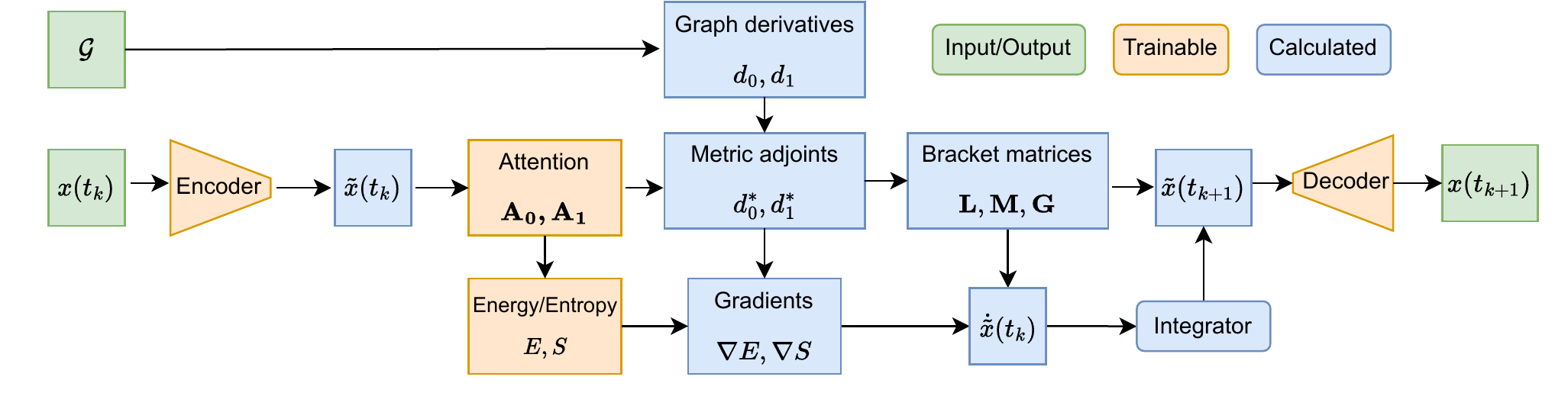}
    \caption{A diagrammatic illustration of the bracket-based architectures introduced in Section~\ref{sec:brackets}. }
    \label{fig:arch-diag}
\end{figure}\label{sec:intro}

\section{Theory and fundamentals}\label{sec:theory}
Here we introduce the two essential ingredients to our approach: bracket-based dynamical systems for neural differential equations, and the data-driven exterior calculus which enables their construction.  A thorough introduction to this material is provided in Appendices~\ref{app:graphcalc}, \ref{app:adjgrad}, and \ref{app:variational}. 

\noindent \textbf{Bracket-based dynamics:}  Originally introduced as an extension of Hamiltonian/Lagrangian dynamics to include dissipation \cite{morrison2009thoughts}, bracket formulations are used to inform a dynamical system with certain structural properties, e.g., time-reversibility, invariant differential forms, or property preservation.  Even without dissipation, bracket formulations may compactly describe dynamics while preserving core mathematical properties, making them ideal for designing neural architectures.

Bracket formulations are usually specified via some combination of reversible brackets $\rb{F}{G} = \IP{\nabla F}{\LL\nabla G}$ and irreversible brackets $\ib{F}{G} = \IP{\nabla F}{\MM\nabla G}, \left\{\rb{F}{G}\right\} = \IP{\nabla F}{\LL^2\nabla G}$ for potentially state-dependent operators $\LL^*=-\LL$ and $\MM^*=\MM$.  The particular brackets which are used in the present network architectures are summarized in Table~\ref{tab:brackets}.  Note that complete systems are the dynamical extensions of isolated thermodynamical systems: they conserve energy and produce entropy, with nothing lost to the ambient environment.  Conversely, incomplete systems do not account for any lost energy: they only require that it vanish in a prescribed way.  The choice of completeness is an application-dependent modeling assumption.

\noindent \textbf{Exterior calculus:}  In the combinatorial Hodge theory \cite{knill2013dirac}, an oriented graph $\mathcal{G}=\{\mathcal{V},\mathcal{E}\}$ carries sets of $k$-cliques, denoted $\mathcal{G}_k$, which are collections of ordered subgraphs generated by $(k+1)$ nodes.  This induces natural exterior derivative operators $d_k:\Omega_k\to\Omega_{k+1}$, acting on the spaces of functions on $\mathcal{G}_k$, which are the signed incidence matrices between $k$-cliques and $(k+1)$-cliques.  An explicit representation of these derivatives is given in Appendix~\ref{app:graphcalc}, from which it is easy to check the exact sequence property $d_{k+1}\circ d_{k} = 0$ for any $k$. This yields a discrete de Rham complex on the graph $\mathcal{G}$ (Figure \ref{fig:commute-diag}). Moreover, given a choice of inner product (say, $\ell^2$) on $\Omega_k$, there is an obvious dual de Rham complex which comes directly from adjointness.  In particular, one can define dual derivatives $d_k^*:\Omega_{k+1}\to\Omega_k$ via the equality
\[\IP{d_kf}{g}_{k+1} = \IP{f}{d_k^*g}_k,\]
from which nontrivial results such as the Hodge decomposition, Poincar\'{e} inequality, and coercivity/invertibility of the Hodge Laplacian $\Delta_k = d_k^*d_k + d_{k-1}d_{k-1}^*$ follow (see e.g. \cite{trask2020enforcing}).  Using the derivatives $d_k, d_k^*$, it is possible to build compatible discretizations of PDEs on $\mathcal{G}$ which are guaranteed to preserve exactness properties such as, e.g., $d_1\circ d_0 = \mathrm{curl}\circ\mathrm{grad} = 0$.

The choice of inner product $\IP{\cdot}{\cdot}_k$ thus induces a definition of the dual derivatives $d_k^*$. In the graph setting \cite{jiang2011statistical}, one typically selects the $\ell^2$ inner product, obtaining the adjoints of the signed incidence matrices as $d_k^*=d_k^\intercal$. By instead working with the modified inner product $\IPk{\bb{v}}{\bb{w}} = \bb{v}^\intercal \bb{A}_k \bb{w}$ for a machine-learnable $\bb{A}_k$, we obtain $d_k^* = \bb{A}_k^{-1}d_k^\intercal\bb{A}_{k+1}$ (see Appendix~\ref{app:adjgrad}). This parameterization inherits the exact sequence property from the graph topology encoded in $d_k$ while allowing for incorporation of geometric information from data.  This leads directly to the following result, which holds for any (potentially feature-dependent) symmetric positive definite matrix $\bb{A}_k$.
\begin{theorem}\label{thm:exactseq}
    The dual derivatives $d_k^*:\Omega_{k+1}\to\Omega_k$ adjoint to $d_k:\Omega_k\to\Omega_{k+1}$ with respect to the learnable inner products $\bb{A}_k:\Omega_k\to\Omega_k$ satisfy an exact sequence property.
\end{theorem}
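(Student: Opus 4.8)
The plan is to deduce the dual cochain condition $d_{k-1}^*\circ d_k^* = \bm{0}$ directly from the primal one $d_k\circ d_{k-1} = \bm{0}$, so that the statement should be read exactly as the primal exact sequence property is: successive dual derivatives compose to zero as maps $\Omega_{k+1}\to\Omega_{k-1}$, independently of the choice of learnable metrics.

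First I would record the closed form of the dual derivative. As derived in Appendix~\ref{app:adjgrad}, the adjoint of $d_k$ with respect to the inner products $\IPk{\bb v}{\bb w} = \bb v^\intercal \bb A_k \bb w$ is $d_k^* = \bb A_k^{-1} d_k^\intercal \bb A_{k+1}$, which is well defined since each $\bb A_k$, being symmetric positive definite, is invertible. It is worth noting in passing why this is the adjoint: for $f\in\Omega_k$ and $g\in\Omega_{k+1}$ one has $\IPk{d_kf}{g} = (d_kf)^\intercal\bb A_{k+1}g = f^\intercal d_k^\intercal \bb A_{k+1}g = f^\intercal\bb A_k\lr{\bb A_k^{-1}d_k^\intercal\bb A_{k+1}}g = \IPk{f}{d_k^*g}$, where the inner products are taken in $\Omega_{k+1}$ and $\Omega_k$ respectively.

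Next comes the main computation. Composing the two dual derivatives and cancelling the interior factor $\bb A_k\bb A_k^{-1} = \bb I$ gives
\[
d_{k-1}^*\circ d_k^* = \bb A_{k-1}^{-1}d_{k-1}^\intercal \bb A_k \bb A_k^{-1} d_k^\intercal \bb A_{k+1} = \bb A_{k-1}^{-1}\lr{d_k d_{k-1}}^\intercal \bb A_{k+1},
\]
using $d_{k-1}^\intercal d_k^\intercal = \lr{d_k d_{k-1}}^\intercal$. Since $d_k\circ d_{k-1} = \bm 0$ by the exact sequence property of the signed incidence matrices (Appendix~\ref{app:graphcalc}), the middle factor vanishes and hence $d_{k-1}^*\circ d_k^* = \bm 0$, as claimed.

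The only point that requires a second look is that $\bb A_k$ may be feature-dependent. This causes no difficulty: at each fixed value of the features every $\bb A_k$ is SPD, hence invertible, and the identities above are purely algebraic, so the conclusion $d_{k-1}^*\circ d_k^* = \bm 0$ holds pointwise in feature space --- equivalently, the dual complex is exact along every trajectory of the dynamics. I therefore do not anticipate any substantive obstacle; the content of the theorem is simply that the learnable metric is inserted by conjugation of $d_k^\intercal$ with the $\bb A_k$'s, which telescopes and so cannot destroy the complex property inherited from the graph topology.
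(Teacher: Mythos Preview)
Your proof is correct and follows exactly the same route as the paper: substitute $d_k^* = \bb A_k^{-1}d_k^\intercal\bb A_{k+1}$, cancel the interior $\bb A_k\bb A_k^{-1}$, and invoke $d_k d_{k-1}=0$ to conclude. The paper's proof is the single displayed line you wrote; your added remarks on the adjoint derivation and on feature-dependence are sound elaborations but not needed for the argument.
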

\begin{proof} 
    \qquad $
        d_{k-1}^* d_{k}^* = \bb{A}_{k-1}^{-1}d_{k-1}^\intercal\bb{A}_{k}\bb{A}_k^{-1}d_k^\intercal\bb{A}_{k+1} = \bb{A}_{k-1}^{-1}\lr{d_{k} d_{k-1}}^\intercal\bb{A}_{k+1} = 0. \qedhere
    $ 
\end{proof}


As will be shown in Section~\ref{sec:brackets}, by encoding graph attention into the $\bb{A}_k$, we may exploit the exact sequence property to obtain symmetric positive definite diffusion operators, as well as conduct the cancellations necessary to enforce degeneracy conditions necessary for metriplectic dynamics.

For a thorough review of DDEC, we direct readers to Appendix \ref{app:graphcalc} and \cite{trask2020enforcing}. For exterior calculus in topological data analysis see \cite{jiang2011statistical}, and an overview in the context of PDEs see \cite{bochev2006principles,arnold2018finite}.

\begin{figure}
    \centering
    \begin{tikzcd}[scale cd=1.5,sep=large]
    \Omega_0  \arrow[shift left]{r}{d_0}  & \arrow[shift left]{l}{d^\intercal_0} \Omega_1 \arrow[shift left]{r}{d_1}  & \arrow[shift left]{l}{d_1^\intercal} \Omega_2 \arrow[shift left]{r}{d_2} & \arrow[shift left]{l}{d_2^\intercal}  \cdots \arrow[shift left]{r}{d_{k-1}} & \arrow[shift left]{l}{d_{k-1}^\intercal} \Omega_k\\
    \Omega_0 \arrow[swap]{u}{\mathbf{A}_0} & \Omega_1 \arrow[swap]{u}{\mathbf{A}_1} \arrow{l}{d_0^*} & \Omega_2 \arrow[swap]{u}{\mathbf{A}_2} \arrow{l}{d_1^*} &  \arrow{l}{d_2^*} \cdots & \arrow{l}{d_{k-1}^*} \Omega_k \arrow[swap]{u}{\mathbf{A}_k}
    \end{tikzcd}
    \caption{A commutative diagram illustrating the relationship between the graph derivatives $d_k$, their $\ell^2$ adjoints $d_k^\intercal$, and the learnable adjoints $d_k^*$. These operators form a \textit{de Rham complex} due to the exact sequence property $d_{i+1}\circ d_i = d^\intercal_i \circ d^\intercal_{i+1}= d^*_i \circ d^*_{i+1} = 0$.  We show that the learnable $\mathbf{A}_k$ may encode attention mechanisms, without impacting the preservation of exact sequence structure.}
    \label{fig:commute-diag}
\end{figure}

\section{Structure-preserving bracket parameterizations}\label{sec:brackets}
We next summarize properties of the bracket dynamics introduced in Section~\ref{sec:theory} and displayed in Table~\ref{tab:brackets}, postponing details and rigorous discussion to Appendices \ref{app:adjgrad} and \ref{app:brackets}. Letting $\xx = (\qq,\pp)$ denote node-edge feature pairs, the following operators will be used to generate our brackets.
\[ \LL = \begin{pmatrix}0 & -d_0^* \\ d_0 & 0\end{pmatrix}, \quad \GG = \begin{pmatrix} \Delta_0 & 0 \\ 0 & \Delta_1 \end{pmatrix} = \begin{pmatrix} d_0^*d_0 & 0 \\ 0 & d_1^*d_1 + d_0d_0^*\end{pmatrix}, \quad \MM = \begin{pmatrix}0 & 0 \\ 0 & \bb{A}_1d_1^*d_1\bb{A}_1\end{pmatrix}. \]
As mentioned before, the inner products $\bb{A}_0,\bb{A}_1,\bb{A}_2$ on $\Omega_k$ which induce the dual derivatives $d_0^*,d_1^*$, are chosen in such a way that their combination generalizes a graph attention mechanism.  The precise details of this construction are given below, and its relationship to the standard GAT network from \cite{velivckovic2017graph} is shown in Appendix~\ref{app:gats}.  Notice that $\LL^* = -\LL$, while $\GG^*=\GG, \MM^*=\MM$ are positive semi-definite with respect to the block-diagonal inner product $\IPk{\cdot}{\cdot}{}$ defined by $\bb{A} = \mathrm{diag}\lr{\bb{A}_0,\bb{A}_1}$ (details are provided in Appendix~\ref{app:brackets}).  Therefore, $\LL$ generates purely reversible (Hamiltonian) dynamics and $\GG,\MM$ generate irreversible (dissipative) ones.  Additionally, note that state-dependence in $\LL,\MM,\GG$ enters only through the adjoint differential operators, meaning that any structural properties induced by the topology of the graph $\mathcal{G}$ (such as the exact sequence property mentioned in Theorem~\ref{thm:exactseq}) are automatically preserved.

\begin{remark}
    Strictly speaking, $\LL$ is guaranteed to be a truly Hamiltonian system only when $d_0^*$ is state-independent, since it may otherwise fail to satisfy Jacobi's identity.  On the other hand, energy conservation is always guaranteed due to the fact that $\LL$ is skew-adjoint.
\end{remark}

In addition to the bracket matrices $\LL,\MM,\GG$, it is necessary to have access to energy and entropy functions $E,S$ and their associated functional derivatives with respect to the inner product on $\Omega_0 \oplus \Omega_1$ defined by $\bb{A}$.  For the Hamiltonian, gradient, and double brackets, $E$ is chosen simply as the ``total kinetic energy''
\[ 
    E(\qq,\pp) = \frac{1}{2}\lr{\nn{\qq}^2 + \nn{\pp}^2} = \frac{1}{2}\sum_{i\in\mathcal{V}} \nn{\qq_i}^2 + \frac{1}{2}\sum_{\alpha\in\mathcal{E}} \nn{\pp_\alpha}^2,
\]
whose $\bb{A}$-gradient (computed in Appendix~\ref{app:brackets}) is just $\nabla E(\qq,\pp) = \begin{pmatrix}\bb{A}_0^{-1}\qq & \bb{A}_1^{-1}\pp\end{pmatrix}^\intercal$.  Since the metriplectic bracket uses parameterizations of $E,S$ which are more involved, discussion of this case is deferred to later in this Section.

\noindent \textbf{Attention as learnable inner product:}  Before describing the dynamics, it remains to discuss how the matrices $\bb{A}_i$, $0\leq i\leq 2$, are computed in practice, and how they relate to the idea of graph attention.  Recall that if $n_V>0$ denotes the nodal feature dimension, a graph attention mechanism takes the form $a(\qq_i,\qq_j) = f\lr{\tilde{a}_{ij}} / \sum_j f\lr{\tilde{a}_{ij}}$ for some differentiable pre-attention function $\tilde{a}:n_V\times n_V\to\mathbb{R}$ (e.g., for scaled dot product \cite{vaswani2017attention}) one typically represents $a(\qq_i,\qq_j)$ as a softmax, so that $f=\exp(\qq)$).  This suggests a decomposition $a(\qq_i,\qq_j) = \bb{A}_0^{-1}\bb{A}_1$ where $\bb{A}_0 = \lr{a_{0,ii}}$ is diagonal on nodes and $\bb{A}_1 = \lr{a_{1,ij}}$ is diagonal on edges, 
\[
    a_{0,ii} = \sum_{j\in\mathcal{N}(i)}  f\lr{\tilde{a}\lr{\qq_i,\qq_j}}, \qquad a_{1,ij} = f\lr{\tilde{a}\lr{\qq_i,\qq_j}}. 
\]




Treating the numerator and denominator of the standard attention mechanism separately in $\bb{A}_0,\bb{A}_1$ allows for a flexible and theoretically sound incorporation of graph attention directly into the adjoint differential operators on $\mathcal{G}$.  In particular, \textit{if $\bb{A}_1$ is symmetric} with respect to edge-orientation and $\pp$ is an edge feature which is antisymmetric, it follows that 
\[\lr{d_0^*\pp}_i = \lr{\bb{A}_0^{-1}d_0^\intercal\bb{A}_1\pp}_i = \sum_{j\in \mathcal{N}(i)} a\lr{\qq_i,\qq_j}\pp_{ji},\]
which is just graph attention combined with edge aggregation.  This makes it possible to give the following informal statement regarding graph attention networks which is explained and proven in Appendix~\ref{app:gats}.
\begin{remark}
    The GAT layer from \cite{velivckovic2017graph} is almost the forward Euler discretization of a metric heat equation.
\end{remark}
The ``almost'' appearing here has to do with the fact that (1) the attentional numerator $f\lr{\tilde{a}(\qq_i,\qq_j)}$ is generally asymmetric in $i,j$, and is therefore symmetrized by the divergence operator $d_0^\intercal$, (2) the activation function between layers is not included, and (3) learnable weight matrices $\bb{W}^k$ in GAT are set to the identity. 


\begin{remark}
    The interpretation of graph attention as a combination of learnable inner products  admits a direct generalization to higher-order cliques, which is discussed in Appendix~\ref{app:high-order-attn}.  
\end{remark}


\noindent \textbf{Hamiltonian case:}  A purely conservative system is generated by solving 
$\dot{\xx} = \LL(\xx)\nabla E(\xx)$, or
\[ 
    \begin{pmatrix}\dot{\qq} \\ \dot{\pp} \end{pmatrix} = \begin{pmatrix}0 & -d_0^* \\ d_0 & 0\end{pmatrix}\begin{pmatrix}\bb{A}_0^{-1} & 0 \\ 0 & \bb{A}_1^{-1}\end{pmatrix}\begin{pmatrix}\qq \\ \pp\end{pmatrix} = \begin{pmatrix}-d_0^*\bb{A}_1^{-1}\pp \\ d_0\bb{A}_0^{-1}\qq\end{pmatrix}.
\]
This is a noncanonical Hamiltonian system which generates a purely reversible flow.  In particular, it can be shown that 
\[\dot{E}(\xx) = \IPk{\dot{\xx}}{\nabla E(\xx)}{} = \IPk{\LL(\xx)\nabla E(\xx)}{\nabla E(\xx)}{} = -\IPk{\nabla E(\xx)}{\LL(\xx)\nabla E(\xx)}{} = 0,\] 
so that energy is conserved due to the skew-adjointness of $\LL$.


\noindent \textbf{Gradient case:}  On the opposite end of the spectrum are generalized gradient flows, which are totally dissipative. Consider solving $\dot{\xx} = -\GG(\xx)\nabla E(\xx)$, or
\[ \begin{pmatrix}\dot{\qq} \\ \dot{\pp} \end{pmatrix} = -\begin{pmatrix}\Delta_0 & 0 \\ 0 & \Delta_1\end{pmatrix}\begin{pmatrix}\bb{A}_0^{-1} & 0 \\ 0 & \bb{A}_1^{-1}\end{pmatrix}\begin{pmatrix}\qq \\ \pp\end{pmatrix} = -\begin{pmatrix}\Delta_0\bb{A}_0^{-1}\qq \\ \Delta_1\bb{A}_1^{-1}\pp\end{pmatrix}. \]
This system is a metric diffusion process on nodes and edges separately.  Moreover, it corresponds to a generalized gradient flow, since
\[ \dot{E}(\xx) = \IPk{\dot{\xx}}{\nabla E(\xx)}{} = -\IPk{\GG(\xx)\nabla E(\xx)}{\nabla E(\xx)}{} = -\nn{\nabla E(\xx)}_\GG^2 \leq 0,\]
due to the self-adjoint and positive semi-definite nature of $\GG$.

\begin{remark}
    The architecture in GRAND \cite{chamberlain2021grand} is almost a gradient flow, however the pre-attention mechanism lacks the requisite symmetry to formally induce a valid inner product.
\end{remark}

\noindent \textbf{Double bracket case:}  Another useful formulation for incomplete systems is the so-called double-bracket formalism.  Consider solving $\dot{\xx} = \LL\nabla E + \LL^2\nabla E$, or
\[ \begin{pmatrix}\dot{\qq} \\ \dot{\pp} \end{pmatrix} = \begin{pmatrix}0 & -d_0^* \\ d_0 & 0\end{pmatrix}\begin{pmatrix}\bb{A}_0^{-1}\qq \\ \bb{A}_1^{-1}\pp\end{pmatrix} + \begin{pmatrix}-d_0^*d_0 & 0 \\ 0 & -d_0d_0^*\end{pmatrix}\begin{pmatrix}\bb{A}_0^{-1}\qq \\ \bb{A}_1^{-1}\pp\end{pmatrix} = \begin{pmatrix}-\Delta_0\bb{A}_0^{-1}\qq-d_0^*\bb{A}_1^{-1}\pp \\ d_0\bb{A}_{0}^{-1}\qq - d_0d_0^*\bb{A}_1^{-1}\pp\end{pmatrix}. \]
This provides a dissipative relationship which preserves the Casimirs of the Poisson bracket generated by $\LL$, since $\LL\nabla C = \bm{0}$ implies $\LL^2\nabla C = \bm{0}$.  In particular, it follows that 
\[ \dot{E}(\xx) = \IPk{\dot{\xx}}{\nabla E(\xx)}{} = \IPk{\LL(\xx)\nabla{E}(\xx) + \LL^2(\xx)\nabla E(\xx)}{\nabla E(\xx)}{} = 0 -\nn{\LL(\xx)\nabla E(\xx)}^2 \leq 0, \]
since $\LL$ is skew-adjoint and therefore $\LL^2$ is self-adjoint. 

\begin{remark}
    It is interesting to note that the matrix $\LL$ is essentially a Dirac operator (square root of the Hodge Laplacian $\Delta = (d+d^*)^2$) restricted to cliques of degree at most 1.  However, here $\LL^2 = -\Delta$, so that $\LL$ is in some sense ``pure imaginary''.
\end{remark}

\noindent \textbf{Metriplectic case:}  Metriplectic systems are expressible as $\dot{\xx} = \LL\nabla E + \MM\nabla S$ where $E,S$ are energy resp. entropy functions which satisfy the degeneracy conditions $\LL\nabla S = \MM\nabla E = \bm{0}$. One way of setting this up in the present case is to define the energy and entropy functions
\begin{align*}
    E(\qq,\pp) &= f_E\lr{s(\qq)} + g_E\lr{s\lr{d_0d_0^\intercal\pp}}, \\
    S(\qq,\pp) &= g_S\lr{s\lr{d_1^\intercal d_1\pp}},
\end{align*}
where $s$ is sum aggregation over nodes resp. edges, $f_E:\mathbb{R}^{n_V}\to\mathbb{R}$ acts on node features, and $g_E,g_S:\mathbb{R}^{n_E}\to\mathbb{R}$ act on edge features.  Denoting the ``all ones'' vector (of variable length) by $\mathbf{1}$, it is shown in Appendix~\ref{app:brackets} that the $\bb{A}$-gradients of energy and entropy can be computed as
\[ \nabla E(\xx) = \begin{pmatrix}\bb{A}_0^{-1}\mathbf{1}\otimes\nabla f_E\lr{h(\qq)} \\ \bb{A}_1^{-1}d_0d_0^\intercal\mathbf{1}\otimes\nabla g_E\lr{h\lr{d_0d_0^\intercal\pp}}\end{pmatrix}, \qquad \nabla S(\xx) = \begin{pmatrix} 0 \\ \bb{A}_1^{-1}d_1^\intercal d_1\mathbf{1}\otimes\nabla g_S\lr{h\lr{d_1^\intercal d_1\pp}}\end{pmatrix}.\]
Similarly, it is shown in Appendix~\ref{app:brackets} that the degeneracy conditions $\LL\nabla S = \MM\nabla E = \bm{0}$ are satisfied by construction. Therefore, the governing dynamical system becomes
\begin{align*}
    \begin{pmatrix}\dot{\qq} \\ \dot{\pp} \end{pmatrix} &= \LL\nabla E + \MM\nabla S = \begin{pmatrix}-\bb{A}_0^{-1}d_0^\intercal d_0 d_0^\intercal\mathbf{1}\otimes\nabla g_E\lr{s\lr{d_0d_0^\intercal\pp}} \\ d_0\bb{A}_0^{-1}\mathbf{1}\otimes\nabla f_E\lr{s(\qq)} + \mathbf{A}_1d_1^*d_1d_1^\intercal d_1\mathbf{1}\otimes\nabla g_S\lr{s\lr{d_1^\intercal d_1\pp}}
\end{pmatrix}.
\end{align*}
With this, it follows that the system obeys a version of the first and second laws of thermodynamics, 
\begin{align*}
    \dot{E}(\xx) &= \IPk{\dot{\xx}}{\nabla E(\xx)} = \IPk{\LL\nabla E(\xx)}{\nabla E(\xx)} + \IPk{\MM\nabla S(\xx)}{\nabla E(\xx)} = \IPk{\nabla S(\xx)}{\MM\nabla E(\xx)} = 0, \\
    \dot{S}(\xx) &= \IPk{\dot{\xx}}{\nabla S(\xx)} = \IPk{\LL\nabla E(\xx)}{\nabla S(\xx)} + \IPk{\MM\nabla S(\xx)}{\nabla S(\xx)} = 0 + |\nabla S(\xx)|^2_M \geq 0.
\end{align*}

\begin{remark}
    As seen in the increased complexity of this formulation, enforcing the degeneracy conditions necessary for metriplectic structure is nontrivial.  This is accomplished presently via an application of the exact sequence property in Theorem~\ref{thm:exactseq}, which we derive in Appendix~\ref{app:brackets}.
\end{remark}

\begin{remark}
    It is worth mentioning that, similar to the other architectures presented in this Section, the metriplectic network proposed here exhibits linear $O(N)$ scaling in the graph size.  This is in notable contrast to  \cite{lee2021machine,zhang2022gfinns} which scale as $O(N^3)$.
\end{remark}

\section{Experiments}\label{sec:experiments}
This section reports results on experiments designed to probe the influence of bracket structure on trajectory prediction and nodal feature classification.  Additional experimental details can be found in Appendix~\ref{app:exp-details}.  In each Table, \HL{orange} indicates the best result by our models, and \HLs{blue} indicates the best of those compared. We consider both physical systems, where the role of structure preservation is explicit, as well as graph-analytic problems.

\subsection{Damped double pendulum}
As a first experiment, consider applying one of these architectures to reproduce the trajectory of a double pendulum with a damping force proportional to the angular momenta of the pendulum masses (see Appendix~\ref{app:dp-exp-details} for details).  

\begin{table}
  \centering
  \begin{tabular}{llll}
    \toprule
    Double pendulum & {\bf MAE $\qq$} & {\bf MAE $\pp$} & {\bf Total MAE} \\
    {\bf NODE} & \HLs{$0.0240\pm 0.015$} & \HLs{$0.0299\pm 0.0091$} & \HLs{$0.0269\pm 0.012$} \\
    {\bf NODE+AE} & $0.0532\pm 0.029$ & $0.0671\pm 0.043$ & $0.0602\pm 0.035$ \\
    {\bf Hamiltonian} & $0.00368\pm 0.0015$ & \HL{$0.00402\pm 0.0015$} & \HL{$0.00369\pm 0.0013$} \\
    {\bf Gradient} & $0.00762\pm 0.0023$ & $0.0339\pm 0.012$ & $0.0208\pm 0.0067$ \\
    {\bf Double Bracket} & $0.00584\pm 0.0013$ & $0.0183\pm 0.0071$ & $0.0120\pm 0.0037$ \\
    {\bf Metriplectic} & \HL{$0.00364\pm 0.00064$} & $0.00553\pm 0.00029$ & $0.00459\pm 0.00020$ \\
    \bottomrule
  \end{tabular}
  \vspace{5pt}
    \caption{Mean absolute errors (MAEs) of the network predictions in the damped double pendulum case, reported as avg$\pm$stdev over 5 runs.}
  \label{tab:dp}  
\end{table}

Since this system is metriplectic when expressed in position-momentum-entropy coordinates (c.f. \cite{romero2009thermodynamically}), it is useful to see if any of the brackets from Section~\ref{sec:brackets} can adequately capture these dynamics without an entropic variable. The results of applying the architectures of Section~\ref{sec:brackets} to reproduce a trajectory of five periods are displayed in Table~\ref{tab:dp}, alongside comparisons with a black-box NODE network and a latent NODE with feature encoder/decoder.  While each network is capable of producing a small mean absolute error, it is clear that the metriplectic and Hamiltonian networks produce the most accurate trajectories.  It is remarkable both that the Hamiltonian bracket does so well here and that the gradient bracket does so poorly, being that the damped double pendulum system is quite dissipative.  On the other hand, it is unlikely to be only the feature encoder/decoder leading to good performance here, as both the NODE and NODE+AE architectures perform worse on this task by about one order of magnitude. 

\subsection{MuJoCo Dynamics}
Next we test the proposed models on more complex physical systems that are generated by the Multi-Joint dynamics with Contact (MuJoCo) physics simulator \cite{todorov2012mujoco}. We 
consider the modified versions of Open AI Gym environments \cite{gruverdeconstructing}: HalfCheetah, Hopper, and Swimmer. 

We represent an object in an environment as a fully-connected graph, where a node corresponds to a body part of the object and, thus, the nodal feature $\qq_i$ corresponds to a position of a body part or an angle of a joint.\footnote{Results of an experiment with an alternative embedding (i.e., $\qq_i = (q_i, v_i)$) are reported in Appendix~\ref{app:mujoco-exp-additional}.} As the edge features, a pair of nodal velocities $\pp_\alpha = (v_{\text{src}(\alpha)},v_{\text{dst}(\alpha)})$ are provided, where $v_{\text{src}(\alpha)}$ and $v_{\text{dst}(\alpha)}$ denote velocities of the source and destination nodes connected to the edge. 


Since the MuJoCo environments contain an actor applying controls, additional control input is accounted for with an additive forcing term which is parameterized by a multi-layer perceptron and introduced into the bracket-based dynamics models.  See Appendix~\ref{app:mujoco-exp-details} for additional experimental details. The problem therefore consists of finding an optimal control MLP, and we evaluate the improvement which comes from representing the physics surrogate with bracket dynamics over NODE.

All models are trained via minimizing the MSE between the predicted positions $\tilde{\qq}$ and the ground truth positions $\qq$ and are tested on an unseen test set. 
Table~\ref{tab:mujoco_q} reports the errors of network predictions on the test set measured in the relative $\ell_2$ norm,  ${\|{\qq}-\tilde{\qq}\|_2}/{\|\qq\|_2 \|\tilde{\qq}\|_2}$. Similar to the double pendulum experiments, all models are able to produce accurate predictions with around or less than $10\%$ errors. While the gradient bracket makes little to no improvements over NODEs, the Hamiltonian, double, and metriplectic brackets produce more accurate predictions.  Interestingly, the Hamiltonian bracket performs the best in this case as well, meaning that any dissipation present is effectively compensated for by the autoencoder which transforms the features.
\begin{table}
  \centering
  \begin{tabular}{lllllll}
    \toprule
    Dataset & {\bf HalfCheetah} & {\bf Hopper} & {\bf Swimmer} & \\
    {\bf NODE+AE} & \HLs{0.106 $\pm$ 0.0011} & \HLs{0.0780 $\pm$ 0.0021} & \HLs{0.0297 $\pm$ 0.0036} \\
    {\bf Hamiltonian} & \HL{0.0566 $\pm$ 0.013} & \HL{0.0279 $\pm$ 0.0019} & \HL{0.0122 $\pm$ 0.00044} \\
    {\bf Gradient} & 0.105 $\pm$ 0.0076 & 0.0848 $\pm$ 0.0011 & 0.0290 $\pm$ 0.0011\\
    {\bf Double Bracket} & 0.0621 $\pm$ 0.0096 & 0.0297 $\pm$ 0.0048 & 0.0128 $\pm$ 0.00070 \\
    {\bf Metriplectic} & 0.105 $\pm$ 0.0091 & 0.0398 $\pm$ 0.0057 & 0.0179 $\pm$ 0.00059\\
    \bottomrule
  \end{tabular}
  \vspace{5pt}
    \caption{Relative error of network predictions for the MuJoCo environment on the test set, reported as avg$\pm$stdev over 4 runs.}
  \label{tab:mujoco_q}
\end{table}

\subsection{Node classification}
Moving beyond physics-based examples, it remains to see how bracket-based architectures perform on ``black-box'' node classification problems.  Table~\ref{tab:planetoid} and Table~\ref{tab:random} present results on common benchmark problems including the citation networks Cora \cite{mccallum2000automating}, Citeseer \cite{sen2008collective}, and Pubmed \cite{namata2012query}, as well as the coauthor graph, CoauthorCS \cite{shchur2018pitfalls}, and the Amazon co-purchasing graphs, Computer and Photo \cite{mcauley2015image}. For comparison, we report results on a standard GAT \cite{velivckovic2017graph}, a neural graph differential equation architecture (GDE) \cite{poli2019graph}, and the nonlinear GRAND architecture (GRAND-nl) from \cite{chamberlain2021grand} which is closest to ours.  Since our experimental setting is similar to that of \cite{chamberlain2021grand}, the numbers reported for GAT, GDE, and GRAND-nl are taken directly from this paper.  Note that, despite the similar $O(N)$ scaling in the metriplectic architecture, the high dimension of the node and edge features on the latter three datasets led to trainable $E,S$ functions which exhausted the memory on our available machines, and therefore results are not reported for these cases.  A full description of experimental details is provided in Appendix~\ref{app:node-exp-details}.

\begin{remark}
    To highlight the effect of bracket structure on network performance, only minimal modifications are employed during network training.  In particular, we do not include any additional regularization, positional encoding, graph rewiring, extraction of connected components, extra terms on the right-hand side, or early stopping.  While it is likely that better classification performance could be achieved with some of these modifications included, it becomes very difficult to isolate the effect of structure-preservation.  A complete list of tunable hyperparameters is given in Appendix~\ref{app:node-exp-details}.
\end{remark}


The results show different behavior produced by each bracket architecture.  It is empirically clear that there is some value in full or partial reversibility, since the Hamiltonian and double bracket architectures both perform better than the corresponding gradient architecture on datasets such as Computer and Photo.  Moreover, it appears that the partially reversible double bracket performs the best of the bracket architectures in every case, which is consistent with the idea that both reversible and irreversible dynamics are critical for capturing the behavior of general dynamical systems.  
Interestingly, the metriplectic bracket performs worse on these tasks by a large margin. We conjecture this architecture may be harder to train for larger problems despite its $O(N)$ complexity in the graph size, suggesting that more sophisticated training strategies may be required for large problems.



\begin{table}
  \centering
  \begin{tabular}{llll}
    \toprule
    Planetoid splits & {\bf CORA} & {\bf CiteSeer} & {\bf PubMed} \\
    {\bf GAT} & $82.8\pm 0.5$ & $69.5\pm 0.9$ & $79.0\pm 0.5$ \\
    {\bf GDE} & \HLs{$83.8\pm 0.5$} & \HLs{$72.5\pm 0.5$} & \HLs{$79.9\pm 0.3$} \\
    {\bf GRAND-nl} & $83.6\pm 0.5$ & $70.8\pm 1.1$ & $79.7\pm 0.3$ \\
    \midrule
    {\bf Hamiltonian} & $77.2\pm 0.7$ & $73.0\pm 1.2$ & $78.5\pm 0.3$ \\
    {\bf Gradient} & $79.9\pm 0.7$ & $71.8\pm 1.4$ & $78.6\pm 0.7$ \\
    {\bf Double Bracket} & \HL{$82.6\pm 0.9$} & \HL{$74.2\pm 1.4$} & \HL{$79.6\pm 0.6$} \\
    {\bf Metriplectic} & $57.4\pm 1.0$ & $60.5\pm 1.1$ & $69.8\pm 0.7$ \\
    \bottomrule
  \end{tabular}
  \vspace{5pt}
    \caption{Test accuracy and standard deviations (averaged over 20 randomly initialized runs) using the original Planetoid train-valid-test splits.  Comparisons use the numbers reported in \cite{chamberlain2021grand}.}
  \label{tab:planetoid}
\end{table}


\begin{table}
  \centering
  \resizebox{\columnwidth}{!}{
  \begin{tabular}{lllllll}
    \toprule
    Random splits & {\bf CORA} & {\bf CiteSeer} & {\bf PubMed} & {\bf Coauthor CS} & {\bf Computer} & {\bf Photo} \\
    {\bf GAT} & $81.8\pm 1.3$ & $71.4\pm 1.9$ & \HLs{$78.7\pm 2.3$} & $90.5\pm 0.6$ & $78.0\pm 19.0$ & $85.7\pm 20.3$ \\
    {\bf GDE} & $78.7\pm 2.2$ & \HLs{$71.8\pm 1.1$} & $73.9\pm 3.7$ & $91.6\pm 0.1$ & \HLs{$82.9\pm 0.6$} & $92.4\pm 2.0$ \\
    {\bf GRAND-nl} & \HLs{$82.3\pm 1.6$} & $70.9\pm 1.0$ & $77.5\pm 1.8$ & \HLs{$92.4\pm 0.3$} & $82.4\pm 2.1$ & \HLs{$92.4\pm 0.8$} \\
    \midrule
    {\bf Hamiltonian} & $76.2\pm 2.1$ & $72.2\pm 1.9$ & $76.8\pm 1.1$ & $92.0\pm 0.2$ & $84.0\pm 1.0$ & $91.8\pm 0.2$ \\
    {\bf Gradient} & $81.3\pm 1.2$ & $72.1\pm 1.7$ & $77.2\pm 2.1$ & $92.2\pm 0.3$ & $78.1\pm 1.2$ & $88.2\pm 0.6$ \\
    {\bf Double Bracket} & \HL{$83.0\pm 1.1$} & \HL{$74.2\pm 2.5$} & \HL{$78.2\pm 2.0$} & \HL{$92.5\pm 0.2$} & \HL{$84.8\pm 0.5$} & \HL{$92.4\pm 0.3$} \\
    {\bf Metriplectic} & $59.6\pm 2.0$ & $63.1\pm 2.4$ & $69.8\pm 2.1$ & - & - & -\\
    \bottomrule
  \end{tabular}}
  \vspace{5pt}
   \caption{Test accuracy and standard deviations averaged over 20 runs with random 80/10/10 train/val/test splits.  Comparisons use the numbers reported in \cite{chamberlain2021grand}.}
  \label{tab:random}
\end{table}

\section{Conclusion}

This work presents a unified theoretical framework for analysis and construction of graph attention networks. The exact sequence property of graph derivatives and coercivity of Hodge Laplacians which follow from the theory allow the construction of four structure-preserving brackets, which we use to evaluate the role of irreversibility in both data-driven physics simulators and graph analytics problems. In all contexts, the pure diffusion bracket performed most poorly, with mixed results between purely reversible and partially dissipative brackets. 

The linear scaling achieved by the metriplectic brackets has a potential major impact for data-driven physics modeling. Metriplectic systems emerge naturally when coarse-graining multiscale systems. With increasing interest in using ML to construct digital twins, fast data-driven surrogates for complex multi-physics acting over multiple scales will become crucial. In this setting the stability encoded by metriplectic dynamics translates to robust surrogates, with linear complexity suggesting the possibility of scaling up to millions of degrees of freedom.

\paragraph{Limitations:} All analysis holds under the assumption of modified attention mechanisms which allow interpretation of GAT networks as diffusion processes; readers should take care that the analysis is for a non-standard attention. Secondly, for all brackets we did not introduce empirical modifications (e.g.  regularization, forcing, etc) to optimize performance so that we could study the role of (ir)reversibility in isolation. With this in mind, one may be able to add ``tricks'' to e.g. obtain a diffusion architecture which outperforms those presented here.  Finally, note that the use of a feature autoencoder in the bracket architectures means that structure is enforced in the transformed space.  This allows for applicability to more general systems, and can be easily removed when appropriate features are known.

\paragraph{Broader impacts:} The work performed here is strictly foundational mathematics and is intended to improve the performance of GNNs in the context of graph analysis and data-driven physics modeling. Subsequent application of the theory may have societal impact, but the current work anticipated to improve the performance of machine learning in graph settings only at a foundational level.



{\small
\bibliographystyle{unsrt}
\bibliography{ref.bib}
}
\clearpage
\appendix
\nomenclature{\( \left\{\cdot,\cdot\right\} \)}{Poisson (reversible) bracket on functions with generator $\LL^\intercal = -\LL$}
\nomenclature{\( \ib{\cdot}{\cdot} \)}{Degenerate (irreversible) metric bracket on functions with generator $\MM^\intercal = \MM$}
\nomenclature{\( \,\{\rb{\cdot}{\cdot}\} \)}{(Irreversible) double bracket on functions with generator $\LL^2$}
\nomenclature{\(\mathcal{G},\mathcal{V},\mathcal{E}\)}{Oriented graph, set of nodes, set of edges}
\nomenclature{\(\mathcal{G}_k,\Omega_k\)}{Set of $k$-cliques, vector space of real-valued functions on $k$-cliques}
\nomenclature{\( \mathcal{N}(i), \overline{\mathcal{N}}(i) \)}{Neighbors of node $i\in\mathcal{V}$, neighbors of node $i\in\mathcal{V}$ including $i$}
\nomenclature{\(\IP{\cdot}{\cdot}_k\)}{Euclidean $\ell^2$ inner product on $k$-cliques}
\nomenclature{\(\IPk{\cdot}{\cdot}_k\)}{Learnable metric inner product on $k$-cliques with matrix representation $\bb{A}_k$}
\nomenclature{\(d, d_k\)}{Exterior derivative operator on functions, exterior derivative operator on $k$-cliques}
\nomenclature{\(\delta f, \nabla f\)}{Adjoint of $df$ with respect to $\IP{\cdot}{\cdot}$, adjoint of $df$ with respect to $\IPk{\cdot}{\cdot}$}
\nomenclature{\(\dot{f}\)}{Derivative of $f$ with respect to time}
\nomenclature{\(d_k^\intercal,d_k^* \)}{Adjoint of $d_k$ with respect to $\IP{\cdot}{\cdot}_k$, adjoint of $d_k$ with respect to $\IPk{\cdot}{\cdot}_k$}
\nomenclature{\(\Delta_k\)}{Hodge Laplacian $d_kd_k^* + d_k^*d_k$}
\nomenclature{\(\,[S]\)}{Indicator function of the statement $S$}
\nomenclature{\(\delta_{ij}\)}{Kronecker delta }

\printnomenclature

\section{Mathematical foundations}\label{app:math}
This Appendix provides the following: (1) an introduction to the ideas of graph exterior calculus, \ref{app:graphcalc}, and bracket-based dynamical systems, \ref{app:variational}, necessary for understanding the results in the body, (2) additional explanation regarding adjoints with respect to generic inner products and associated computations, \ref{app:adjgrad}, (3) a mechanism for higher-order attention expressed in terms of learnable inner products, \ref{app:high-order-attn}, (4) a discussion of GATs in the context of exterior calculus, \ref{app:gats}, and (5) proofs which are deferred from Section~\ref{sec:brackets}, \ref{app:brackets}.

\subsection{Graph exterior calculus}\label{app:graphcalc} 
Here some basic notions from the graph exterior calculus are recalled.  More details can be found in, e.g., \cite{trask2020enforcing, knill2013dirac, jiang2010statistical}.  

\begin{figure}[!h]
    \centering
    \includegraphics[width=0.33\textwidth]{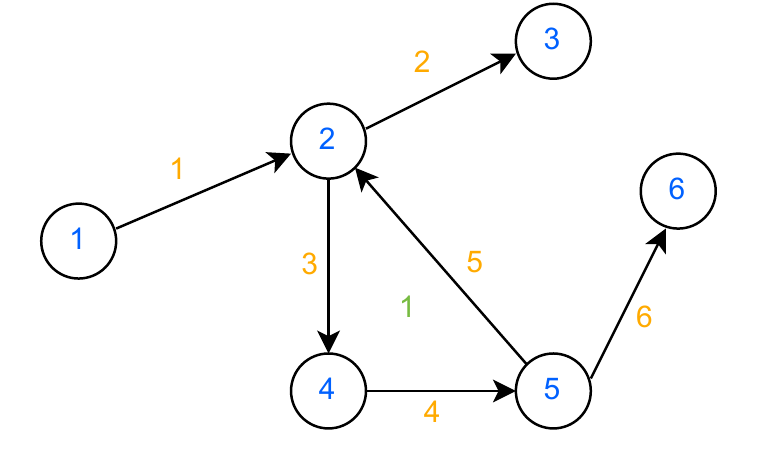}
    \caption{A toy graph with six 0-cliques (nodes), six 1-cliques (edges), and one 2-clique.}
    \label{fig:graphex}
\end{figure}

As mentioned in Section~\ref{sec:theory},  an oriented graph $\mathcal{G}=\{\mathcal{V},\mathcal{E}\}$ carries sets of \textit{$k$-cliques}, denoted $\mathcal{G}_k$, which are collections of ordered subgraphs generated by $(k+1)$ nodes.  For example, the graph in Figure~\ref{fig:graphex} contains six 0-cliques (nodes), six 1-cliques (edges), and one 2-clique.  A notion of combinatorial derivative is then given by the \textit{signed incidence matrices} $d_k:\Omega_k\to\Omega_{k+1}$, operating on the space $\Omega_k$ of differentiable functions on $k$-cliques, whose entries $\lr{d_k}_{ij}$ are 1 or -1 if the $j^\mathrm{th}$ $k$-clique is incident on the $i^\mathrm{th}$ $(k+1)$-clique, and zero otherwise.  For the example in Figure~\ref{fig:graphex}, these are:
\[d_0 = 
\begin{pmatrix}
-1 & 1 & 0 & 0 & 0 & 0 \\
0 & -1 & 1 & 0 & 0 & 0 \\
0 & -1 & 0 & 1 & 0 & 0 \\
0 & 0 & 0 & -1 & 1 & 0 \\
0 & 1 & 0 & 0 & -1 & 0 \\
0 & 0 & 0 & 0 & -1 & 1
\end{pmatrix}, \qquad 
d_1 = \begin{pmatrix}
0 & 0 & 1 & 1 & 1 & 0
\end{pmatrix}.\]

\begin{remark}
    While the one-hop neighborhood of node $i$ in $\mathcal{G}$, denoted $\mathcal{N}(i)$, does not include node $i$ itself, many machine learning algorithms employ the extended neighborhood $\overline{\mathcal{N}}(i) = \mathcal{N}(i)\cup \{i\}$.  Since this is equivalent to considering the one-hop neighborhood of node $i$ in the self-looped graph $\overline{\mathcal{G}}$, this modification does not change the analysis of functions on graphs.
\end{remark}

It can be shown that the action of these matrices can be conveniently expressed in terms of totally antisymmetric functions $f\in\Omega_k$, via the expression
\[ \lr{d_k f}\lr{i_0,i_1,...,i_{k+1}} = \sum_{j=0}^{k+1}(-1)^j f\lr{i_0,...,\widehat{i_j},...,i_{k+1}}, \]
where $\lr{i_0,...,i_{k+1}}$ denotes a $(k+1)$-clique of vertices $v\in \mathcal{V}$.  As convenient shorthand, we often write subscripts, e.g., $\lr{d_kf}_{i_0i_1...i_{k+1}}$, instead of explicit function arguments.  Using $[S]$ to denote the indicator function of the statement $S$, it is straightforward to check that $d\circ d = 0$,
\begin{align*}
    \lr{d_{k}d_{k-1}f}_{i_0,...,i_{k+1}} &= \sum_{j=0}^{k+1}\lr{-1}^j\lr{d_{k-1}f}_{i_0,...,\widehat{i_j},...,i_{k+1}} \\
    &= \sum_{j=0}^{k+1}\sum_{l=0}^{k+1}[l<j]\lr{-1}^{j+l}f_{i_0...\widehat{i_l}...\widehat{i_j}...i_{k+1}} \\
    &\qquad+ \sum_{j=0}^{k+1}\sum_{l=0}^{k+1}[l>j]\lr{-1}^{j+l-1}f_{i_0...\widehat{i_j}...\widehat{i_l}...i_{k+1}} \\
    &= \sum_{l<j}\lr{-1}^{j+l}f_{i_0...\widehat{i_l}...\widehat{i_j}...i_{k+1}} \\
    &\qquad- \sum_{l<j}\lr{-1}^{j+l}f_{i_0...\widehat{i_l}...\widehat{i_j}...i_{k+1}} = 0,
\end{align*}
since $(-1)^{j+l-1} = (-1)^{-1}(-1)^{j+l} = (-1)(-1)^{j+l}$ and the final sum follows from swapping the labels $j,l$.  This shows that the $k$-cliques on $\mathcal{G}$ form a \textit{de Rham complex} \cite{bochev2006principles}: a collection of function spaces $\Omega_k$ equipped with mappings $d_k$ satisfying $\mathrm{Im}\,d_{k-1} \subset \mathrm{Ker}\,d_k$ as shown in Figure~\ref{fig:ses}.
\begin{figure}[!h]
\centering
    \begin{tikzcd}
    \Omega_0  \arrow[shift left]{r}{d_0}  &
    \Omega_1 \arrow[shift left]{r}{d_1}  &
    \Omega_2 \arrow[shift left]{r}{d_2} &
    \cdots \arrow[shift left]{r}{d_{K-1}} &
    \Omega_K 
    \end{tikzcd}
    \caption{Illustration of the de Rham complex on $\mathcal{G}$ induced by the combinatorial derivatives, where $K>0$ is the maximal clique degree.}
    \label{fig:ses}
\end{figure}
When $K=3$, this is precisely the graph calculus analogue of the de Rham complex on $\mathbb{R}^3$ formed by the Sobolev spaces $H^1, H(\mathrm{curl}), H(\mathrm{div}), L^2$ which satisfies $\mathrm{div}\circ\mathrm{curl}=\mathrm{curl}\circ\mathrm{grad}=0$.

While the construction of the graph derivatives and their associated de Rham complex is purely topological, building elliptic differential operators such as the Laplacian relies on a dual de Rham complex, which is specified by an inner product on $\Omega_k$.  In the case of $\ell^2$, this leads to dual derivatives which are the matrix transposes of the $d_k$ having the following explicit expression.  
\begin{proposition}\label{prop:l2adjoint}
    The dual derivatives $d_k^\intercal:\Omega_{k+1}\to\Omega_k$ adjoint to $d_k$ through the $\ell^2$ inner product are given by 
    \[\lr{d_k^\intercal f}\lr{i_0,i_1,...,i_k} = \frac{1}{k+2}\sum_{i_{k+1}}\sum_{j=0}^{k+1}f\lr{i_0,...,\left[i_j,...,i_{k+1}\right]},\]
    where $\left[i_j,...,i_{k+1}\right] = i_{k+1},i_j,...,i_k$ indicates a cyclic permutation forward by one index.
\end{proposition}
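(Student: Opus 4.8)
The plan is to use the defining property of the adjoint directly: $d_k^\intercal$ is the unique operator for which $\IP{d_kf}{g}_{k+1} = \IP{f}{d_k^\intercal g}_k$ holds for all $f\in\Omega_k$ and $g\in\Omega_{k+1}$. Here $\IP{\cdot}{\cdot}_m$ is the $\ell^2$ pairing obtained by summing the pointwise product over ordered representatives of the $m$-cliques of $\mathcal{G}$, with the conventional $1/(m+1)!$ normalization that accounts for the $(m+1)!$ orderings of each clique. Since this pairing is non-degenerate, it suffices to expand $\IP{d_kf}{g}_{k+1}$, rewrite it as a single sum over $(k+1)$-cliques $(i_0,\dots,i_k)$ in which $f(i_0,\dots,i_k)$ is multiplied by some explicit quantity built from $g$, and then read off that quantity as $\lr{d_k^\intercal g}(i_0,\dots,i_k)$.

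Concretely, I would first insert the alternating-sum formula $\lr{d_kf}(i_0,\dots,i_{k+1}) = \sum_{j=0}^{k+1}(-1)^j f(i_0,\dots,\widehat{i_j},\dots,i_{k+1})$ into the pairing, obtaining a double sum over ordered $(k+2)$-tuples and over the deleted slot $j$. For each fixed $j$ I would reindex the $(k+2)$-tuple by its surviving $(k+1)$-subtuple, renamed $(i_0,\dots,i_k)$, together with the deleted vertex, renamed $i_{k+1}$: the original tuple is then recovered by re-inserting $i_{k+1}$ into slot $j$, i.e. it is $(i_0,\dots,i_{j-1},i_{k+1},i_j,\dots,i_k)$, which is exactly the cyclically rearranged tuple $(i_0,\dots,[i_j,\dots,i_{k+1}])$ of the statement. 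This is a bijection for each $j$, so after exchanging the order of summation the pairing becomes $\sum_{(i_0,\dots,i_k)} f(i_0,\dots,i_k)$ times a $g$-dependent sum over $i_{k+1}$ and $j$, up to normalization constants. Comparing the $1/(k+2)!$ weight of $\IP{\cdot}{\cdot}_{k+1}$ with the $1/(k+1)!$ weight wanted for $\IP{\cdot}{\cdot}_k$ produces the prefactor $1/(k+2)$, and the $k+2$ slots $j$ are precisely what the inner sum of the statement ranges over; non-degeneracy of $\IP{\cdot}{\cdot}_k$ then identifies this $g$-dependent sum with $\lr{d_k^\intercal g}(i_0,\dots,i_k)$.

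The main — and essentially only — obstacle is the sign and normalization bookkeeping. One must commit to a single convention for how the $\ell^2$ product is normalized on oriented cliques and for how an element of $\Omega_m$ transforms under a reordering of its arguments, and then check that the alternating sign $(-1)^j$ coming from $d_k$, the sign of the length-$(k+2-j)$ cyclic permutation used to place $i_{k+1}$ into slot $j$, and the combinatorial $1/(k+2)$ factor all fit together to yield exactly the displayed formula uniformly in $j$. Once the convention is pinned down this is a short verification; the remainder of the argument is routine relabeling and uses nothing beyond non-degeneracy of the inner product and the alternating-sum description of $d_k$ already recalled above. As a sanity check I would also specialize to $k=0$, where the formula should reduce to the familiar graph divergence $\lr{d_0^\intercal f}(i) = \sum_{j} f(i,j)$ summed over neighbors, and to $k=1$ on the toy triangle of Figure~\ref{fig:graphex}, recovering the transpose of the incidence matrix $d_1$.
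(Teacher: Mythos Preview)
Your proposal is correct and follows essentially the same route as the paper: expand $\IP{d_kf}{g}_{k+1}$ via the alternating-sum formula for $d_k$, reindex each term by the surviving $(k+1)$-tuple together with the deleted vertex (which is exactly the cyclic re-insertion you describe), and compare the $1/(k+2)!$ and $1/(k+1)!$ normalizations to produce the $1/(k+2)$ prefactor before reading off $d_k^\intercal g$ by non-degeneracy. The paper's computation is terser about the sign bookkeeping you flag---in fact it carries an explicit $(-1)^j$ through the proof that is absent from the displayed statement---so your caution there is well placed; your $k=0$ sanity check should likewise land on $\tfrac{1}{2}\sum_j\bigl(g_{ji}-g_{ij}\bigr)$ rather than $\sum_j f(i,j)$, which only agree once antisymmetry of $g$ is invoked.
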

\begin{proof}
    This is a direct calculation using the representation of $d_k$ in terms of antisymmetric functions.  More precisely, let an empty sum $\Sigma$ denote summation over all unspecified indices.  Then, for any $g\in\Omega_k$,
    \begin{align*}
    \IP{d_kf}{g} &= \sum_{i_0...i_{k+1}\in\mathcal{G}_{k+1}}\lr{d_kf}_{i_0...i_{k+1}}g_{i_0...,i_{k+1}} \\
    &= \frac{1}{(k+2)!}\sum\lr{\sum_{j=0}^{k+1} (-1)^j f_{i_0...\widehat{i_j}...i_{k+1}}}g_{i_0...i_{k+1}} \\
    &= \frac{1}{(k+2)!} \sum f_{i_0...i_k}\lr{\sum_{i_{k+1}}\sum_{j=0}^{k+1}(-1)^jg_{i_0...\left[i_{j}...i_{k+1}\right]}} \\
    &= \frac{1}{k+2}\sum_{i_0i_1...i_k\in\mathcal{G}_{k}} f_{i_0...i_k}\lr{\sum_{i_{k+1}}\sum_{j=0}^{k+1}(-1)^jg_{i_0...\left[i_{j}...i_{k+1}\right]}} \\
    &= \sum_{i_0i_1...i_k\in\mathcal{G}_{k}} f_{i_0...i_k}\lr{d_k^\intercal g}_{i_0...i_k} = \IP{f}{d_k^\intercal g},
    \end{align*}
which establishes the result.
\end{proof}
Proposition~\ref{prop:l2adjoint} is perhaps best illustrated with a concrete example.  Consider the graph gradient, defined for edge $\alpha=(i,j)$ as $\lr{d_0f}_\alpha = \lr{d_0f}_{ij} = f_j-f_i$.  Notice that this object is antisymmetric with respect to edge orientation, and measures the outflow of information from source to target nodes.  From this, it is easy to compute the $\ell^2$-adjoint of $d_0$, known as the graph divergence, via 
\begin{align*}
    \IP{d_0f}{g} &= \sum_{\alpha=(i,j)} \lr{f_j-f_i}g_{ij} = \sum_{i}\sum_{\substack{(j>i)\in\mathcal{N}(i)}} g_{ij}f_j-g_{ij}f_i \\
    &= \frac{1}{2}\sum_i\sum_{j\in\mathcal{N}(i)}f_i\lr{g_{ji}-g_{ij}} = \IP{f}{d_0^\intercal g}, 
\end{align*}
where we have re-indexed under the double sum, used that $i\in\mathcal{N}(j)$ if and only if $j\in\mathcal{N}(i)$, and used that there are no self-edges in $\mathcal{E}$.  Therefore, it follows that the graph divergence at node $i$ is given by
\[
    \lr{d_0^\intercal g}_i = \sum_{\alpha\ni i}g_{-\alpha} - g_\alpha = \frac{1}{2}\sum_{j\in\mathcal{N}(i)} g_{ji}-g_{ij},
\]
which reduces to the common form $\lr{d_0^\intercal g}_i = -\sum_j g_{ij}$ if and only if the edge feature $g_{ij}$ is antisymmetric.

\begin{remark}
    When the inner product on edges $\mathcal{E}$ is not $L^2$, but defined in terms of a nonnegative, orientation-invariant, and (edge-wise) diagonal weight matrix $\bb{W}=\lr{w_{ij}}$, a similar computation shows that the divergence becomes 
    \[\lr{d_0^*f}_i = \frac{1}{2}\sum_{j\in\mathcal{N}(i)} w_{ij}\lr{f_{ji}-f_{ij}}.\]
    The more general case of arbitrary inner products on $\mathcal{V},\mathcal{E}$ is discussed in section~\ref{app:adjgrad}.
\end{remark}

The differential operators $d_k^\intercal$ induce a dual de Rham complex since $d_{k-1}^\intercal d_k^\intercal = \lr{d_kd_{k-1}}^\intercal = 0$, which enables both the construction of Laplace operators on $k$-cliques, $\Delta_k = d_{k}^\intercal d_k + d_{k-1}d_{k-1}^\intercal$, as well as the celebrated Hodge decomposition theorem, stated below.  For a proof, see, e.g., \cite[Theorem 3.3]{trask2020enforcing}.

\begin{theorem}\label{thm:hodgedecomp}
    (Hodge Decomposition Theorem) The de Rham complexes formed by $d_k,d_k^\intercal$ induce the following direct sum decomposition of the function space $\Omega_k$,
    \[\Omega_k = \mathrm{Im}\,d_{k-1} \oplus \mathrm{Ker}\,\Delta_k \oplus \mathrm{Im}\,d_k^\intercal.\]
\end{theorem}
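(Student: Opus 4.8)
The plan is to run the classical finite-dimensional Hodge argument, which uses only three ingredients: the complex property $d_k\circ d_{k-1}=0$ together with its $\ell^2$-adjoint $d_{k-1}^\intercal\circ d_k^\intercal=0$; the self-adjointness and positive semi-definiteness of the two summands of $\Delta_k = d_k^\intercal d_k + d_{k-1}d_{k-1}^\intercal$; and finite-dimensionality of $\Omega_k$. The first step I would carry out is the ``harmonic'' characterization $\mathrm{Ker}\,\Delta_k = \mathrm{Ker}\,d_k\cap\mathrm{Ker}\,d_{k-1}^\intercal$: for $\omega\in\Omega_k$, moving adjoints across the inner product gives $\IP{\Delta_k\omega}{\omega} = \norm{d_k\omega}^2 + \norm{d_{k-1}^\intercal\omega}^2$, so $\Delta_k\omega=0$ forces both $d_k\omega=0$ and $d_{k-1}^\intercal\omega=0$, while the reverse inclusion is immediate from the definition of $\Delta_k$.

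Next I would verify that the three candidate summands are mutually orthogonal, which makes their sum automatically direct. For $u=d_{k-1}\alpha$ and $v=d_k^\intercal\beta$ one has $\IP{u}{v} = \IP{d_kd_{k-1}\alpha}{\beta}=0$ by the complex property; and for any harmonic $h\in\mathrm{Ker}\,\Delta_k$, the characterization above gives $\IP{d_{k-1}\alpha}{h} = \IP{\alpha}{d_{k-1}^\intercal h}=0$ and $\IP{d_k^\intercal\beta}{h}=\IP{\beta}{d_kh}=0$. Hence $\mathrm{Im}\,d_{k-1}$, $\mathrm{Ker}\,\Delta_k$, and $\mathrm{Im}\,d_k^\intercal$ are pairwise orthogonal.

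Finally I would show the orthogonal sum exhausts $\Omega_k$, which is the one step beyond bookkeeping. Since $\Delta_k$ is self-adjoint on the finite-dimensional space $\Omega_k$, the standard identity $(\mathrm{Im}\,\Delta_k)^\perp = \mathrm{Ker}\,\Delta_k^* = \mathrm{Ker}\,\Delta_k$ yields $\Omega_k = \mathrm{Ker}\,\Delta_k\oplus\mathrm{Im}\,\Delta_k$. Writing $\Delta_k\omega = d_{k-1}(d_{k-1}^\intercal\omega) + d_k^\intercal(d_k\omega)$ shows $\mathrm{Im}\,\Delta_k\subseteq\mathrm{Im}\,d_{k-1}+\mathrm{Im}\,d_k^\intercal$, while the orthogonality of $\mathrm{Ker}\,\Delta_k$ to both images gives $\mathrm{Im}\,d_{k-1}+\mathrm{Im}\,d_k^\intercal\subseteq(\mathrm{Ker}\,\Delta_k)^\perp = \mathrm{Im}\,\Delta_k$; hence $\mathrm{Im}\,\Delta_k = \mathrm{Im}\,d_{k-1}\oplus\mathrm{Im}\,d_k^\intercal$, and substituting into the self-adjoint splitting gives the claim. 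I expect this exhaustion step to be the main obstacle: the earlier steps are formal manipulations with adjoints, but bridging the algebraic complex structure and the geometric direct-sum statement relies on the spectral fact that a self-adjoint operator's image is the orthogonal complement of its kernel. The same argument works verbatim with the $\ell^2$ adjoints $d_k^\intercal$ replaced by adjoints with respect to any SPD inner products $\bb{A}_k$, since only self-adjointness and the complex property (Theorem~\ref{thm:exactseq}) are used.
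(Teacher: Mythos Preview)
Your argument is correct and is exactly the standard finite-dimensional Hodge decomposition proof. The paper does not actually supply its own proof of this theorem---it simply defers to \cite[Theorem 3.3]{trask2020enforcing}---so there is no in-paper argument to compare against; your write-up is the classical one that such a reference would contain.
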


In the case where the dual derivatives $d_k^*$ are adjoint with respect to a learnable inner product which does not depend on graph features, the conclusion of Theorem~\ref{thm:hodgedecomp} continues to hold, leading to an interesting well-posedness result proved in \cite{trask2020enforcing} involving nonlinear perturbations of a Hodge-Laplace problem in mixed form.

\begin{theorem}\label{thm:HLmixed}
    (\cite[Theorem 3.6]{trask2020enforcing})  Suppose $\bb{f}_k\in\Omega_k$, and $g\lr{\bb{x}; \xi}$ is a neural network with parameters $\xi$ which is Lipschitz continuous and satisfies $g(\bm{0})=\bm{0}$.  Then, the problem 
    \begin{align*}
        \bb{w}_{k-1} &= d_{k-1}^*\bb{u}_k + \epsilon g\lr{d_{k-1}^*\bb{u}_k; \xi}, \\
        \bb{f}_k &= d_{k-1}\bb{w}_{k-1}+d_k^*d_k\bb{u}_k,
    \end{align*}
    has a unique solution on $\Omega_k / \mathrm{Ker}\,\Delta_k$.
\end{theorem}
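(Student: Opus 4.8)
The plan is to eliminate $\bb{w}_{k-1}$, reduce the coupled system to a single nonlinear equation for $\bb{u}_k$ on the quotient space $\Omega_k/\mathrm{Ker}\,\Delta_k$, and then solve that equation by the Banach fixed-point theorem. (Throughout, all inner products, norms, and adjoints are taken with respect to the feature-independent learnable inner products $\bb{A}_k$.) Substituting the first equation into the second and using $\Delta_k = d_k^* d_k + d_{k-1}d_{k-1}^*$, the pair $(\bb{w}_{k-1},\bb{u}_k)$ solves the system if and only if $\bb{w}_{k-1} = d_{k-1}^*\bb{u}_k + \epsilon\, g\lr{d_{k-1}^*\bb{u}_k;\xi}$ and
\[ \Delta_k\bb{u}_k + \epsilon\, d_{k-1}\, g\lr{d_{k-1}^*\bb{u}_k;\xi} = \bb{f}_k . \]
Thus $\bb{w}_{k-1}$ is explicitly determined by $\bb{u}_k$, and in fact depends only on the class of $\bb{u}_k$ in $\Omega_k/\mathrm{Ker}\,\Delta_k$: if $\Delta_k\bb{z} = \bm{0}$ then $0 = \IPk{\Delta_k\bb{z}}{\bb{z}} = \norm{d_{k-1}^*\bb{z}}^2 + \norm{d_k\bb{z}}^2$, so $\mathrm{Ker}\,\Delta_k\subseteq\mathrm{Ker}\,d_{k-1}^*$, and adding an element of $\mathrm{Ker}\,\Delta_k$ to $\bb{u}_k$ changes neither $d_{k-1}^*\bb{u}_k$ nor (by the same computation) the displayed equation. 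Hence it suffices to produce a unique solution $\bb{u}_k$ of the displayed equation modulo $\mathrm{Ker}\,\Delta_k$.

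Next I would use the Hodge decomposition (Theorem~\ref{thm:hodgedecomp}, which continues to hold here since the learnable inner products are assumed feature-independent) to identify $X := \Omega_k/\mathrm{Ker}\,\Delta_k$ with $\lr{\mathrm{Ker}\,\Delta_k}^\perp = \mathrm{Im}\,d_{k-1}\oplus\mathrm{Im}\,d_k^*$, on which $\Delta_k$ is self-adjoint, positive definite, and therefore boundedly invertible; let $\lambda_{\min}>0$ denote its smallest eigenvalue on $X$. Projecting the displayed equation onto $X$ — the component of $\bb{f}_k$ in $\mathrm{Ker}\,\Delta_k$ is necessarily zero for solvability, which is precisely what working on the quotient means, and the nonlinear term $d_{k-1}g\lr{d_{k-1}^*\bb{u}_k;\xi}$ already lies in $\mathrm{Im}\,d_{k-1}\subseteq X$ — and applying $\Delta_k^{-1}$, the problem becomes the fixed-point equation $\bb{u}_k = T\bb{u}_k$ on $X$, where
\[ T\bb{u}_k := \Delta_k^{-1}\bb{f}_k - \epsilon\,\Delta_k^{-1} d_{k-1}\, g\lr{d_{k-1}^*\bb{u}_k;\xi} . \]

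Finally I would verify that $T$ contracts. Since $g(\bm{0}) = \bm{0}$ and $g$ is $L_g$-Lipschitz, for $\bb{u},\bb{v}\in X$ one has $\norm{g\lr{d_{k-1}^*\bb{u};\xi} - g\lr{d_{k-1}^*\bb{v};\xi}} \le L_g\,\norm{d_{k-1}^*}\,\norm{\bb{u}-\bb{v}}$, whence
\[ \norm{T\bb{u}-T\bb{v}} \le \frac{\epsilon\,L_g\,\norm{d_{k-1}}\,\norm{d_{k-1}^*}}{\lambda_{\min}}\,\norm{\bb{u}-\bb{v}} . \]
For $\epsilon$ below the threshold $\lambda_{\min}/\lr{L_g\,\norm{d_{k-1}}\,\norm{d_{k-1}^*}}$ this constant is strictly less than $1$, so the Banach fixed-point theorem yields a unique $\bb{u}_k\in X$, and then a unique corresponding $\bb{w}_{k-1}$; together these give the asserted unique solution on $\Omega_k/\mathrm{Ker}\,\Delta_k$. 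I expect the main obstacle to be the functional-analytic bookkeeping rather than any single estimate: one must lean on the Hodge decomposition both to give meaning to $\Delta_k^{-1}$ on the quotient and to see that $\bb{f}_k$ and the nonlinear term live in $\lr{\mathrm{Ker}\,\Delta_k}^\perp$, and one must accept that uniqueness genuinely requires $\epsilon$ small compared with the Lipschitz constant of $g$ and the spectral gap of $\Delta_k$ — without such a smallness condition (or a monotonicity hypothesis on $g$) the perturbation $d_{k-1}g\lr{d_{k-1}^*\,\cdot\,;\xi}$ can destroy injectivity, so this hypothesis is not merely a convenience.
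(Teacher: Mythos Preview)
The paper does not supply its own proof of this theorem: it is stated with a citation to \cite[Theorem 3.6]{trask2020enforcing} and used as a quoted result, so there is nothing in the present paper to compare your argument against.

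That said, your proposal is a sound and standard way to establish such a perturbation result. The reduction to a single equation $\Delta_k\bb{u}_k + \epsilon\, d_{k-1} g(d_{k-1}^*\bb{u}_k;\xi) = \bb{f}_k$, the use of the Hodge decomposition to identify $\Omega_k/\mathrm{Ker}\,\Delta_k$ with $(\mathrm{Ker}\,\Delta_k)^\perp$ and to invert $\Delta_k$ there, and the Banach fixed-point argument are all correct as written. Two small remarks. First, your argument genuinely needs $\epsilon$ below the threshold $\lambda_{\min}/(L_g\norm{d_{k-1}}\norm{d_{k-1}^*})$, a hypothesis the theorem statement does not display; you are right to flag this, and without it (or a monotonicity assumption on $g$) uniqueness can fail. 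Second, the hypothesis $g(\bm{0})=\bm{0}$ plays no role in your contraction estimate---in the finite-dimensional graph setting the Banach fixed-point theorem applies on all of $X$ regardless---so if the original proof in \cite{trask2020enforcing} uses it essentially, that would indicate a somewhat different argument (e.g., an a priori bound or a monotone-operator approach) rather than a pure contraction.
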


This result shows that initial-value problems involving the Hodge-Laplacian are stable under nonlinear perturbations.  Moreover, when $\Delta_0$ is the Hodge Laplacian on nodes, there is a useful connection between $\Delta_0$ and the degree and adjacency matrices of the graph $\mathcal{G}$.  Recall that the degree matrix $\bb{D} = \lr{d_{ij}}$ is diagonal with entries $d_{ii} = \sum_{j\in\mathcal{N}(i)} 1$, while the adjacency matrix $\bb{A}=\lr{a_{ij}}$ satisfies $a_{ij}=1$ when $j\in\mathcal{N}(i)$ and $a_{ij}=0$ otherwise.


\begin{proposition}
    The combinatorial Laplacian on $\mathcal{V}$, denoted $\Delta_0 = d_0^\intercal d_0$, satisfies $\Delta_0 = \bb{D}-\bb{A}$.
\end{proposition}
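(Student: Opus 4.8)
The plan is to compute the action of $\Delta_0 = d_0^\intercal d_0$ on an arbitrary node function $f\in\Omega_0$ and then read off its matrix representation. First I would invoke the two explicit formulas already derived in this subsection: the graph gradient $\lr{d_0 f}_{ij} = f_j - f_i$ on an oriented edge $(i,j)$, and the graph divergence $\lr{d_0^\intercal g}_i = \tfrac12\sum_{j\in\mathcal{N}(i)}\lr{g_{ji}-g_{ij}}$, valid for any edge function $g$.

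Next, I would observe that $g = d_0 f$ is antisymmetric with respect to edge orientation, so that $g_{ji}-g_{ij} = \lr{f_i-f_j} - \lr{f_j-f_i} = 2\lr{f_i-f_j}$. Substituting into the divergence formula, the factor $1/2$ cancels and one obtains
\[
    \lr{\Delta_0 f}_i = \lr{d_0^\intercal d_0 f}_i = \sum_{j\in\mathcal{N}(i)}\lr{f_i-f_j}.
\]

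Finally, I would split this sum into its two pieces. The first contributes $\lr{\sum_{j\in\mathcal{N}(i)}1}f_i = d_{ii} f_i$, which is exactly the action of the degree matrix $\bb{D}$ at node $i$; the second contributes $-\sum_{j\in\mathcal{N}(i)}f_j = -\sum_j a_{ij} f_j$, which is the action of $-\bb{A}$ at node $i$. Since $f$ was arbitrary, this gives $\Delta_0 = \bb{D}-\bb{A}$. An equivalent route is to compute $\lr{d_0^\intercal d_0}_{ik}$ entrywise straight from the incidence matrix $d_0$: the diagonal entry equals the number of edges incident on $i$, namely $d_{ii}$, while an off-diagonal entry $(i,k)$ is nonzero only when $i$ and $k$ share an edge, in which case the two relevant incidence entries carry opposite signs and contribute $-1 = -a_{ik}$. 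The only point requiring care is the orientation/double-counting bookkeeping and the factor of $1/2$ in the divergence, but this is handled immediately once $d_0 f$ is recognized as orientation-antisymmetric, so I do not anticipate any real obstacle.
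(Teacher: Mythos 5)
Your proposal is correct, but your primary argument takes a different route from the paper. You compute the pointwise action $\lr{\Delta_0 f}_i$ on an arbitrary $f\in\Omega_0$ by feeding the graph gradient $\lr{d_0 f}_{ij}=f_j-f_i$ into the previously derived divergence formula $\lr{d_0^\intercal g}_i=\tfrac12\sum_{j\in\mathcal{N}(i)}\lr{g_{ji}-g_{ij}}$, using the orientation-antisymmetry of $d_0 f$ to cancel the $1/2$ and land on $\sum_{j\in\mathcal{N}(i)}\lr{f_i-f_j}=\lr{\bb{D}f-\bb{A}f}_i$; this is sound, and its only dependency is the divergence formula (with its orientation/double-counting bookkeeping), which you handle correctly. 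The paper instead proves the identity entrywise: $\lr{d_0^\intercal d_0}_{ij}=\sum_{\alpha\in\mathcal{E}}\lr{d_0}_{\alpha i}\lr{d_0}_{\alpha j}$, which equals the degree $d_{ii}$ on the diagonal and $-a_{ij}$ off the diagonal since a shared edge contributes incidence entries of opposite sign — this is exactly the "equivalent route" you sketch at the end. The entrywise computation is self-contained, needing only the definition of the signed incidence matrix, whereas your main route reuses the divergence formula already established in the appendix and makes the operator's action on functions (and hence the diffusion interpretation) more transparent; either is acceptable here.
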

\begin{proof}
    Notice that 
    \begin{align*}
        \lr{d_0^\intercal d_0}_{ij} &= \sum_{\alpha\in\mathcal{E}}\lr{d_0}_{\alpha i}\lr{d_0}_{\alpha j} = [i=j]\sum_{\alpha\in\mathcal{E}} \lr{\lr{d_0}_{\alpha i}}^2 + [i\neq j] \sum_{\alpha=(i,j)}\lr{d_0}_{\alpha i}\lr{d_0}_{\alpha j} \\
        &= [i=j]\,d_{ii} - [i\neq j]\, a_{ij} = d_{ij} - a_{ij} = \bb{D}-\bb{A},  
    \end{align*}
    where we used that $\bb{D}$ is diagonal, $\bb{A}$ is diagonal-free, and $\lr{d_0}_{\alpha i}\lr{d_0}_{\alpha j} = -1$ whenever $\alpha=(i,j)$ is an edge in $\mathcal{E}$, since one of $\lr{d_0}_{\alpha i},\lr{d_0}_{\alpha j}$ is 1 and the other is -1.
\end{proof}



\subsection{Bracket-based dynamical systems}\label{app:variational}
Here we mention some additional facts regarding bracket-based dynamical systems.  More information can be found in, e.g., \cite{morrison2009thoughts,bloch1996the,holm1998the,oettinger2014irreversible}.

As mentioned before, the goal of bracket formalisms is to extend the Hamiltonian formalism to systems with dissipation.  To understand where this originates, consider an action functional $\mathcal{A}(q) = \int_a^b L\lr{q,\dot q}dt$ on the space of curves $q(t)$, defined in terms of a Lagrangian $L$ on the tangent bundle to some Riemannian manifold.  Using $L_q,L_{\dot{q}}$ to denote partial derivatives with respect to the subscripted variable, it is straightforward to show that, for any compactly supported variation $\delta q$ of $q$, we have
\[d\mathcal{A}(q)\delta q = \int_a^b dL\lr{q,\dot{q}}\delta q = \int_a^b L_q \delta q + L_{\dot{q}}\delta\dot{q} = \int_a^b\lr{L_q-\partial_t L_{\dot{q}}}\delta q,\]
where the final equality follows from integration-by-parts and the fact that variational and temporal derivatives commute in this setting.  It follows that $\mathcal{A}$ is stationary (i.e., $d\mathcal{A}=0$) for all variations only when $\partial_t L_{\dot{q}}= L_q$.  These are the classical Euler-Lagrange equations which are (under some regularity conditions) transformed to Hamiltonian form via a Legendre transformation,
\begin{equation*}
    H(q, p) = \sup_{\dot{q}} \lr{\IP{p}{\dot{q}}-L(q,\dot{q})},
\end{equation*}
which defines the Hamiltonian functional $\mathcal{H}$ on phase space, and yields the conjugate momentum vector $p = L_{\dot{q}}$.  Substituting $L = \IP{p}{\dot{q}}-H$ into the previously derived Euler-Lagrange equations leads immediately to Hamilton's equations for the state $\xx=\begin{pmatrix}q &p\end{pmatrix}^\intercal$,
\[ \dot{\xx} = \begin{pmatrix} \dot{q} \\ \dot{p}\end{pmatrix} = \begin{pmatrix}0 & 1 \\ -1 & 0\end{pmatrix} \begin{pmatrix} H_{q} \\ H_{p}\end{pmatrix} = \bb{J}\nabla \mathcal{H},\]
which are an equivalent description of the system in question in terms of the anti-involution $\bb{J}$ and the functional gradient $\nabla\mathcal{H}$.

An advantage of the Hamiltonian description is its compact bracket-based formulation, $\dot{\xx} = \bb{J}\nabla\mathcal{H} = \rb{\xx}{\mathcal{H}}$, which requires only the specification of an antisymmetric Poisson bracket $\{\cdot,\cdot\}$ and a Hamiltonian functional $\mathcal{H}$.  Besides admitting a direct generalization to more complex systems such as Korteweg-de Vries or incompressible Euler, where the involved bracket is state-dependent, this formulation makes the energy conservation property of the system obvious.  In particular, it follows immediately from the antisymmetry of $\{\cdot,\cdot\}$ that 
\[\dot{\mathcal{H}} = \IP{\dot{\xx}}{\nabla\mathcal{H}} = \rb{\mathcal{H}}{\mathcal{H}} = 0,\]
while it is more difficult to see immediately that the Euler-Lagrange system obeys this same property.  The utility and ease-of-use of bracket formulations is what inspired their extension to other systems of interest which do not conserve energy.  On the opposite end of this spectrum are the generalized gradient flows, which can be written in terms of a bracket which is purely dissipative.  An example of this is heat flow $\dot{q} = \Delta q := -\ib{q}{\mathcal{D}}$, which is the $L^2$-gradient flow of Dirichlet energy $\mathcal{D}(q) = (1/2)\int_a^b \nn{q'}^2 dt$ (c.f. Appendix~\ref{app:adjgrad}).  In this case, the functional gradient $\nabla\mathcal{D} = -\partial_{tt}$ is the negative of the usual Laplace operator, so that the positive-definite bracket $\ib{\cdot}{\cdot}$ is generated by the identity operator $M=\mathrm{id}$.  It is interesting to note that the same system could be expressed using the usual kinetic energy $\mathcal{E}(q) = (1/2)\int_a^b\nn{q}^2 dt$ instead, provided that the corresponding bracket is generated by $M=-\Delta$.  This is a good illustration of the flexibility afforded by bracket-based dynamical systems.

Since physical systems are not always purely reversible or irreversible, other useful bracket formalisms have been introduced to capture dynamics which are a mix of these two. The double bracket $\dot{\xx} = \rb{\xx}{E} + \left\{\rb{\xx}{E}\right\} = \LL\nabla E + \LL^2\nabla E$ is a nice extension of the Hamiltonian bracket particularly because it is Casimir preserving, i.e., those quantities which annihilate the Poisson bracket $\{\cdot,\cdot\}$ also annihilate the double bracket.  This allows for the incorporation of dissipative phenomena into idealized Hamiltonian systems without affecting desirable properties such as mass conservation, and has been used to model, e.g., the Landau-Lifschitz dissipative mechanism, as well as a mechanism for fluids where energy decays but entrophy is preserved (see \cite{bloch1996the} for additional discussion).  A complementary but alternative point of view is taken by the metriplectic bracket formalism, which requires that any dissipation generated by the system is accounted for within the system itself through the generation of entropy.  In the metriplectic formalism, the equations of motion are $\dot{\xx} = \rb{\xx}{E}+\ib{\xx}{S} = \LL\nabla E + \MM\nabla S$, along with important and nontrivial compatibility conditions $\LL\nabla S = \MM\nabla E = \bb{0}$, also called degeneracy conditions, which ensure that the reversible and irreversible mechanisms do not cross-contaminate.  As shown in the body of the paper, this guarantees that metriplectic systems obey a form of the first and second thermodynamical laws.  Practically, the degeneracy conditions enforce a good deal of structure on the operators $\LL,\MM$ which has been exploited to generate surrogate models \cite{lee2021machine,gruber2023energetically,zhang2022gfinns}.  In particular, it can be shown that the reversible and irreversible brackets can be parameterized in terms of a totally antisymmetric order-3 tensor $\bm{\xi} = \lr{\xi_{ijk}}$ and a partially symmetric order-4 tensor $\bm{\zeta} = \lr{\zeta_{ik,jl}}$ through the relations (Einstein summation assumed)
\begin{align*}
    \rb{A}{B} &= \xi^{ijk}\,\partial_i A\,\partial_j B\,\partial_k S, \\
    \ib{A}{B} &= \zeta^{ik,jl}\,\partial_i A\,\partial_k E\,\partial_j B\,\partial_l E.
\end{align*}
Moreover, using the symmetries of $\bm{\zeta}$, it follows (see \cite{oettinger2014irreversible}) that this tensor decomposes into the product $\zeta_{ik,jl} = \Lambda_{ik}^m D_{mn}\Lambda_{jl}^n$ of a symmetric matrix $\bb{D}$ and an order-3 tensor $\Lambda$ which is skew-symmetric in its lower indices. Thus, by applying symmetry relationships, it is easy to check that $\rb{\cdot}{S}=\ib{\cdot}{E} = \bm{0}$.  

\begin{remark}
In \cite{lee2021machine}, trainable 4- and 3- tensors $\xi^{ijk}$ and $\zeta^{ik,jl}$ are constructed to achieve the degeneracy conditions, mandating a costly $O(N^3)$ computational complexity. In the current work we overcome this by instead achieving degeneracy through the exact sequence property.
\end{remark}

\subsection{Adjoints and gradients}\label{app:adjgrad}
Beyond the basic calculus operations discussed in section~\ref{app:graphcalc} which depend only on graph topology, the network architectures discussed in the body also make extensive use of learnable metric information coming from the nodal features.  To understand this, it is useful to recall some information about general inner products and the derivative operators that they induce.  First, recall that the usual $\ell^2$ inner product on node features $\bb{a},\bb{b}\in\mathbb{R}^{\nn{\mathcal{V}}}$, $\IP{\bb{a}}{\bb{b}} = \bb{a}^\intercal\bb{b}$, is (in this context) a discretization of the standard $L^2$ inner product $\int_{\mathcal{V}}ab\,d\mu$ which aggregates information from across the vertex set $\mathcal{V}$.  While this construction is clearly dependent only on the graph structure (i.e., topology), \textit{any} symmetric positive definite (SPD) matrix $\bb{A}_0:\Omega_0\to\Omega_0$ also defines an inner product on functions $\bb{a}\in\Omega_0$ through the equality
\[\IPk{\bb{a}}{\bb{b}}_0 \coloneqq \IP{\bb{a}}{\bb{A}_0\bb{b}} = \bb{a}^\intercal\bb{A}_0\bb{b},\]
which gives a different way of measuring the distance between $\bb{a}$ and $\bb{b}$.  The advantage of this construction is that $\bb{A}_0$ can be chosen in a way that incorporates geometric information which implicitly regularizes systems obeying a variational principle.  This follows from the following intuitive fact: the Taylor series of a function does not change, regardless of the inner product on its domain.  For any differentiable function(al) $E:\Omega_0\to\mathbb{R}$, using $d$ to denote the exterior derivative, this means that the following equality holds
\[dE(\bb{a})\bb{b} := \lim_{\varepsilon\to 0}\frac{E(\bb{a}+\epsilon\bb{b})-E(\bb{a})}{\varepsilon} = \IP{\delta E(\bb{a})}{\bb{b}} = \IPk{\nabla E(\bb{a})}{\bb{b}}_0,\]
where $\delta E$ denotes the $\ell^2$-gradient of $E$ and $\nabla E$ denotes its $\bb{A}_0$-gradient, i.e., its gradient with respect to the derivative operator induced by the inner product involving $\bb{A}_0$.  From this, it is clear that $\delta E = \bb{A}_0\nabla E$, so that the $\bb{A}_0$-gradient is just an anisotropic rescaling of the $\ell^2$ version.  The advantage of working with $\nabla$ over $\delta$ in the present case of graph networks is that $\bb{A}_0$ can be \textit{learned} based on the features of the graph.  This means that learnable feature information (i.e., graph attention) can be directly incorporated into the differential operators governing our bracket-based dynamical systems by construction.

The prototypical example of where this technique is useful is seen in the gradient flow of Dirichlet energy.  Recall that the Dirichlet energy of a differentiable function $u:\mathbb{R}^n\to\mathbb{R}$ is given by $\mathcal{D}(u) = (1/2)\int\nn{\nabla u}^2 d\mu$, where $\nabla$ now denotes the usual $\ell^2$-gradient of the function $u$ on $\mathbb{R}^n$.  Using integration-by-parts, it is easy to see that $d\mathcal{D}(u)v = -\int v\Delta u$ for any test function $v$ with compact support, implying that the $L^2$-gradient of $\mathcal{D}$ is $-\Delta$ and $\dot{u}=\Delta u$ is the $L^2$-gradient flow of Dirichlet energy: the motion which decreases the quantity $\mathcal{D}(u)$ the fastest \textit{as measured by the $L^2$ norm}.  It can be shown that high-frequency modes decay quickly under this flow, while low-frequency information takes much longer to dissipate.  On the other hand, we could alternatively run the $H^1$-gradient flow of $\mathcal{D}$, which is motion of fastest decrease with respect to the $H^1$ inner product $\IPk{u}{v} = \int\IP{\nabla u}{\nabla v}d\mu$.  This motion is prescribed in terms of the $H^1$-gradient of $\mathcal{D}$, which by the discussion above with $\bb{A}_0=-\Delta$ is easily seen to be the identity.  This means that the $H^1$-gradient flow is given by $\dot{u} = -u$, which retains the minimizers of the $L^2$-flow but with quite different intermediate character, since it functions by simultaneously flattening all spatial frequencies.  The process of preconditioning a gradient flow by matching derivatives is known as a Sobolev gradient method (c.f.  \cite{renka2006simple}), and these methods often exhibit faster convergence and better numerical behavior than their $L^2$ counterparts \cite{yu2021repulsive}.



Returning to the graph setting, our learnable matrices $\bb{A}_k$ on $k$-cliques will lead to inner products $\IPk{\cdot}{\cdot}_k$ on functions in $\Omega_k$, and this will induce dual derivatives as described in Appendix~\ref{app:graphcalc}.  However, in this case we will not have $d_0^*=d_0^\intercal$, but instead the expression given by the following result:

\begin{proposition}\label{prop:adjointcomp}
    The $\bb{A}_k$-adjoints $d_k^*$ to the graph derivative operators $d_k$ are given by $d_k^* = \bb{A}_{k}^{-1}d_k^\intercal\bb{A}_{k+1}$.  Similarly, for any linear operator $\bb{B}:\Omega_k\to\Omega_k$, the $\bb{A}_k$-adjoint $\bb{B}^* = \bb{A}_k^{-1}\bb{B}^\intercal\bb{A}$.
\end{proposition}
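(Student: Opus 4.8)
The plan is to unwind the definition of the adjoint with respect to the weighted inner products and then read off the formula by nondegeneracy. Recall that $d_k^*:\Omega_{k+1}\to\Omega_k$ is, by definition, the unique linear map satisfying $\IPk{d_k f}{g}_{k+1} = \IPk{f}{d_k^* g}_k$ for all $f\in\Omega_k$ and $g\in\Omega_{k+1}$; existence and uniqueness of such a map are guaranteed because each $\bb{A}_j$ is symmetric positive definite, hence invertible, and the pairings $\IPk{\cdot}{\cdot}_j$ are nondegenerate.

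First I would expand both sides in terms of the matrix representations $\IPk{\bb{v}}{\bb{w}}_j = \bb{v}^\intercal\bb{A}_j\bb{w}$. The left-hand side becomes $\lr{d_k f}^\intercal\bb{A}_{k+1}\,g = f^\intercal\lr{d_k^\intercal\bb{A}_{k+1}}g$, while the right-hand side becomes $f^\intercal\bb{A}_k\,d_k^* g$. Since these agree for every $f$ and every $g$, the associated matrices must coincide, i.e. $d_k^\intercal\bb{A}_{k+1} = \bb{A}_k\,d_k^*$; multiplying on the left by $\bb{A}_k^{-1}$ yields $d_k^* = \bb{A}_k^{-1}d_k^\intercal\bb{A}_{k+1}$, as claimed. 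As a sanity check, this reduces to $d_k^* = d_k^\intercal$ when every $\bb{A}_j$ is the identity, consistent with Proposition~\ref{prop:l2adjoint}. The operator statement follows from the identical argument on a single space: for $\bb{B}:\Omega_k\to\Omega_k$, the $\bb{A}_k$-adjoint $\bb{B}^*$ satisfies $f^\intercal\bb{B}^\intercal\bb{A}_k\,g = f^\intercal\bb{A}_k\bb{B}^*\,g$ for all $f,g$, whence $\bb{B}^* = \bb{A}_k^{-1}\bb{B}^\intercal\bb{A}_k$.

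Finally, since the matrices $\bb{A}_k$ may depend on the graph features, I would remark that the computation above is performed at a fixed state $\xx$, i.e. for fixed SPD matrices $\bb{A}_k(\xx)$, so the identity holds pointwise in the state variable; this is exactly what is needed for the bracket constructions of Section~\ref{sec:brackets}, where state-dependence enters only through these adjoint operators. There is no substantive obstacle in this proof — the only points requiring care are invoking invertibility of $\bb{A}_k$ (immediate from positive-definiteness) and nondegeneracy of the pairing in order to pass from equality of bilinear forms to equality of the underlying matrices.
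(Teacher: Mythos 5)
Your proof is correct and follows essentially the same route as the paper's: both unwind the weighted inner products into $\ell^2$ form, read off $d_k^\intercal\bb{A}_{k+1} = \bb{A}_k d_k^*$ from the resulting equality of bilinear forms, and invert $\bb{A}_k$, with the identical argument handling $\bb{B}^*$. The extra remarks on nondegeneracy and pointwise validity in the state variable are fine but not a departure from the paper's argument.
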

\begin{proof}
    Let $\qq,\pp$ denote vectors of $k$-clique resp. $(k+1)$-clique features. It follows that
    \[ \IPk{d_{k}\qq}{\pp}_{k+1} = \IP{d_{k}\qq}{\bb{A}_{k+1}\pp} = \IP{\qq}{d_{k}^\intercal\bb{A}_{k+1}\pp} = \IP{\qq}{\bb{A}_{k}d_{k}^*\pp} = \IPk{\qq}{d_{k}^*\pp}_{k}. \]
    Therefore, we see that $d_{k}^\intercal\bb{A}_{k+1} = \bb{A}_{k}d_{k}^*$ and hence $d_{k}^* = \bb{A}_{k}^{-1}d_{k}^\intercal\bb{A}_{k+1}$.  Similarly, if $\qq,\qq'$ denote vectors of $k$-clique features, it follows from the $\ell^2$-self-adjointness of $\bb{A}_k$ that 
    \[\IPk{\bb{q}}{\bb{Bq}'}_k = \IP{\bb{q}}{\bb{A}_k\bb{Bq}'} = \IP{\bb{B}^\intercal\bb{A}_k\bb{q}}{\bb{q}'} = \IP{\bb{A}_k^{-1}\bb{B}^\intercal\bb{A}_k\bb{q}}{\bb{A}_k\bb{q}'} = \IPk{\bb{B}^*\bb{q}}{\bb{q}'}_k,\]
    establishing that $\bb{B}^*=\bb{A}_k^{-1}\bb{B}^\intercal\bb{A}_k$. 
\end{proof}

\begin{remark}
    It is common in graph theory to encounter the case where $a_i>0$ are nodal weights and $w_{ij}>0$ are edge weights.  These are nothing more than the (diagonal) inner products $\bb{A}_0,\bb{A}_1$ in disguise, and so Proposition~\ref{prop:adjointcomp} immediately yields the familiar formula for the induced divergence 
    \[\lr{d_0^*\pp}_{i} = \frac{1}{a_i}\sum_{j:(i,j)\in\mathcal{E}} w_{ij}\lr{\pp_{ji}-\pp_{ij}}.\]
\end{remark}


Note that all of these notions extend to the case of block inner products in the obvious way.  For example, if $\qq,\pp$ are node resp. edge features, it follows that $\bb{A}=\mathrm{diag}\lr{\bb{A}_0,\bb{A}_1}$ is an inner product on node-edge feature pairs, and the adjoints of node-edge operators with respect to $\bb{A}$ are computed as according to Proposition~\ref{prop:adjointcomp}.

\begin{remark}
    For convenience, this work restricts to diagonal matrices $\bb{A}_0,\bb{A}_1$.  However, note that a matrix which is diagonal in ``edge space'' $\mathcal{G}_2$ is generally \textit{full} in a nodal representation.  This is because an (undirected) edge is uniquely specified by the two nodes which it connects, meaning that a purely local quantity on edges is necessarily nonlocal on nodes.
\end{remark}

\subsection{Higher order attention}\label{app:high-order-attn}
As mentioned in the body, when $f=\exp$ and $\tilde{a}(\qq_i,\qq_j) = (1/d)\IP{\bb{W}_K\qq_i}{\bb{W}_Q\qq_j}$, defining the learnable inner products $\bb{A}_0 = \lr{a_{0,ii}},\bb{A}_1 = \lr{a_{1,ij}}$ as
\[
    a_{0,ii} = \sum_{j\in\mathcal{N}(i)}  f\lr{\tilde{a}\lr{\qq_i,\qq_j}}, \qquad a_{1,ij} = f\lr{\tilde{a}\lr{\qq_i,\qq_j}},
\]
recovers scaled dot product attention as $\bb{A}_0^{-1}\bb{A}_1$.  
\begin{remark}
    Technically, $\bb{A}_1$ is an inner product only with respect to a predefined ordering of the edges $\alpha=(i,j)$, since we do not require $\bb{A}_1$ be orientation-invariant.  On the other hand, it is both unnecessary and distracting to enforce symmetry on $\bb{A}_1$ in this context, since any necessary symmetrization will be handled automatically by the differential operator $d_0^*$.
\end{remark}
Similarly, other common attention mechanisms are produced by modifying the pre-attention function $\tilde{a}$.  While $\bb{A}_0^{-1}\bb{A}_1$ never appears in the brackets of Section~\ref{sec:brackets}, letting $\alpha=(i,j)$ denote a global edge with endpoints $i,j$, it is straightforward to calculate the divergence of an antisymmetric edge feature $\pp$ at node $i$,
\begin{align*}
    \lr{d_0^*\pp}_i &=\lr{\bb{A}_0^{-1}d_0^\intercal\bb{A}_1 \pp}_i = a_{0,ii}^{-1}\sum_\alpha \lr{d_0^\intercal}_{i\alpha}\lr{\bb{A}_1 \pp}_\alpha \\
    &= a_{0,ii}^{-1}\sum_{\alpha\ni i} \lr{\bb{A}_1\pp}_{-\alpha} - \lr{\bb{A}_1\pp}_{\alpha} = -\sum_{j\in\mathcal{N}(i)}\frac{a_{1,ji}+a_{i,ij}}{a_{0,ii}}\pp_{ij}.
\end{align*}
This shows that $b(\qq_i,\qq_j) = \lr{a_{1,ij}+a_{1,ji}}/a_{0,ii}$ appears under the divergence in $d_0^*=\bb{A}_0^{-1}d_0^\intercal\bb{A}_1$, which is the usual graph attention up to a symmetrization in $\bb{A}_1$.
\begin{remark}
    While $\bb{A}_1$ is diagonal on global edges $\alpha=(i,j)$, it appears sparse nondiagonal in its nodal representation.  Similarly, any diagonal extension $\bb{A}_2$ to 2-cliques will appear as a sparse 3-tensor $\bb{A}_2 = \lr{a_{2,ijk}}$ when specified by its nodes.
\end{remark}
This inspires a straightforward extension of graph attention to higher-order cliques.  In particular, denote by $K>0$ the highest degree of clique under consideration, and define $\bb{A}_{K-1} = \lr{a_{K-1,i_1i_2...i_{K}}}$ by 
\[a_{K-1,i_1i_2...i_{K}} = f\lr{\bb{W}\lr{\qq_{i_1},\qq_{i_2},...,\qq_{i_{K}}}},\]
where $\bb{W}\in\mathbb{R}^{\otimes_{K} n_V}$ is a learnable $K$-tensor. Then, for any $0\leq k\leq K-2$ define $\bb{A}_k = \lr{a_{k,i_1i_2...i_{k+1}}}$ by
\[a_{k,i_1i_2...i_{k+1}} = \sum_{i_{K},...,i_{K-k-1}}a_{K-1, i_1i_2...i_{K}}. \]
This recovers the matrices $\bb{A}_0,\bb{A}_1$ from before when $K=2$, and otherwise extends the same core idea to higher-order cliques.  It's attractive that the attention mechanism captured by $d_k^*$ remains asymmetric, meaning that the attention of any one node to the others in a $k$-clique need not equal the attention of the others to that particular node.
\begin{remark}
    A more obvious but less expressive option for higher-order attention is to let 
    \[a_{k,i_1i_2...i_{k+1}} = \frac{a_{K-1,i_1i_2...i_K}}{\sum_{i_K,...,i_{K-k-1}}a_{K-1,i_1i_2...i_K}},\]
    for any $0\leq k\leq K-2$.  However, application of the combinatorial codifferential $d_{k-1}^\intercal$ appearing in $d_{k-1}^*$ will necessarily symmetrize this quantity, so that the asymmetry behind the attention mechanism is lost in this formulation.
\end{remark}

To illustrate how this works more concretely, consider the extension $K=3$ to 2-cliques, and let $\mathcal{N}(i,j) = \mathcal{N}(i)\cap\mathcal{N}(j)$.  We have the tensors $\bb{A}_2 = \lr{a_{2,ijk}}$, $\bb{A}_1 = \lr{a_{1,ij}}$, and $\bb{A}_0 = \lr{a_{0,i}}$ defined by
\begin{equation*}
    a_{2,ijk} = f\lr{\bb{W}\lr{\qq_i,\qq_j,\qq_k}}, \quad a_{1,ij} = \sum_{k\in\mathcal{N}\lr{i,j}} a_{2,ijk}, \quad a_{0,i} = \sum_{j\in\mathcal{N}(i)}\sum_{k\in\mathcal{N}(i,j)} a_{2,ijk}.
\end{equation*}
This provides a way for (features on) 3-node subgraphs of $\mathcal{G}$ to attend to each other, and can be similarly built-in to the differential operator $d_1^* = \bb{A}_1^{-1}d_0^\intercal\bb{A}_2$.

\subsection{Exterior calculus interpretation of GATs}\label{app:gats}
Let $\mathcal{N}(i)$ denote the one-hop neighborhood of node $i$, and let $\overline{\mathcal{N}}(i) = \mathcal{N}(i)\cup\{i\}$.  Recall the standard (single-headed) graph attention network (GAT) described in \cite{velivckovic2017graph}, described layer-wise as 
\begin{equation}\label{eq:gat}
    \qq_i^{k+1} = \sig\lr{\sum_{j\in\overline{\mathcal{N}}(i)} a\lr{\qq_i^k,\qq_j^k}\bb{W}^k\qq_j^k},
\end{equation}
where $\sig$ is an element-wise nonlinearity, $\bb{W}^k$ is a layer-dependent embedding matrix, and $a\lr{\qq_i,\qq_j}$ denotes the attention node $i$ pays to node $j$.  Traditionally, the attention mechanism is computed through 
\[ a\lr{\qq_i,\qq_j} = \sm_j\,\tilde{a}\lr{\qq_i,\qq_j} = \frac{e^{\tilde{a}\lr{\qq_i,\qq_j}}}{\sigma_i}, \]
where the pre-attention coefficients $\tilde{a}\lr{\qq_i,\qq_j}$ and nodal weights $\sigma_i$ are defined as
\[ \tilde{a}\lr{\qq_i,\qq_j} = \lrelu\lr{ \bb{a}^\intercal\lr{\bb{W}^\intercal\qq_i\,||\,\bb{W}^\intercal\qq_j}}, \qquad \sigma_i = \sum_{j\in \overline{N}(i)} e^{\tilde{a}\lr{\qq_i,\qq_j}}. \]
However, the exponentials in the outer $\sm$ are often replaced with other nonlinear functions, e.g. $\mathrm{Squareplus}$, and the pre-attention coefficients $\tilde{a}$ appear as variable (but learnable) functions of the nodal features.  First, notice that (1) the attention coefficients $a\lr{\qq_i,\qq_j}$ depend on the node features $\qq$ and not simply the topology of the graph, and (2) the attention coefficients are not symmetric, reflecting the fact that the attention paid by node $i$ to node $j$ need not equal the attention paid by node $j$ to node $i$.  A direct consequence of this is that GATs are not purely diffusive under any circumstances, since it was shown in Appendix~\ref{app:graphcalc} that the combinatorial divergence $d_0^\intercal$ will antisymmetrize the edge features it acts on.  In particular, it is clear that the product $a\lr{\qq_i,\qq_j}\lr{\qq_i-\qq_j}$ is asymmetric in $i,j$ under the standard attention mechanism, since even the pre-attention coefficients $\tilde{a}\lr{\qq_i,\qq_j}$ are not symmetric, meaning that there will be two distinct terms after application of the divergence.  More precisely, there is the following subtle result.

\begin{proposition}\label{prop:nondivergence}
    Let $\qq\in\mathbb{R}^{\nn{\mathcal{V}}\times n_V}$ denote an array of nodal features.  The expression
    \[\sum_{j\in\overline{\mathcal{N}}(i)} a\lr{\qq_i,\qq_j}\lr{\qq_i-\qq_j},\]
    where $a = \bb{A}_0^{-1}\bb{A}_1$ is not the action of a Laplace operator whenever $\bb{A}_1$ is not symmetric.
\end{proposition}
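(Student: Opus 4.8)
The plan is to expand both the given expression and a putative Laplace operator in the same basis and read off the obstruction. Using $a(\qq_i,\qq_j) = a_{1,ij}/a_{0,ii}$ with $a_{0,ii} = \sum_{k\in\mathcal{N}(i)}a_{1,ik}$, the $j=i$ term drops out and, at a fixed point of feature space (so the attention coefficients are frozen), the expression is the action of the matrix $\bb{L}$ with $\bb{L}_{ij} = -a_{1,ij}/a_{0,ii}$ off-diagonal and unit diagonal, i.e. $\bb{L}=\bb{I}-\bb{C}$ for the row-stochastic matrix $\bb{C}=(a_{1,ij}/a_{0,ii})$. Since any candidate Laplace operator is likewise feature-dependent, it suffices to argue pointwise.

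First I would recall, from the divergence computations in Appendices~\ref{app:graphcalc} and \ref{app:high-order-attn}, that a Hodge Laplacian $d_0^*d_0$ built from $d_0$ and diagonal inner products $\bb{B}_0$ on $\mathcal{V}$, $\bb{B}_1$ on $\mathcal{E}$ acts as $(d_0^*d_0\qq)_i = b_i^{-1}\sum_{j\in\mathcal{N}(i)}\beta_{ij}(\qq_i-\qq_j)$ with $\beta_{ij}=\beta_{ji}$, because $d_0^\intercal$ retains only the orientation-invariant part of the edge weighting — this is exactly the content of the identity $(d_0^*\pp)_i = -\sum_{j\in\mathcal{N}(i)}\frac{a_{1,ij}+a_{1,ji}}{a_{0,ii}}\pp_{ij}$ derived there. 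Equivalently, $\bb{B}_0\,d_0^*d_0 = d_0^\intercal\bb{B}_1 d_0$ is a symmetric matrix, so \emph{every} such Laplace operator is self-adjoint with respect to a positive diagonal inner product.

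Then I would set $\bb{L}=d_0^*d_0$ for some such choice and compare off-diagonal entries against this symmetric template. This yields, for every edge $(i,j)$, the detailed-balance requirement $b_i\,a_{1,ij}/a_{0,ii} = b_j\,a_{1,ji}/a_{0,jj}$; writing $\gamma_i = b_i/a_{0,ii}>0$ it reads $\gamma_i a_{1,ij} = \gamma_j a_{1,ji}$. If $\bb{A}_1$ is symmetric this holds with $\gamma\equiv\mathbf 1$ (that is, $\bb{B}_0=\bb{A}_0$, $\bb{B}_1=\bb{A}_1$), recovering the heat-equation interpretation of the body. If $\bb{A}_1$ is not symmetric the system is overdetermined: following it around any cycle $i_1\to i_2\to\cdots\to i_n\to i_1$ forces Kolmogorov's condition $\prod_k a_{1,i_k i_{k+1}} = \prod_k a_{1,i_{k+1}i_k}$, which fails for generic attention weights, so no inner products reproduce $\bb{L}$ and the expression is not the action of a Laplace operator.

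The step I expect to be the main obstacle is less the algebra than pinning down the admissible class of ``Laplace operators.'' If one only compares with the canonical Hodge Laplacian $d_0^*d_0$ carrying the same $\bb{A}_0,\bb{A}_1$, equating coefficients immediately gives $a_{1,ij}=a_{1,ij}+a_{1,ji}$ (up to the normalizing constant), i.e. $\bb{A}_1=\bb{A}_1^\intercal$, and there is nothing more to do. If one allows an arbitrary diagonal SPD inner product — the version that actually captures ``dissipative Laplacian'' — then the reversibility argument above is needed, and the statement should be read as generic in the asymmetric $\bb{A}_1$ on graphs containing a cycle, since on a tree every $\bb{L}=\bb{I}-\bb{C}$ of this form is automatically reversible. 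I would state the proposition in the form consistent with its use in Appendix~\ref{app:gats} and give the matching short computation.
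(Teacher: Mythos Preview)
Your proposal is correct and in fact more careful than the paper's own argument. The paper takes exactly the short route you anticipate in your final paragraph: it asserts that any Laplace operator on nodes has the form $d_0^*d_0=\bb{A}_0^{-1}d_0^\intercal\bb{A}_1 d_0$ for \emph{some} positive definite $\bb{A}_0,\bb{A}_1$, computes
\[
(\Delta_0\qq)_i=-\tfrac12\sum_{j\in\mathcal N(i)}\frac{a_{1,ij}+a_{1,ji}}{a_{0,ii}}(\qq_j-\qq_i),
\]
and concludes that the coefficient ``must have symmetric numerator.'' That is the entire proof; there is no attempt to rule out alternative inner products $\bb{B}_0,\bb{B}_1$ that might reproduce the given asymmetric expression.

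Your detailed-balance/Kolmogorov analysis is the rigorous completion of what the paper leaves implicit: showing that an asymmetric $\bb{A}_1$ cannot be absorbed into a \emph{different} diagonal $\bb{B}_0$ unless the cycle condition holds. Your observation that on a tree every such row-stochastic operator is reversible---so the proposition, read literally as ``not the action of \emph{any} Laplace operator,'' fails there---is a genuine subtlety the paper does not address. In the paper's context this does not matter, since the result is used only to motivate the modified (symmetric-numerator) attention, and the short computation you describe suffices for that; but your stronger argument is what the statement as written would actually require.
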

\begin{proof}
    From Appendix~\ref{app:adjgrad}, we know that any Laplace operator on nodes is expressible as $d_0^*d_0 = \bb{A}_0^{-1}d_0^\intercal\bb{A}_1d_0$ for some positive definite $\bb{A}_0,\bb{A}_1$.  So, we compute the action of the Laplacian at node $i$,
    \begin{align*}
        \lr{\Delta_0\qq}_i &= \lr{d_0^*d_0\qq}_i =\lr{\bb{A}_0^{-1}d_0^\intercal\bb{A}_1 d_0\qq}_i = a_{0,ii}^{-1}\sum_\alpha \lr{d_0^\intercal}_{i\alpha}\lr{\bb{A}_1 d_0\qq}_\alpha \\
        &= a_{0,ii}^{-1}\sum_{\alpha\ni i} \lr{\bb{A}_1d_0\qq}_{-\alpha} - \lr{\bb{A}_1d_0\qq}_{\alpha} = -\frac{1}{2} \sum_{j\in\mathcal{N}(i)}\frac{a_{1,ji}+a_{i,ij}}{a_{0,ii}}\lr{\qq_j-\qq_i}, \\
        &= \sum_{j\in\mathcal{N}(i)} a\lr{\qq_i,\qq_j}\lr{\qq_j-\qq_i},
    \end{align*}
    which shows that $a\lr{\qq_i,\qq_j}=(1/2)\lr{a_{1,ji}+a_{1,ij}}/a_{0,ii}$ must have symmetric numerator.
\end{proof}

While this result shows that GATs (and their derivatives, e.g., GRAND) are not purely diffusive, it also shows that it is possible to get close to GAT (at least syntactically) with a learnable diffusion mechanism.  In fact, setting $\sig=\bb{W}^k=\bb{I}$ in \eqref{eq:gat} yields precisely a single-step diffusion equation provided that $a\lr{q_i^k,q_j^k}$ is right-stochastic (i.e., $\sum_j a\lr{\qq_i,\qq_j}1_j = 1_i$) and built as dictated by Proposition~\ref{prop:nondivergence}.  


\begin{theorem}\label{thm:gatdiffusion}
    The GAT layer \eqref{eq:gat} is a single-step diffusion equation provided that $\bm{\sig}=\bb{W}^k=\bb{I}$, and the attention mechanism $a\lr{\qq_i,\qq_j}=(1/2)\lr{a_{1,ji}+a_{1,ij}}/a_{0,ii}$ is right-stochastic.
\end{theorem}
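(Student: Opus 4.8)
The plan is to show that, under the two stated hypotheses, the right-hand side of \eqref{eq:gat} coincides exactly with the nodal update of a forward Euler step applied to a graph heat equation. The argument is short: it combines the right-stochasticity assumption with the Laplacian identity of Proposition~\ref{prop:nondivergence}.

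First I would substitute $\bm{\sig}=\bb{W}^k=\bb{I}$ into \eqref{eq:gat}, reducing the layer to $\qq_i^{k+1} = \sum_{j\in\overline{\mathcal{N}}(i)} a(\qq_i^k,\qq_j^k)\,\qq_j^k$. Since $a=\bb{A}_0^{-1}\bb{A}_1$ is right-stochastic, $\sum_{j\in\overline{\mathcal{N}}(i)} a(\qq_i^k,\qq_j^k)=1$, so I may write $\qq_i^k = \sum_{j\in\overline{\mathcal{N}}(i)} a(\qq_i^k,\qq_j^k)\,\qq_i^k$ and subtract to obtain
\[ \qq_i^{k+1}-\qq_i^k = \sum_{j\in\overline{\mathcal{N}}(i)} a(\qq_i^k,\qq_j^k)\,(\qq_j^k-\qq_i^k) = \sum_{j\in\mathcal{N}(i)} a(\qq_i^k,\qq_j^k)\,(\qq_j^k-\qq_i^k), \]
where the final equality holds because the $j=i$ summand carries the factor $\qq_i^k-\qq_i^k=\bm{0}$ and may be dropped.

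Next I would invoke Proposition~\ref{prop:nondivergence}. By hypothesis the attention has the symmetric-numerator form $a(\qq_i,\qq_j)=\tfrac12(a_{1,ji}+a_{1,ij})/a_{0,ii}$, which is precisely the case in which that proposition identifies $\sum_{j\in\mathcal{N}(i)} a(\qq_i^k,\qq_j^k)(\qq_j^k-\qq_i^k)$ with $-(\Delta_0\qq^k)_i$, where $\Delta_0 = d_0^*d_0 = \bb{A}_0^{-1}d_0^\intercal\bb{A}_1 d_0$ is the Hodge Laplacian on nodes built from the (symmetrized) learnable inner products. Assembling over all $i\in\mathcal{V}$ then gives $\qq^{k+1} = \qq^k - \Delta_0\qq^k$, which is exactly the forward Euler discretization (unit time step) of the graph diffusion equation $\dot{\qq}=-\Delta_0\qq$; this is the claimed single-step diffusion equation.

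I do not expect a genuine obstacle here; the only point demanding care is the bookkeeping around the self-loop. Right-stochasticity is imposed over the closed neighborhood $\overline{\mathcal{N}}(i)$, whereas Proposition~\ref{prop:nondivergence} returns a sum over the open neighborhood $\mathcal{N}(i)$, so one must check that discarding the $j=i$ term is legitimate (it is, as noted above) and that the normalization $a_{0,ii}$ is defined consistently with whichever of $\overline{\mathcal{N}}$, $\mathcal{N}$ is in force. This is exactly the sense in which, per the discussion following \eqref{eq:gat}, fixing $\bm{\sig}$ and $\bb{W}^k$ to the identity together with a symmetric numerator removes precisely the obstructions separating a genuine GAT layer from the metric heat equation.
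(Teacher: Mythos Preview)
Your proposal is correct and follows essentially the same argument as the paper, just run in the reverse direction: the paper starts from the Laplacian formula on the self-looped graph, uses right-stochasticity to rewrite $(\Delta_0\qq)_i = \qq_i - \sum_{j\in\overline{\mathcal{N}}(i)} a(\qq_i,\qq_j)\qq_j$, and then recognizes the forward Euler step of $\dot\qq=-\Delta_0\qq$ as the non-activated GAT update, whereas you begin with the GAT update and work back to the heat step. The ingredients (right-stochasticity plus the symmetric-numerator Laplacian identity) and the handling of the self-loop term are the same.
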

\begin{proof}
    First, notice that the Laplacian with respect to an edge set which contains self-loops is computable via
    \[\lr{\Delta_0\qq}_i = -\sum_{j\in\overline{\mathcal{N}}(i)}a\lr{\qq_i,\qq_j}\lr{\qq_j-\qq_i} = \qq_i -\sum_{j\in\overline{\mathcal{N}}(i)}a\lr{\qq_i,\qq_j}\qq_j.\]
    Therefore, taking a single step of heat flow $\dot{\qq} = -\Delta_0\qq$ with forward Euler discretization and time step $\tau=1$ is equivalent to 
    \[\qq^{k+1}_i = \qq^k_i -\tau\lr{\Delta_0\qq^k}_i = \sum_{j\in\overline{\mathcal{N}}(i)}a\lr{\qq^k_i,\qq^k_j}\qq^k_j, \]
    which is just a modified and non-activated GAT layer with $\bb{W}^k=\bb{I}$ and attention mechanism $a$.  
\end{proof}

\begin{remark}
    Since $\sm$ and its variants are right-stochastic, Theorem~\ref{thm:gatdiffusion} is what establishes equivalence between the non-divergence equation
    \[\dot{\qq}_i = \sum_{j\in\overline{\mathcal{N}}(i)} a\lr{\qq_i,\qq_j}\lr{\qq_j-\qq_i},\]
    and the standard GAT layer seen in, e.g., \cite{chamberlain2021grand}, when $a(\qq_i,\qq_j)$ is the usual attention mechanism.
\end{remark}

\begin{remark}
    In the literature, there is an important (and often overlooked) distinction between the positive graph/Hodge Laplacian $\Delta_0$ and the negative ``geometer's Laplacian'' $\Delta$ which is worth noting here.  Particularly, we have from integration-by-parts that the gradient $\nabla=d_0$ is $L^2$-adjoint to \textit{minus} the divergence $-\nabla\cdot = d_0^\intercal$, so that the two Laplace operators $\Delta_0=d_0^\intercal d_0$ and $\Delta=\nabla\cdot\nabla$ differ by a sign.  This is why the same $\ell^2$-gradient flow of Dirichlet energy can be equivalently expressed as $\dot{\qq} = \Delta\qq = -\Delta_0\qq$, but not by, e.g., $\dot{\qq}=\Delta_0\qq$.
\end{remark}

This shows that, while they are not equivalent, there is a close relationship between attention and diffusion mechanisms on graphs.  The closest analogue to the standard attention expressible in this format is perhaps the choice $a_{1,ij} = f\lr{\tilde{a}\lr{\qq_i,\qq_j}}$, $a_{0,ii} = \sum_{j\in\bar{\mathcal{N}}(i)} a_{1,ij}$, discussed in Section~\ref{sec:brackets} and Appendix~\ref{app:high-order-attn}, where $f$ is any scalar-valued positive function.  For example, when $f(x)=e^x$, it follows that
\begin{align*}
    \lr{\Delta_0\qq}_i &= -\frac{1}{2}\sum_{j\in\mathcal{N}(i)} \frac{e^{\tilde{a}\lr{\qq_i,\qq_j}}+e^{\tilde{a}\lr{\qq_j,\qq_i}}}{\sigma_i}\lr{\qq_j-\qq_i} \\
    &= -\frac{1}{2}\sum_{j\in\mathcal{N}(i)}\lr{a\lr{\qq_i,\qq_j} + \frac{e^{\tilde{a}\lr{\qq_j,\qq_i}}}{\sigma_i}}\lr{\qq_j-\qq_i},
\end{align*}
which leads to the standard GAT propagation mechanism plus an extra term arising from the fact that the attention $a$ is not symmetric.

\begin{remark}\label{rem:multihead}
    Practically, GATs and their variants typically make use of multi-head attention, defined in terms of an attention mechanism which is averaged over some number $\nn{h}$ of independent ``heads'',
    \[a\lr{\qq_i,\qq_j} = \frac{1}{\nn{h}}\sum_h a^h\lr{\qq_i,\qq_j},\]
    which are distinct only in their learnable parameters.  While the results of this section were presented in terms of $\nn{h}=1$, the reader can check that multiple attention heads can be used in this framework provided it is the pre-attention $\tilde{a}$ that is averaged instead.
\end{remark}

\subsection{Bracket derivations and properties}\label{app:brackets}
Here the architectures in the body are derived in greater detail.  First, it will be shown that $\LL^*=-\LL$, $\GG^*=\GG$, and $\MM^* =\MM$, as required for structure-preservation.

\begin{proposition}\label{prop:symmetry}
    For $\LL,\GG,\MM$ defined in Section~\ref{sec:brackets}, we have $\LL^*=-\LL$, $\GG^*=\GG$, and $\MM^* =\MM$.
\end{proposition}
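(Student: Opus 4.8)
The plan is to reduce all three identities to Proposition~\ref{prop:adjointcomp} together with two elementary properties of the adjoint with respect to any fixed inner product: it is an involution, $(\bb{B}^*)^* = \bb{B}$, and it reverses composition, $(\bb{B}\bb{C})^* = \bb{C}^*\bb{B}^*$ (throughout, the feature-dependent matrices $\bb{A}_0,\bb{A}_1$ are treated as fixed SPD matrices, so these pointwise algebraic facts apply). First I would record the block-adjoint rule for the inner product $\bb{A} = \mathrm{diag}(\bb{A}_0,\bb{A}_1)$ on $\Omega_0\oplus\Omega_1$: expanding $\IPk{\cdot}{\cdot}$ blockwise shows that the $\bb{A}$-adjoint of $\begin{pmatrix} P & Q \\ R & S \end{pmatrix}$, with $P:\Omega_0\to\Omega_0$, $Q:\Omega_1\to\Omega_0$, $R:\Omega_0\to\Omega_1$, $S:\Omega_1\to\Omega_1$, is the ``block transpose'' $\begin{pmatrix} P^* & R^* \\ Q^* & S^* \end{pmatrix}$, each starred entry being the adjoint of the corresponding block with respect to the relevant pair among $\bb{A}_0,\bb{A}_1$ (e.g.\ $R^*:\Omega_1\to\Omega_0$ is the adjoint of $R:\Omega_0\to\Omega_1$). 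The three claims then follow by computing blockwise adjoints.

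For $\LL = \begin{pmatrix} 0 & -d_0^* \\ d_0 & 0 \end{pmatrix}$, the block-transpose rule gives $\LL^* = \begin{pmatrix} 0 & (d_0)^* \\ (-d_0^*)^* & 0 \end{pmatrix}$. Here $(d_0)^* = d_0^*$ by the definition of $d_0^*$, and $(-d_0^*)^* = -(d_0^*)^* = -d_0$ by linearity of the adjoint and the involution property; alternatively one checks directly from Proposition~\ref{prop:adjointcomp} that $(d_0^*)^* = \bb{A}_1^{-1}(d_0^*)^\intercal\bb{A}_0 = \bb{A}_1^{-1}\bb{A}_1 d_0\bb{A}_0^{-1}\bb{A}_0 = d_0$. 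Hence $\LL^* = \begin{pmatrix} 0 & d_0^* \\ -d_0 & 0 \end{pmatrix} = -\LL$.

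Since $\GG$ is block-diagonal, $\GG^* = \mathrm{diag}(\Delta_0^*,\Delta_1^*)$, so it suffices to show each Hodge Laplacian is self-adjoint: using composition-reversal and involution, $\Delta_0^* = (d_0^*d_0)^* = d_0^*(d_0^*)^* = d_0^*d_0 = \Delta_0$, and likewise $\Delta_1^* = (d_1^*d_1)^* + (d_0d_0^*)^* = d_1^*d_1 + d_0d_0^* = \Delta_1$. For $\MM = \begin{pmatrix} 0 & 0 \\ 0 & \bb{A}_1 d_1^*d_1\bb{A}_1 \end{pmatrix}$ the only nonzero block is $S = \bb{A}_1 d_1^*d_1\bb{A}_1$ on $\Omega_1$, so $\MM^* = \mathrm{diag}(0,S^*)$. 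Viewing $\bb{A}_1$ as an operator $\Omega_1\to\Omega_1$, Proposition~\ref{prop:adjointcomp} gives $\bb{A}_1^* = \bb{A}_1^{-1}\bb{A}_1^\intercal\bb{A}_1 = \bb{A}_1$ because $\bb{A}_1$ is symmetric as a matrix (the paper restricts to diagonal $\bb{A}_k$); combined with $(d_1^*d_1)^* = d_1^*d_1$ and composition-reversal this gives $S^* = \bb{A}_1^*(d_1^*d_1)^*\bb{A}_1^* = \bb{A}_1 d_1^*d_1\bb{A}_1 = S$, so $\MM^* = \MM$. (The positive semi-definiteness of $\GG,\MM$ used elsewhere then drops out from $\Delta_0 = d_0^*d_0$, $\Delta_1 = d_1^*d_1 + d_0d_0^*$, and $S = (d_1\bb{A}_1)^*(d_1\bb{A}_1)$.)

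I do not expect a substantive obstacle: the argument is routine once the block-adjoint bookkeeping is set up. The only points demanding care are tracking which pair of inner products each block adjoint is taken with respect to — so that the off-diagonal blocks of $\LL$ transpose into the correct positions with the correct signs — and invoking the involution $(d_0^*)^* = d_0$ (or deriving it cleanly from Proposition~\ref{prop:adjointcomp}) rather than expanding $d_0^*$ and losing track of the $\bb{A}_k$ factors.
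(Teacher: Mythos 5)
Your proof is correct and takes essentially the same route as the paper's: both reduce everything to Proposition~\ref{prop:adjointcomp} ($\bb{B}^* = \bb{A}^{-1}\bb{B}^\intercal\bb{A}$ with $\bb{A}=\mathrm{diag}\lr{\bb{A}_0,\bb{A}_1}$) and verify the three identities blockwise, with your involution and composition-reversal identities being just a repackaging of the paper's explicit conjugation computations (e.g.\ $\bb{A}_1^{-1}\lr{d_0^*}^\intercal\bb{A}_0 = d_0$ is exactly your $(d_0^*)^*=d_0$). One incidental difference: you treat the full block $\Delta_1 = d_1^*d_1 + d_0d_0^*$ of $\GG$ as defined in Section~\ref{sec:brackets}, whereas the paper's appendix computation writes only $d_1^*d_1$ in that block; since both terms are self-adjoint, the conclusion is unaffected either way.
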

\begin{proof}
    First, denoting $\bb{A}=\mathrm{diag}\lr{\bb{A}_0,\bb{A}_1}$, it was shown in section~\ref{app:adjgrad} that $\bb{B}^*=\bb{A}^{-1}\bb{B}^\intercal\bb{A}$ for any linear operator $\bb{B}$ of appropriate dimensions.  So, applying this to $\LL$, it follows that
    \begin{align*}
        \LL^* &= 
        \begin{pmatrix}
            \bb{A}_0^{-1} & \bm{0} \\ \bm{0} & \bb{A}_1^{-1}
        \end{pmatrix}
        \begin{pmatrix}
            \bm{0} & -d_0^* \\ d_0 & \bm{0}
        \end{pmatrix}^\intercal
        \begin{pmatrix}
            \bb{A}_0 & \bm{0} \\ \bm{0} & \bb{A}_1
        \end{pmatrix} \\
        &= 
        \begin{pmatrix}
            \bm{0} & \bb{A}_0^{-1}d_0^\intercal\bb{A}_1 \\ -\bb{A}_1^{-1}\lr{d_0^*}^\intercal\bb{A}_0 & \bm{0}
        \end{pmatrix}
        = 
        \begin{pmatrix}
            \bm{0} & d_0^* \\ -d_0 & \bm{0}
        \end{pmatrix} 
        = -\LL.
    \end{align*}
    Similarly, it follows that
    \begin{align*}
        \GG^* &= 
        \begin{pmatrix}
            \bb{A}_0^{-1} & \bm{0} \\ \bm{0} & \bb{A}_1^{-1}
        \end{pmatrix}
        \begin{pmatrix}
            d_0^*d_0 & \bm{0} \\ \bm{0} & d_1^*d_1
        \end{pmatrix}^\intercal
        \begin{pmatrix}
            \bb{A}_0 & \bm{0} \\ \bm{0} & \bb{A}_1
        \end{pmatrix} \\
        &= 
        \begin{pmatrix}
            \bb{A}_0^{-1}d_0^\intercal \lr{d_0^*}^\intercal\bb{A}_0 & \bm{0} \\ \bm{0} & \bb{A}_1^{-1}d_1^\intercal\lr{d_1^*}^\intercal\bb{A}_1
        \end{pmatrix}
        = 
        \begin{pmatrix}
            d_0^*d_0 & \bm{0} \\ \bm{0} & d_1^*d_1
        \end{pmatrix} 
        = \GG,
    \end{align*}
    \begin{align*}
        \MM^* &= 
        \begin{pmatrix}
            \bb{A}_0^{-1} & \bm{0} \\ \bm{0} & \bb{A}_1^{-1}
        \end{pmatrix}
        \begin{pmatrix}
            \bm{0} & \bm{0} \\ \bm{0} & \bb{A}_1d_1^*d_1\bb{A}_1
        \end{pmatrix}^\intercal
        \begin{pmatrix}
            \bb{A}_0 & \bm{0} \\ \bm{0} & \bb{A}_1
        \end{pmatrix} \\
        &= 
        \begin{pmatrix}
            \bm{0} & \bm{0} \\ \bm{0} & d_1^\intercal\lr{d_1^*}^\intercal\bb{A}_1^2
        \end{pmatrix}
        = 
        \begin{pmatrix}
            \bm{0} & \bm{0} \\ \bm{0} & \bb{A}_1d_1^*d_1\bb{A}_1
        \end{pmatrix} 
        = \MM,
    \end{align*}
    where the second-to-last equality used that $\bb{A}_1\bb{A}_1^{-1}=\bb{I}.$
\end{proof}

\begin{remark}
Note that the choice of zero blocks in $\bb{L},\bb{G}$ is sufficient but not necessary for these adjointness relationships to hold.  For example, one could alternatively choose the diagonal blocks of $\LL$ to contain terms like $\bb{B}-\bb{B}^*$ for an appropriate message-passing network $\bb{B}$.
\end{remark}

Next, we compute the gradients of energy and entropy with respect to $\lr{\cdot,\cdot}$.

\begin{proposition}\label{prop:grads}
    The $\bb{A}$-gradient of the energy
    \[ E(\qq,\pp) = \frac{1}{2}\lr{\nn{\qq}^2 + \nn{\pp}^2} = \frac{1}{2}\sum_{i\in\mathcal{V}} \nn{\qq_i}^2 + \frac{1}{2}\sum_{\alpha\in\mathcal{E}} \nn{\pp_\alpha}^2, \]
    satisfies 
    \[\nabla E(\qq,\pp) = 
    \begin{pmatrix}
        \bb{A}_0^{-1} & \bm{0} \\ \bm{0} & \bb{A}_1^{-1}
    \end{pmatrix}
    \begin{pmatrix}
        \qq \\ \pp
    \end{pmatrix} 
    =
    \begin{pmatrix}
        \bb{A}_0^{-1}\qq \\ \bb{A}_1^{-1}\pp
    \end{pmatrix}.\]
    Moreover, given the energy and entropy defined as
    \begin{align*}
        E(\qq,\pp) &= f_E\lr{s(\qq)} + g_E\lr{s\lr{d_0d_0^\intercal\pp}}, \\
        S(\qq,\pp) &= g_S\lr{s\lr{d_1^\intercal d_1\pp}},
    \end{align*}
    where $f_E:\mathbb{R}^{n_V}\to\mathbb{R}$ acts on node features, $g_E,g_S:\mathbb{R}^{n_E}\to\mathbb{R}$ act on edge features, and $s$ denotes sum aggregation over nodes or edges, the $\bb{A}$-gradients are
    \[ 
    \nabla{E}(\qq,\pp) = 
    \begin{pmatrix}
        \bb{A}_0^{-1}\bm{1}\otimes \nabla f_E\lr{s(\qq)} \\ \bb{A}_1^{-1}d_0d_0^\intercal\bm{1}\otimes\nabla g_E\lr{s\lr{d_0d_0^\intercal\pp}}
    \end{pmatrix},
    \quad \nabla{S}(\qq,\pp) = 
    \begin{pmatrix} 
        \bm{0} \\ \bb{A}_1^{-1}d_1^\intercal d_1\bm{1}\otimes \nabla g_S\lr{s\lr{d_1^\intercal d_1\pp}}
    \end{pmatrix},
    \]
\end{proposition}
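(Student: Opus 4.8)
The plan is to reduce everything to a Euclidean gradient computation. By the discussion in Appendix~\ref{app:adjgrad}, for any differentiable functional one has $dE(\xx)\bb{b} = \IP{\delta E(\xx)}{\bb{b}} = \IPk{\nabla E(\xx)}{\bb{b}}$ for all perturbations $\bb{b}$, where $\delta E$ is the $\ell^2$-gradient and $\nabla E$ the gradient with respect to the block inner product with matrix $\bb{A} = \mathrm{diag}\lr{\bb{A}_0,\bb{A}_1}$; hence $\nabla E = \bb{A}^{-1}\delta E$, and likewise $\nabla S = \bb{A}^{-1}\delta S$. So it suffices to compute the ordinary gradients and then left-multiply by $\mathrm{diag}\lr{\bb{A}_0^{-1},\bb{A}_1^{-1}}$. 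For the kinetic energy $E(\qq,\pp) = \tfrac12\lr{\nn{\qq}^2+\nn{\pp}^2}$ this is immediate: differentiating gives $\delta E(\qq,\pp) = \lr{\qq,\pp}^\intercal$, and the first formula follows.

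For the metriplectic $E,S$, the key step is the following chain-rule observation, to be applied three times: if $B:\Omega_k\to\Omega_k$ is $\ell^2$-symmetric, $s$ is sum aggregation over the clique index (so $s(\bb{y})$ collapses a feature array $\bb{y}$ to a vector indexed by feature channels), and $\phi$ is differentiable, then the $\ell^2$-gradient of $\bb{y}\mapsto\phi\lr{s(B\bb{y})}$ is $B\mathbf{1}\otimes\nabla\phi\lr{s(B\bb{y})}$. I would verify this by pairing against a perturbation $\bb{b}$: the directional derivative equals $\nabla\phi\lr{s(B\bb{y})}\cdot s(B\bb{b})$, and since $s(B\bb{b}) = \mathbf{1}^\intercal B\bb{b}$ (summation over the clique index) and $B = B^\intercal$, collecting the clique and channel indices rewrites this as $\IP{B\mathbf{1}\otimes\nabla\phi\lr{s(B\bb{y})}}{\bb{b}}$, which identifies the gradient.

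Applying this with $B=\bb{I}$ (acting on nodes) and $\phi = f_E$ shows the $\qq$-derivative of $E$ is $\mathbf{1}\otimes\nabla f_E\lr{s(\qq)}$; with $B = d_0d_0^\intercal$ (symmetric, acting on edges) and $\phi = g_E$, the $\pp$-derivative of $E$ is $d_0d_0^\intercal\mathbf{1}\otimes\nabla g_E\lr{s\lr{d_0d_0^\intercal\pp}}$; and with $B = d_1^\intercal d_1$ (symmetric, acting on edges) and $\phi = g_S$, the $\pp$-derivative of $S$ is $d_1^\intercal d_1\mathbf{1}\otimes\nabla g_S\lr{s\lr{d_1^\intercal d_1\pp}}$. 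Since $E$ splits as a sum of a function of $\qq$ and a function of $\pp$, and $S$ does not depend on $\qq$ at all, assembling blocks gives $\delta E$ and $\delta S$ with vanishing $\qq$-block in the latter; left-multiplying by $\mathrm{diag}\lr{\bb{A}_0^{-1},\bb{A}_1^{-1}}$ produces exactly the stated expressions.

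The only obstacle here is index bookkeeping rather than anything conceptual: feature arrays carry both a clique index and a feature-channel index, $s$ acts on the former while $\nabla f_E,\nabla g_E,\nabla g_S$ live on the latter, and the factor $\mathbf{1}\otimes(\cdot)$ (resp.\ $B\mathbf{1}\otimes(\cdot)$) is exactly what lifts a channel-space gradient back to a full array. Once the symmetry of $d_0d_0^\intercal$ and $d_1^\intercal d_1$ is used, the computation is routine.
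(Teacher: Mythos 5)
Your proposal is correct and follows essentially the same route as the paper's proof: reduce to the $\ell^2$-gradient via $\nabla E = \bb{A}^{-1}\delta E$, compute the kinetic term directly, and handle the metriplectic terms by a chain rule for $\phi\circ s\circ\bb{B}$ yielding an outer product $\bb{B}^\intercal\mathbf{1}\otimes\nabla\phi$, which coincides with your $\bb{B}\mathbf{1}\otimes\nabla\phi$ since the matrices $\bb{I}$, $d_0d_0^\intercal$, and $d_1^\intercal d_1$ used here are symmetric. The only cosmetic difference is that the paper states the lemma for general $\bb{B}$ (keeping the transpose) while you assume symmetry upfront, which is harmless in all three applications.
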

\begin{proof}
    Since the theory of $\bb{A}$-gradients in section \ref{app:adjgrad} establishes that $\nabla E = \bb{A}^{-1}\delta E$, it is only necessary to compute the $L^2$-gradients.  First, letting $\xx = \begin{pmatrix}\qq & \pp\end{pmatrix}^\intercal$, it follows for the first definition of energy that
    \[dE(\xx) = \sum_{i\in\mathcal{V}}\IP{\qq_i}{d\qq_i} + \sum_{\alpha\in\mathcal{E}}\IP{\pp_\alpha}{d\pp_\alpha} = \IP{\qq}{d\qq} + \IP{\pp}{d\pp} = \IP{\xx}{d\xx},\]
    showing that $\delta E(\qq,\pp) = \begin{pmatrix}\qq & \pp\end{pmatrix}^\intercal$, as desired.  Moving to the metriplectic definitions, since each term of $E,S$ has the same functional form, it suffices to compute the gradient of $f\lr{s\lr{\bb{B}\qq}}$ for some function $f:\mathbb{R}^{n_f}\to\mathbb{R}$ and matrix $\bb{B}:\mathbb{R}^{|\mathcal{V}|}\to\mathbb{R}^{|\mathcal{V}|}$. To that end, adopting the Einstein summation convention where repeated indices appearing up-and-down in an expression are implicitly summed, if $1\leq a,b\leq n_f$ and $1\leq i,j\leq |\mathcal{V}|$,
    we have
    \[d\lr{s(\qq)} = \sum_{i\in|\mathcal{V}|} d\qq_i = \sum_{i\in|\mathcal{V}|} \delta_i^j d\qq_j = \bm{1}^jdq_j^a\bb{e}_a = \lr{\bm{1}\otimes\bb{I}} : d\qq = \nabla\lr{s(\qq)}: d\qq,\]
    implying that $\nabla s(\qq) = \bb{1}\otimes\bb{I}$.  Continuing, it follows that
    \begin{align*}
        d\lr{f\circ s\circ\bb{Bq}} &= f'\lr{s\lr{\bb{Bq}}}_a s'\lr{\bb{Bq}}^{a}_{i}B^{ij} dq^a_j = f'\lr{s\lr{\bb{Bq}}}_a\bb{e}_a \lr{B^\intercal}^{ij}1_j\,dq^a_i \\
        &= \IP{\nabla f\lr{s\lr{\bb{Bq}}}\otimes \bb{B}^\intercal\bm{1}}{d\qq} = \IP{\nabla\lr{f\circ s\circ\bb{Bq}}}{d\qq},
    \end{align*}
    showing that $\nabla\lr{f\circ s\circ\bb{B}}$ decomposes into an outer product across modalities.  Applying this formula to the each term of $E,S$ then yields the $L^2$-gradients,
    \[ 
    \delta{E}(\qq,\pp) = 
    \begin{pmatrix}
        \bm{1}\otimes \nabla f_E\lr{s(\qq)} \\ d_0d_0^\intercal\bm{1}\otimes\nabla g_E\lr{s\lr{d_0d_0^\intercal\pp}}
    \end{pmatrix},
    \quad \delta{S}(\qq,\pp) = 
    \begin{pmatrix} 
        \bm{0} \\ d_1^\intercal d_1\bm{1}\otimes \nabla g_S\lr{s\lr{d_1^\intercal d_1\pp}}
    \end{pmatrix},
    \]
    from which the desired $\bb{A}$-gradients follow directly.
\end{proof}

Finally, we can show that the degeneracy conditions for metriplectic structure are satisfied by the network in Section~\ref{sec:brackets}.

\begin{theorem}
    The degeneracy conditions $\LL\nabla S = \MM\nabla E = \bm{0}$ are satisfied by the metriplectic bracket in Section~\ref{app:brackets}.
\end{theorem}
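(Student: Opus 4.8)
The plan is to verify the two degeneracy conditions $\LL\nabla S = \bm{0}$ and $\MM\nabla E = \bm{0}$ directly by substituting the formulas for $\LL,\MM$ from Section~\ref{sec:brackets} and the gradient formulas for $E,S$ from Proposition~\ref{prop:grads}, then exploiting the exact sequence property $d_1\circ d_0 = 0$ (equivalently $d_0^\intercal d_1^\intercal = 0$, and its learnable-adjoint analogue from Theorem~\ref{thm:exactseq}) to annihilate the relevant terms.

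First I would handle $\LL\nabla S = \bm{0}$. Since $\nabla S(\qq,\pp) = \begin{pmatrix}\bm{0} & \bb{A}_1^{-1}d_1^\intercal d_1\bm{1}\otimes\nabla g_S\lr{s\lr{d_1^\intercal d_1\pp}}\end{pmatrix}^\intercal$ has vanishing node-component, applying $\LL = \begin{pmatrix}0 & -d_0^* \\ d_0 & 0\end{pmatrix}$ gives $\LL\nabla S = \begin{pmatrix}-d_0^*\bb{A}_1^{-1}d_1^\intercal d_1\bm{1}\otimes\nabla g_S(\cdots) & \bm{0}\end{pmatrix}^\intercal$. The key is that $d_0^* = \bb{A}_0^{-1}d_0^\intercal\bb{A}_1$, so $d_0^*\bb{A}_1^{-1}d_1^\intercal = \bb{A}_0^{-1}d_0^\intercal\bb{A}_1\bb{A}_1^{-1}d_1^\intercal = \bb{A}_0^{-1}d_0^\intercal d_1^\intercal = \bb{A}_0^{-1}\lr{d_1 d_0}^\intercal = \bm{0}$ by the exact sequence property. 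Since the outer-product structure means this factor acts only on the "edge-index" slot, the whole node-component vanishes, so $\LL\nabla S = \bm{0}$.

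Next, $\MM\nabla E = \bm{0}$. Here $\MM = \begin{pmatrix}0 & 0 \\ 0 & \bb{A}_1 d_1^*d_1\bb{A}_1\end{pmatrix}$ and the edge-component of $\nabla E$ is $\bb{A}_1^{-1}d_0d_0^\intercal\bm{1}\otimes\nabla g_E(\cdots)$. So the edge-component of $\MM\nabla E$ is $\bb{A}_1 d_1^*d_1\bb{A}_1\bb{A}_1^{-1}d_0d_0^\intercal\bm{1}\otimes\nabla g_E(\cdots) = \bb{A}_1 d_1^*d_1 d_0 d_0^\intercal\bm{1}\otimes\nabla g_E(\cdots)$, and since $d_1 d_0 = \bm{0}$ by the exact sequence property, the factor $d_1 d_0 = \bm{0}$ kills this term (again noting the outer product means $d_1 d_0$ acts on the appropriate tensor slot). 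Hence $\MM\nabla E = \bm{0}$, completing the proof.

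The calculations here are genuinely routine once the gradient formulas are in hand; the only real subtlety — and the thing I would be most careful to state cleanly — is the bookkeeping around the outer-product (Kronecker) structure $\bb{B}\bm{1}\otimes\nabla f(\cdots)$, making sure that multiplying on the left by a matrix $\bb{C}$ acting on the clique-index produces $\bb{C}\bb{B}\bm{1}\otimes\nabla f(\cdots)$ so that a factor $\bb{C}\bb{B} = \bm{0}$ indeed annihilates the whole expression regardless of the feature-space factor $\nabla f(\cdots)$. I would also note explicitly that this is precisely the payoff of encoding the attention into the $\bb{A}_k$ and routing all state-dependence through the adjoint derivatives: the cancellation $d_1 d_0 = 0$ is purely topological and unaffected by the learnable metric, which is what makes the degeneracy conditions hold by construction rather than by delicate tuning.
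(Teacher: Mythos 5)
Your proposal is correct and follows essentially the same route as the paper's proof: substitute the block formulas for $\LL,\MM$ and the $\bb{A}$-gradients from Proposition~\ref{prop:grads}, cancel $\bb{A}_1\bb{A}_1^{-1}$, and invoke $d_1 d_0 = 0$ (equivalently $d_0^\intercal d_1^\intercal = 0$) to annihilate the surviving blocks. Your explicit remark on the outer-product bookkeeping is a minor presentational addition that the paper leaves implicit, not a difference in method.
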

\begin{proof}
    This is a direct calculation using Theorem~\ref{thm:exactseq} and Proposition~\ref{prop:grads}.  In particular, it follows that
    \begin{align*}
        \LL\nabla S 
        &= 
        \begin{pmatrix}
            \bm{0} & -d_0^* \\ d_0 & \bm{0}
        \end{pmatrix}
        \begin{pmatrix} 
        \bm{0} \\ \bb{A}_1^{-1}d_1^\intercal d_1\bm{1}\otimes\nabla g_S\lr{s\lr{d_1^\intercal d_1\pp}}
        \end{pmatrix} \\
        &=
        \begin{pmatrix}
            -\bb{A}_0^{-1}\lr{d_1d_0}^\intercal d_1\bm{1}\otimes\nabla g_S\lr{s\lr{d_1^\intercal d_1\pp}} \\
            \bm{0}
        \end{pmatrix}
        =
        \begin{pmatrix}
            \bm{0} \\ \bm{0}
        \end{pmatrix},
    \end{align*}
    \begin{align*}
        \MM\nabla E 
        &= 
        \begin{pmatrix}
            \bm{0} & \bm{0} \\ \bm{0} & \bb{A}_1d_1^*d_1\bb{A}_1
        \end{pmatrix}
        \begin{pmatrix}
            \bb{A}_0^{-1}\bm{1}\otimes \nabla f_E\lr{s(\qq)} \\ \bb{A}_1^{-1}d_0d_0^\intercal\bm{1}\otimes\nabla g_E\lr{s\lr{d_0d_0^\intercal\pp}}
        \end{pmatrix} \\
        &=
        \begin{pmatrix}
            \bm{0} \\
            \bb{A}_1d_1^*(d_1d_0)d_0^\intercal\bm{1}\otimes\nabla g_E\lr{s\lr{d_0d_0^\intercal\pp}} 
        \end{pmatrix}
        =
        \begin{pmatrix}
            \bm{0} \\ \bm{0}
        \end{pmatrix},
    \end{align*}
    since $d_1d_0=0$ as a consequence of the graph calculus.  These calculations establish the validity of the energy conservation and entropy generation properties seen previously in the manuscript body.
\end{proof}

\begin{remark}
    Clearly, this is not the only possible metriplectic formulation for GNNs.  On the other hand, this choice is in some sense maximally general with respect to the chosen operators $\LL,\GG$, since only constants are in the kernel of $d_0$ (hence there is no reason to include a nodal term in $S$), and only elements in the image of $d_1^\intercal$ (which do not exist in our setting) are guaranteed to be in the kernel of $d_0^\intercal$ for any graph.  Therefore, $\MM$ is chosen to be essentially $\GG$ without the $\Delta_0$ term, whose kernel is graph-dependent and hence difficult to design.
\end{remark}

\section{Experimental details and more results}\label{app:exp-details}
This Appendix provides details regarding the experiments in Section~\ref{sec:experiments}, as well as any additional information necessary for reproducing them. We implement the proposed algorithms with  \textsc{Python} and \textsc{Pytorch} \cite{paszke2019pytorch} that supports \textsc{CUDA}. The experiments are conducted on systems that are equipped with \textsc{NVIDIA RTX A100} and \textsc{V100} GPUs. For NODEs capabilities, we use the \textsc{Torchdiffeq} library \cite{chen2018neural}.

\subsection{Damped double pendulum}\label{app:dp-exp-details}
The governing equations for the damped double pendulum can be written in terms of four coupled first-order ODEs for the angles that the two pendula make with the vertical axis $\theta_1, \theta_2$ and their associated angular momenta $\omega_1, \omega_2$ (see \cite{chen2008chaos}), \begin{align}\label{eq:damped_dp}
    \dot{\theta}_i &= \omega_i, \qquad 1\leq i \leq 2, \\
    \dot{\omega}_1 &= \frac{m_2l_1\omega_1^2\sin\lr{2\Delta\theta}+2m_2l_2\omega_2^2\sin\lr{\Delta\theta}+2gm_2\cos\theta_2\sin\Delta\theta+2gm_1\sin\theta_1 + \gamma_1}{-2l_1\lr{m_1+m_2\sin^2\Delta\theta}}, \\
    \dot{\omega}_2 &= \frac{m_2l_2\omega_2^2\sin\lr{2\Delta\theta}+2\lr{m_1+m_2}l_1\omega_1^2\sin\Delta\theta+2g\lr{m_1+m_2}\cos\theta_1\sin\Delta\theta + \gamma_2}{2l_2\lr{m_1+m_2\sin^2\Delta\theta}},
\end{align}
where $m_1,m_2,l_1,l_2$ are the masses resp. lengths of the pendula, $\Delta\theta = \theta_1-\theta_2$ is the (signed) difference in vertical angle, $g$ is the acceleration due to gravity, and 
\begin{align*}
    \gamma_1 &= 2 k_1\dot{\theta}_1 - 2k_2\dot{\theta}_2\cos\Delta\theta. \\
    \gamma_2 &= 2k_1\dot{\theta}_1\cos\Delta\theta - \frac{2\lr{m_1+m_2}}{m_2}k_2\dot{\theta}_2,
\end{align*}
for damping constants $k_1,k_2$.

\paragraph{Dataset.} A trajectory of the damped double pendulum by solving an initial value problem associated with the ODE~\ref{eq:damped_dp}.  The initial condition used is (1.0, $\pi$/2, 0.0, 0.0), and the parameters are $m_1=m_2=1, g=1, l_1=1, l_2=0.9, k_1=k_2=0.1$.   For time integrator, we use the TorchDiffeq library \cite{chen2018neural} with Dormand--Prince 5 (DOPRI5) as the numerical solver. The total simulation time is 50 (long enough for significant dissipation to occur), and solution snapshots are collected at 500 evenly-spaced temporal indices. 

To simulate the practical case where only positional data for the system is available, the double pendulum solution is integrated to time $T=50$ (long enough for significant dissipation to occur) and post-processed once the angles and angular momenta are determined from the equations above, yielding the $(x,y)$-coordinates of each pendulum mass at intervals of $0.1$s.  This is accomplished using the relationships
\begin{align*}
    x_1 &= l_1\sin\theta_1 \\
    y_1 &= -l_1\cos\theta_1 \\
    x_2 &= x_1+l_2\sin\theta_2 = l_1\sin\theta_1+l_2\sin\theta_2 \\
    y_2 &= y_1-l_2\cos\theta_2 = -l_1\cos\theta_1-l_2\cos\theta_2.
\end{align*}
The double pendulum is then treated as a fully connected three-node graph with positional coordinates $\qq_i=(x_i,y_i)$ as nodal features, and relative velocities $\pp_\alpha=(d_0\qq)_\alpha$ as edge features.  Note that the positional coordinates $(x_0,y_0) = (0,0)$ of the anchor node are held constant during training.  To allow for the necessary flexibility of coordinate changes, each architecture from Section~\ref{sec:brackets} makes use of a message-passing feature encoder before time integration, acting on node features and edge features separately, with corresponding decoder returning the original features after time integration. 

To elicit a fair comparison, both the NODE and NODE+AE architectures are chosen to contain comparable numbers of parameters to the bracket architectures ($\sim 30$k), and all networks are trained for 100,000 epochs.  For each network, the configuration of weights producing the lowest overall error during training is used for prediction. 


\paragraph{Hyperparameters.} The networks are trained to reconstruct the node/edge features in mean absolute error (MAE) using the Adam optimizer \cite{kingma2014adam}. The NODEs and metriplectic bracket use an initial learning rate of $10^{-4}$, while the other models use an initial learning rate of $10^{-3}$.  The width of the hidden layers in the message passing encoder/decoder is 64, and the number of hidden features for nodes/edges is 32.  The time integrator used is simple forward Euler.

\paragraph{Network architectures.} 
The message passing encoders/decoders are 3-layer MLPs mapping, in the node case, nodal features and their graph derivatives, and in the edge case, edge features and their graph coderivatives, to a hidden representation.  For the bracket architectures, the attention mechanism used in the learnable coderivatives is scaled dot product.  The metriplectic network uses 2-layer MLPs $f_E,g_E,g_S$ with scalar output and hidden width 64.  For the basic NODE, node and edge features are concatenated, flattened, and passed through a 4-layer fully connected network of width 128 in each hidden layer, before being reshaped at the end.  The NODE+AE architecture uses a 3-layer fully connected network which operates on the concatenated and flattened latent embedding of size $32*6 = 192$, with constant width throughout all layers.


\begin{figure}[ht!]
\centering
\subfloat[$\qq$]{\includegraphics[width=0.33\columnwidth]{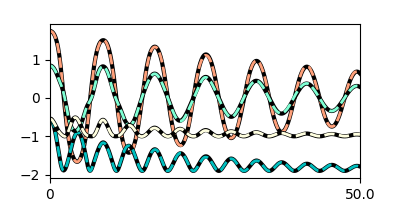}}
\subfloat[$\pp$]{\includegraphics[width=0.33\columnwidth]{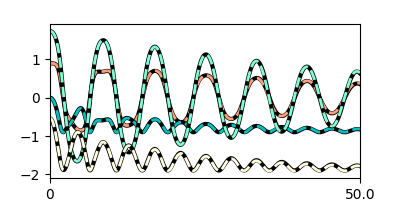}}
\subfloat[$E$ and $S$]{\includegraphics[width=0.33\columnwidth]{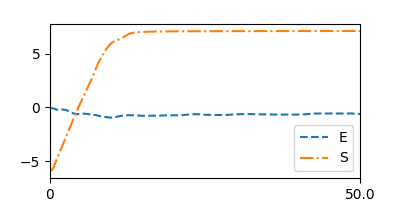}}
\caption{[Double pendulum] Trajectories of $\qq$ and $\pp$: ground-truth (solid lines) and predictions of the metriplectic bracket model (dashed lines). The evolution of the energy $E$ and the entropy $S$ over the simulation time.  Note that slight fluctuations appear in $E$ due to the fact that forward Euler is not a symplectic integrator.}
\label{fig:train_mse_dp}
\end{figure}

\subsection{Mujoco} \label{app:mujoco-exp-details}
We represent an object as a fully-connected graph, where a node corresponds to a body part of the object and, thus, the nodal feature corresponds to a position of a body part or joint. To learn the dynamics of an object, we again follow the encoder-decoder-type architecture considered in the double-pendulum experiments. First we employ a node-wise linear layer to embed the nodal feature into node-wise hidden representations (i.e., the nodal feature $\qq_i$ corresponds to a position of a body part or an angle of a joint.). As an alternative encoding scheme for the nodal feature, in addition to the position or the angle, nodal velocities are considered as additional nodal features, i.e., $\qq_i = (q_i, v_i)$. The experimental results of the alternative scheme is represented in the following section~\ref{app:mujoco-exp-additional}. 

The proposed dynamics models also require edge features (e.g., edge velocity), which are not presented in the dataset. Thus, to extract a hidden representation for an edge, we employ a linear layer, which takes velocities of the source and destination nodes of the edge as an input and outputs edge-wise hidden representations, i.e., the edge feature correspond to a pair of nodal velocities $\pp_\alpha = (v_{\text{src}(\alpha)},v_{\text{dst}(\alpha)})$, where  $v_{\text{src}(\alpha)}$ and $v_{\text{dst}(\alpha)}$ denote velocities of the source and destination nodes connected to the edge.

The MuJoCo trajectories are generated in the presence of an actor applying controls. To handle the changes in dynamics due to the control input, we introduce an additive forcing term, parameterized by an MLP, to the dynamics models, which is a similar approach considered in dissipative SymODEN \cite{zhong2020dissipative}. In dissipative SymODEN, the forcing term is designed to affect only the change of the generalized momenta (also known as the port-Hamiltonian dynamics \cite{ortega2002interconnection}). As opposed to this approach, our proposed forcing term affects the evolution of both the generalized coordinates that are defined in the latent space. Once the latent states are computed at specified time indices, a node-wise linear decoder is applied to reconstruct the position of body parts of the object. Then the models are trained based on the data matching loss measured in mean-square errors between the reconstructed and the ground-truth positions.

\subsubsection{Experiment details}\label{a:exp_details}
We largely follow the experimental settings considered in \cite{gruverdeconstructing}. 
\paragraph{Dataset.} As elaborated in \cite{gruverdeconstructing}, the standard Open AI Gym \cite{brockman2016openai} environments preprocess observations in ad-hoc ways, e.g., Hopper clips the velocity observations to $[-10,10]^d$. Thus, the authors in \cite{gruverdeconstructing} modified the environments to simply return the position and the velocity $(q,v)$ as the observation and we use the same dataset, which is made publicly available by the authors. The dataset consists of training and test data, which are constructed by randomly splitting the episodes in the replay buffer into training and test data. Training and test data consist of $\sim$40K and $\sim$300 or $\sim$ 85 trajectories, respectively. For both training and test data, we include the first 20 measurements (i.e., 19 transitions) in each trajectory. 

\paragraph{Hyperparameters.} For training, we use the Adam optimizer \cite{kingma2014adam} with the initial learning rate 5e-3 and weight decay 1e-4. With the batch size of 200 trajectories, we train the models for 256 epochs. We also employ a cosine annealing learning rate scheduler with the minimum learning rate 1e-6. For time integrator, we use the Torchdiffeq library with the Euler method. 

\paragraph{Network architectures.} The encoder and decoder networks are parameterized as a linear layer and the dimension of the hidden representations is set to 80. For attention, the scaled dot-product attention is used with 8 heads and the embedding dimension is set to 16. The MLP for handling the forcing term consists of three fully-connected layers (i.e., input, output layers and one hidden layer with 128 neurons). The MLP used for parameterizing the ``black-box'' NODEs also consists of three fully-connected layers with 128 neurons in each layer. 

\subsubsection{Additional results}\label{app:mujoco-exp-additional}
Figure~\ref{fig:train_mse} reports the loss trajectories for all considered dynamics models. For the given number of maximum epochs (i.e., 256), the Hamiltonian and double bracket models tend to reach much lower training losses (an order of magnitude smaller) errors than the NODE and Gradient models do. The metriplectic model produces smaller training losses compared to the NODE and gradient models after a certain number of epochs (e.g., 100 epochs for Hopper). 
\begin{figure}[ht!]
\centering
\includegraphics[width=.8\columnwidth]{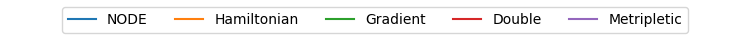}
\subfloat[HalfCheetah]{\includegraphics[width=0.3\columnwidth]{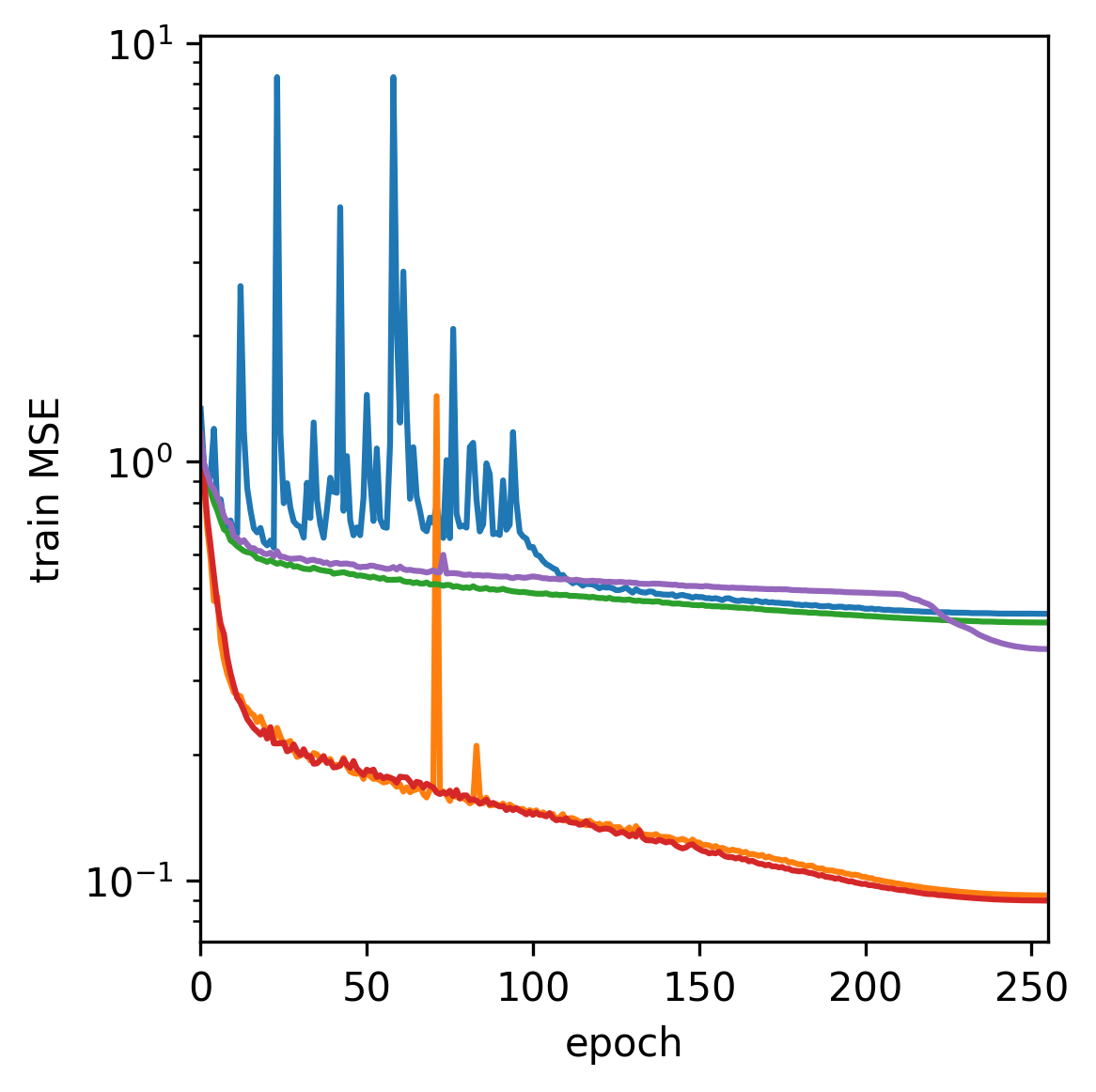}}
\subfloat[Hopper]{\includegraphics[width=0.3\columnwidth]{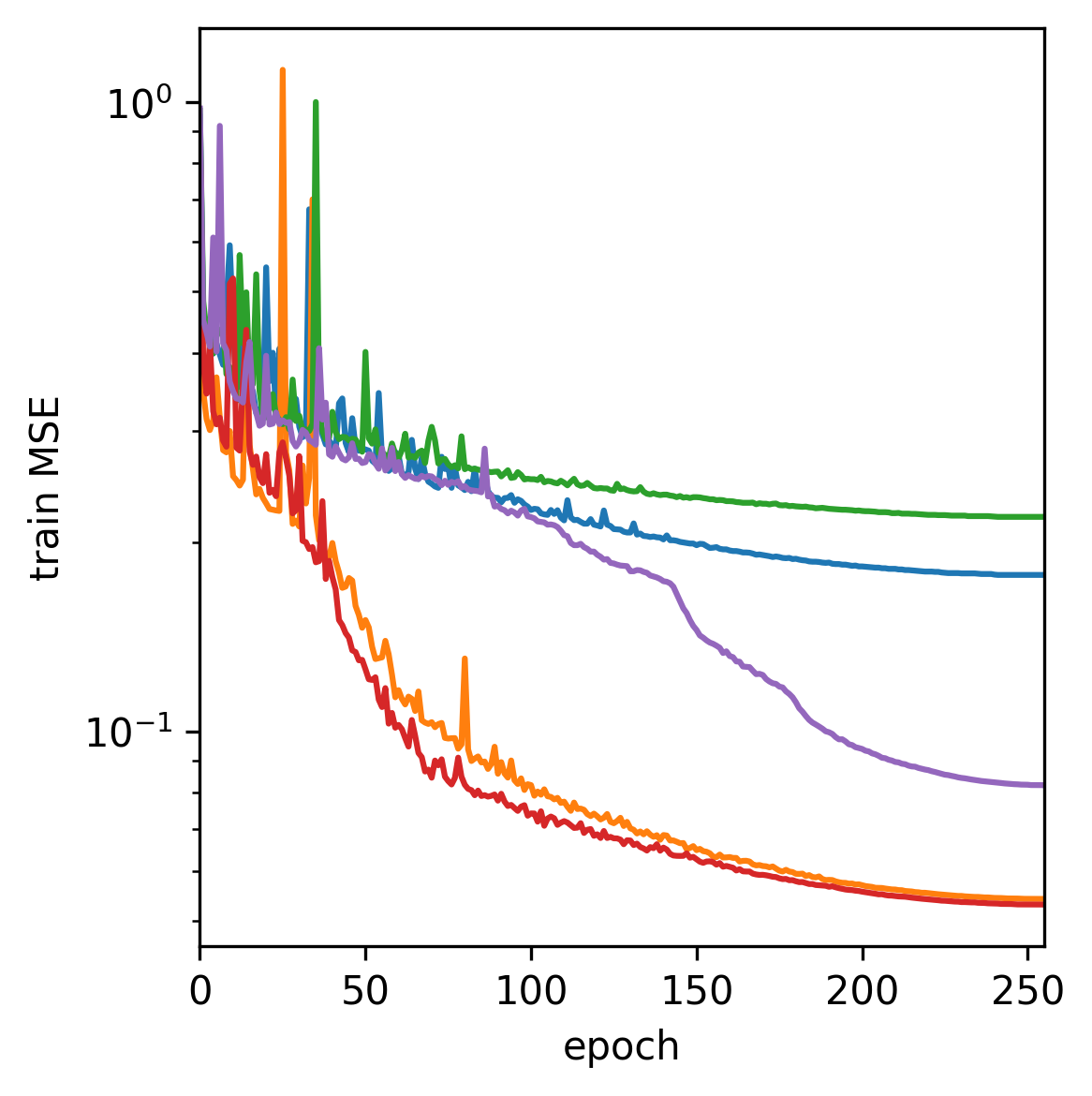}}
\subfloat[Swimmer]{\includegraphics[width=0.3\columnwidth]{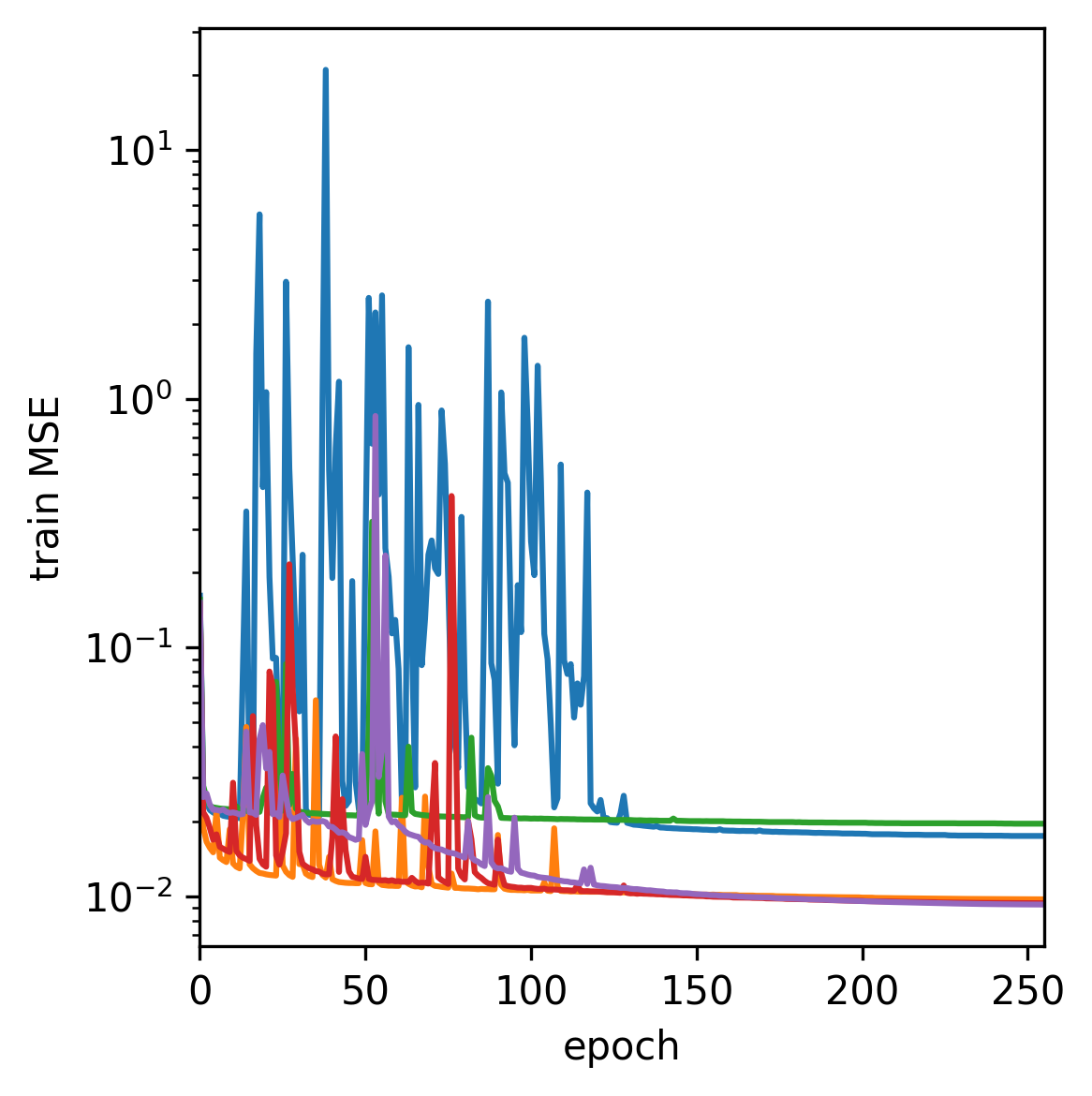}}
\caption{[Mujoco] Train MSE over epoch for all considered dynamics models. For the nodal feature, only the position or the angle of the body part/joint is considered.}
\label{fig:train_mse}
\end{figure}

In the next set of experiments, we provide not only  positions/angles of body parts as nodal features, but also velocities of the  body parts as nodal features (i.e., $\qq_i=(q_i,v_i)$). Table~\ref{tab:mujoco_qp} reports the relative errors measured in L2-norm; again, the Hamiltonian, double bracket, and metriplectic outperform other dynamics models. In particular, the metriplectic bracket produces the most accurate predictions in the Hopper and Swimmer environments. Figure~\ref{fig:train_mse_qp} reports the loss trajectories for all considered models. Similar to the previous experiments with the position as the only nodal feature, the Hamiltonian and Double bracket produces the lower training losses than the NODE and Gradient models do. For the Hopper and Swimmer environments, however, among all considered models, the metriplectic model produces the lowest training MSEs after 256 training epochs.  
\begin{table}[!h]
  \centering
  \begin{tabular}{lllllll}
    \toprule
    Dataset & {\bf HalfCheetah} & {\bf Hopper} & {\bf Swimmer}  \\
    {\bf NODE+AE} & \HLs{0.0848 $\pm$ 0.0011} & \HLs{0.0421 $\pm$ 0.0041} & \HLs{0.0135 $\pm$ 0.00082}\\
    {\bf Hamiltonian} & \HL{0.0403 $\pm$ 0.0052} & 0.0294 $\pm$ 0.0028 & 0.0120 $\pm$ 0.00022\\
    {\bf Gradient} & 0.0846 $\pm$ 0.00358 & 0.0490 $\pm$ 0.0013 & 0.0158 $\pm$ 0.00030\\
    {\bf Double Bracket} & 0.0653 $\pm$ 0.010 & 0.0274 $\pm$ 0.00090 & 0.0120 $\pm$ 0.00060\\
    {\bf Metriplectic} & 0.0757 $\pm$ 0.0021 & \HL{0.0269 $\pm$ 0.00035} & \HL{0.0114 $\pm$ 0.00067}\\
    \bottomrule
  \end{tabular}
  \vspace{5pt}
    \caption{Relative errors of the network predictions of the MuJoCo environments on the test set, reported as avg$\pm$stdev over 4 runs.}
  \label{tab:mujoco_qp}
\end{table}

\begin{figure}[ht!]
\centering
\includegraphics[width=.8\columnwidth]{figs/mujoco_legend.png}
\subfloat[HalfCheetah]{\includegraphics[width=0.3\columnwidth]{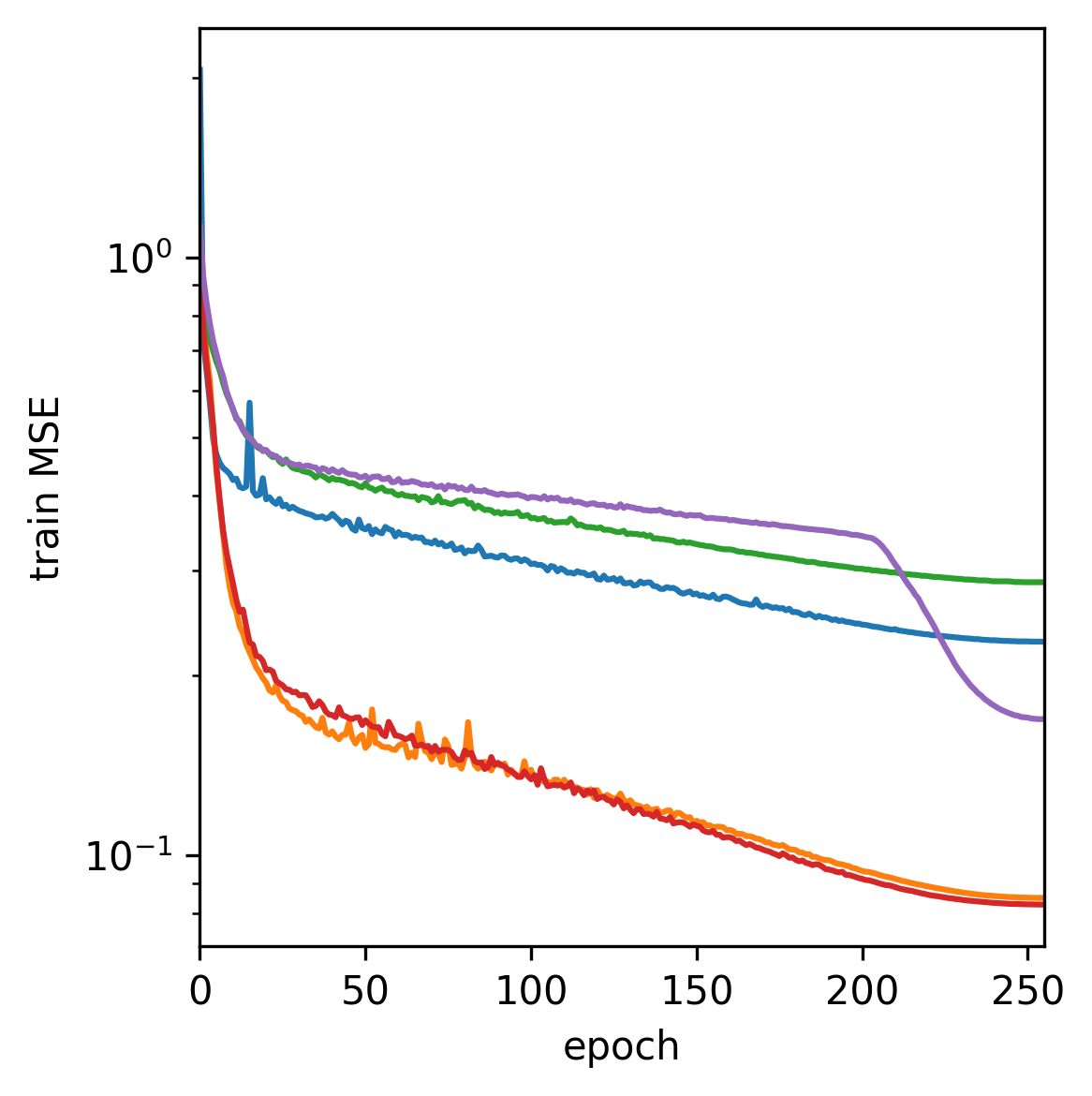}}
\subfloat[Hopper]{\includegraphics[width=0.3\columnwidth]{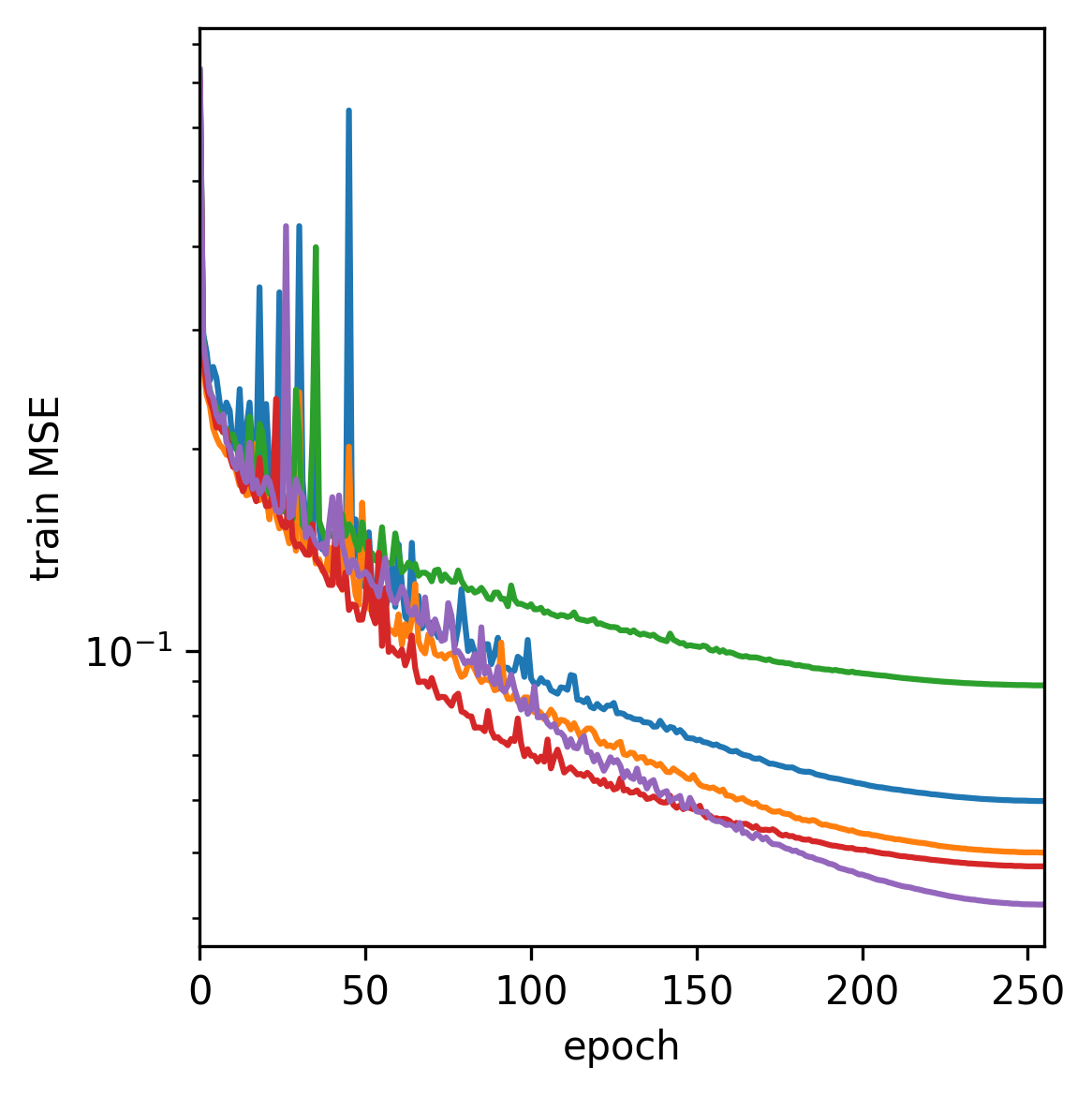}}
\subfloat[Swimmer]{\includegraphics[width=0.3\columnwidth]{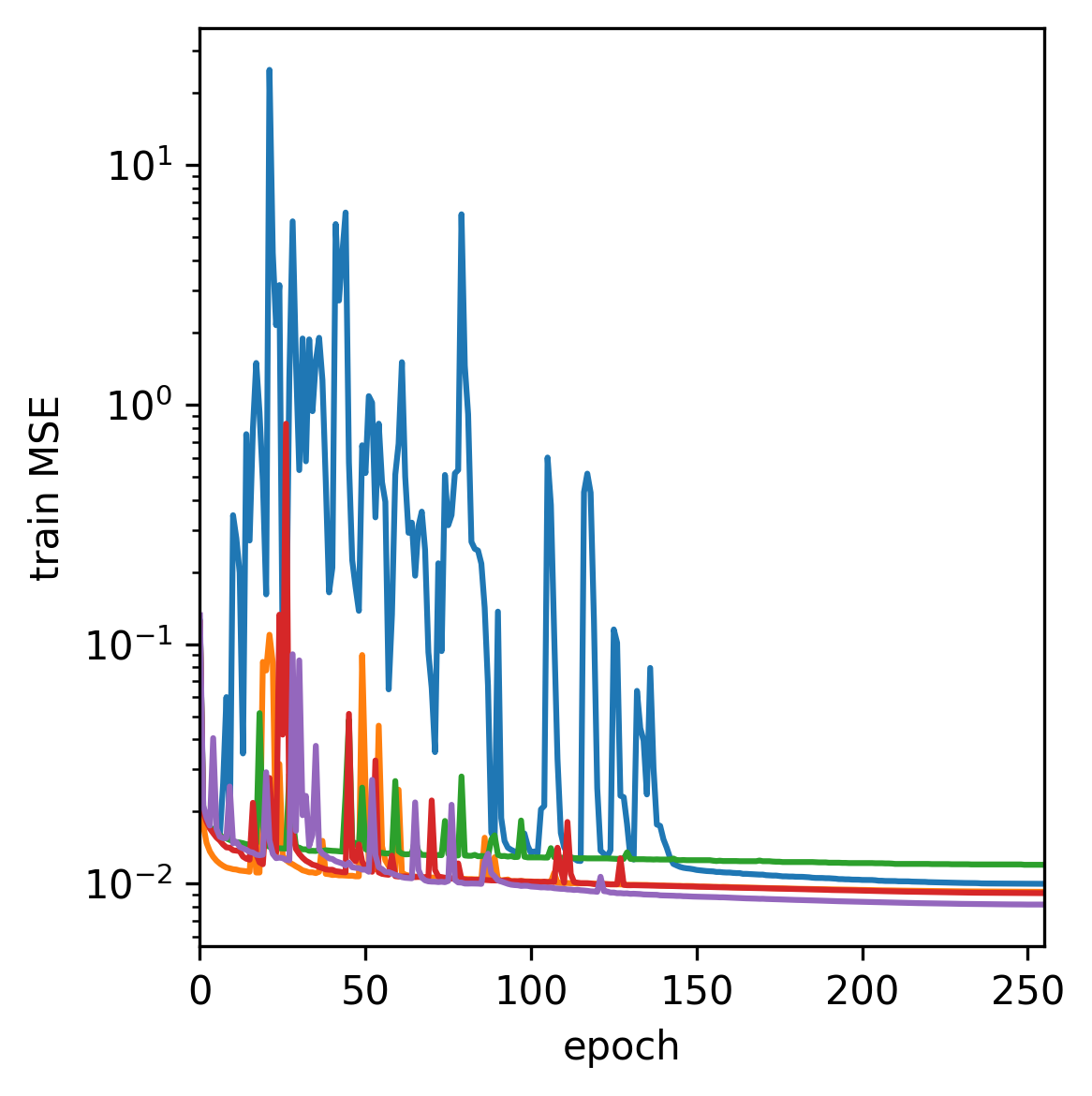}}
\caption{[Mujoco] Train MSE over epoch for all considered dynamics models. For the nodal feature, along with the position or the angle of the body part/joint, the node velocities are also considered.}\label{fig:train_mse_qp}
\end{figure}

\subsection{Node classification}\label{app:node-exp-details}
To facilitate comparison with previous work, we follow the experimental methodology of \cite{shchur2018pitfalls}.

\subsubsection{Experiment details}
\paragraph{Datasets.} We consider the three well-known citation networks, Cora \cite{mccallum2000automating}, Citeseer \cite{sen2008collective}, and Pubmed \cite{namata2012query}; the proposed models are tested on the datasets with the original fixed Planetoid traing/test splits, as well as random train/test splits. In addition, we also consider the coauthor graph, CoauthorCS \cite{shchur2018pitfalls} and the Amazon co-purchasing graphs, Computer and Photo \cite{mcauley2015image}. Table~\ref{tab:dataset_stat} provides some basic statistics about each dataset.

\begin{table}[htb]
  \centering
  \begin{tabular}{lllllll}
    \toprule
    Dataset & {\bf Cora} & {\bf Citeseer} & {\bf PubMed}  & {\bf CoauthorCS} & {\bf Computer} & {\bf Photo}\\
    Classes & 7 &  6 & 3 & 15 & 10 & 8\\
    Features & 1433 & 3703 & 500 & 6805 & 767 & 745 \\
    Nodes & 2485 & 2120 & 19717 & 18333 & 13381 & 7487 \\
    Edges & 5069 & 3679 & 44324 & 81894 & 245778 & 119043\\
    \bottomrule
  \end{tabular}
  \vspace{5pt}
    \caption{Dataset statistics.}
  \label{tab:dataset_stat}
\end{table}

\paragraph{Hyperparameters} The bracket architectures employed for this task are identical to those in Section~\ref{sec:brackets} except for that the right-hand side of the Hamiltonian, gradient, and double bracket networks is scaled by a learnable parameter $\mathrm{Sigmoid}\lr{\alpha}>0$, and the matrix $\bb{A}_2=\bb{I}$ is used as the inner product on 3-cliques.  It is easy to verify that this does not affect the structural properties or conservation character of the networks.  Nodal features $\qq_i$ are specified by the datasets, and edge features $\pp_\alpha = \lr{d_0\qq}_\alpha$ are taken as the combinatorial gradient of the nodal features.  In order to determine good hyperparameter configurations for each bracket, a Bayesian search is conducted using Weights and Biases \cite{wandb} for each bracket and each dataset using a random 80/10/10 train/valid/test split with random seed 123.  The number of runs per bracket was 500 for CORA, CiteSeer, and PubMed, and 250 for CoauthorCS, Computer, and Photo.  The hyperparameter configurations leading to the best validation accuracy are used when carrying out the experiments in Table~\ref{tab:planetoid} and Table~\ref{tab:random}.

Specifically, the hyperparameters that are optimized are as follows: initial learning rate (from 0.0005 to 0.05), number of training epochs (from 25 to 150), method of integration (rk4 or dopri5), integration time (from 1 to 5), latent dimension (from 10 to 150 in increments of 10), pre-attention mechanism $\tilde{a}$ (see below), positive function $f$ (either $\exp$ or $\mathrm{Squareplus}$), number of pre-attention heads (from 1 to 15, c.f. Remark~\ref{rem:multihead}), attention embedding dimension (from $1\times$ to $15\times$ the number of heads), weight decay rate (from 0 to 0.05), dropout/input dropout rates (from 0 to 0.8), and the MLP activation function for the metriplectic bracket (either $\mathrm{relu}$, $\tanh$, or $\mathrm{squareplus}$).  The pre-attention is chosen from one of four choices, defined as follows:
\begin{alignat*}{2}
\tilde{a}\lr{\qq_i,\qq_j} &= \frac{\lr{\bb{W}_K\qq_i}^\intercal\bb{W}_Q\qq_j}{d}  &&\text{scaled dot product}, \\
\tilde{a}\lr{\qq_i,\qq_j} &= \frac{\lr{\bb{W}_K\qq_i}^\intercal\bb{W}_Q\qq_j}{\nn{\bb{W}_K\qq_i}\nn{\bb{W}_Q\qq_j}} \quad &&\text{cosine similarity},\\
\tilde{a}\lr{\qq_i,\qq_j} &= \frac{\lr{\bb{W}_K\qq_i-\overline{\bb{W}_K\qq_i}}^\intercal\lr{\bb{W}_Q\qq_j-\overline{\bb{W}_Q\qq_j}}}{\nn{\bb{W}_K\qq_i-\overline{\bb{W}_K\qq_i}}\nn{\bb{W}_Q\qq_j-\overline{\bb{W}_Q\qq_j}}}, \quad &&\text{Pearson correlation}\\
\tilde{a}\lr{\qq_i,\qq_j} &= \lr{\sigma_u\sigma_x}^2\exp{\lr{-\frac{\nn{\bb{W}_K\bb{u}_i-\bb{W}_Q\bb{u}_j}^2}{2\ell_u^2}}}\exp{\lr{-\frac{\nn{\bb{W}_K\xx_i-\bb{W}_Q\xx_j}^2}{2\ell_x^2}}}, &&\text{exponential kernel} 
\end{alignat*}


\paragraph{Network architectures.} The architectures used for this experiment follow that of GRAND \cite{chamberlain2021grand}, consisting of the learnable affine encoder/decoder networks $\phi, \psi$ and learnable bracket-based dynamics in the latent space.  However, recall that the bracket-based dynamics require edge features, which are manufactured as $\pp_\alpha = \lr{d_0\qq}_\alpha$. In summary, the inference procedure is as follows: 
\begin{alignat*}{2}
\qq(0) &= \phi(\qq)  &&(\text{nodal feature encoding}), \\
\pp(0) &= d_0 \qq(0) \quad &&(\text{edge feature manufacturing}),\\
\lr{\qq(T),\pp(T)} &= \lr{\qq(0),\pp(0)} + \int_{0}^T  \lr{\dot\qq, \dot\pp} \mathrm d t, \quad &&(\text{latent dynamics})\\
\tilde{\qq} &= \psi\lr{\qq(T)}, &&(\text{nodal feature decoding} )\\
\bb{y} &= c(\tilde{\qq}). &&(\text{class prediction} )
\end{alignat*}
Training is accomplished using the standard cross entropy
\[H\lr{\bb{t},\bb{y}} = \sum_{i=1}^{\nn{\mathcal{V}}} \bb{t}_i^\intercal\log\bb{y}_i,\]
where $\bb{t}_i$ is the one-hot truth vector corresponding to the $i^{\mathrm{th}}$ node.  In the case of the metriplectic network, the networks $f_E,g_E,g_S$ are 2-layer MLPs with hidden dimension equal to the latent feature dimension and output dimension 1.

\subsubsection{Additional depth study}
Here we report the results of a depth study on Cora with the Planetoid train/val/test split.  Table~\ref{tab:depth1} shows the train/test accuracy of the different bracket architectures under two different increased depth conditions, labeled Task 1 and Task 2, respectively.  Task 1 refers to fixing the integration step-size at $\Delta t = 1$ and integrating to a variable final time $T$, while Task 2 instead fixes the final time $T$ to the value identified by the hyperparameter search (see Table~\ref{tab:depth_params}) and instead varies the step-size $\Delta t$.  Notice that both tasks involve repeatedly composing the trained network and hence simulate increasing depth, so that any negative effects of network construction such as oversmoothing, oversquashing, or vanishing/exploding gradients should appear in both cases.  For more information, Table~\ref{tab:depth2} provides a runtime comparison corresonding to the depth studies in Table~\ref{tab:depth1}.

Observe that every bracket-based architecture exhibits very stable performance in Task 2, where the final time is held fixed while the depth is increased.  This suggests that our proposed networks are dynamically stable and effectively mitigate negative effects like oversmoothing which are brought by repeated composition and often seen in more standard GNNs.  Interestingly, despite success on Task 2, only the gradient and metriplectic architectures perfectly maintain or improve their performance during the more adversarial Task 1 where the final time is increased with a fixed step-size.  This suggests that, without strong diffusion, the advection experienced during conservative dynamics has the potential to radically change label classification over time, as information is moved through the feature domain in a loss-less fashion.

\begin{remark}
    It is interesting that the architecture most known for oversmoothing (i.e., gradient) exhibits the most improved  classification performance with increasing depth on Task 1.  This is perhaps due to the fact that the gradient system decouples over nodes and edges, while the others do not, meaning that the gradient network does not have the added challenge of learning a useful association between the manufactured edge feature information and the nodal labels.  It remains to be seen if purely node-based bracket dynamics exhibit the same characteristics as the node-edge formulations presented here. 
\end{remark}

\begin{table}[htb]
  \centering
  \begin{tabular}{lll}
    \toprule
    CORA networks & {\bf Trainable Parameters} & {\bf Integration Time} \\
    {\bf Hamiltonian} & $60723$ & $1.49625$ \\
    {\bf Gradient} & $160772$ & $14.82404$ \\
    {\bf Double Bracket} & $30718$ & $5.36151$ \\
    {\bf Metriplectic} & $104088$ & $7.53107$ \\
    \bottomrule
  \end{tabular}
  \vspace{5pt}
    \caption{The integration time and number of trainable parameters corresponding to the best networks trained on CORA.  Note that integration time can be considered as a surrogate for depth, since the temporal step-size of each network is fixed to 1.}
  \label{tab:depth_params}
\end{table}

\begin{table}[htb]
  \centering
  \resizebox{\columnwidth}{!}{
  \begin{tabular}{llllllll}
    \toprule
    Depth study (acc) & {\bf 1 layer} & {\bf 2 layers} & {\bf 4 layers} & {\bf 8 layers} & {\bf 16 layers} & {\bf 32 layers} & {\bf 64 layers} \\
    {\bf Hamiltonian} & $75.0\pm 1.4$ & $77.9\pm 1.0$ & $62.0\pm 0.6$ & $38.8\pm 1.0$ & $32.0\pm 0.8$ & $25.4\pm 0.5$ & $17.1\pm 1.3$ \\
    {\bf Gradient} & $69.6\pm 1.1$ & $72.7\pm 1.0$ & $74.5\pm 1.1$ & $77.7\pm 1.0$ & $80.2\pm 1.2$ & $81.4\pm 1.0$ & $82.0\pm 1.2$ \\
    {\bf Double Bracket} & $77.8\pm 1.1$ & $81.2\pm 0.8$ & $83.9\pm 1.1$ & $83.8\pm 1.0$ & $80.1\pm 1.3$ & $58.9\pm 1.0$ & $19.3\pm 1.4$ \\
    {\bf Metriplectic} & $61.0\pm 1.6$ & $62.4\pm 1.9$ & $62.0\pm 0.8$ & $61.9\pm 1.3$ & $61.6\pm 1.0$ & $60.6\pm 1.3$ & $61.2\pm 1.0$ \\
    \midrule
    {\bf Hamiltonian} & $77.2\pm 0.8$ & $77.5\pm 1.2$ & $77.4\pm 1.2$ & $77.0\pm 0.6$ & $78.0\pm 0.7$ & $77.8\pm 1.2$ & $77.4\pm 0.7$ \\
    {\bf Gradient} & $79.9\pm 1.4$ & $79.9\pm 0.6$ & $79.6\pm 0.6$ & $79.6\pm 1.1$ & $80.4\pm 1.1$ & $80.5\pm 0.9$ & $79.8\pm 1.1$ \\
    {\bf Double Bracket} & $84.2\pm 0.9$ & $84.5\pm 1.3$ & $84.1\pm 0.5$ & $84.2\pm 1.3$ & $84.1\pm 0.8$ & $83.8\pm 0.9$ & $84.2\pm 1.0$ \\
    {\bf Metriplectic} & $61.7\pm 1.4$ & $61.0\pm 0.9$ & $61.0\pm 1.0$ & $61.4\pm 1.3$ & $61.6\pm 1.4$ & $61.7\pm 1.2$ & $61.9\pm 1.1$ \\
    \bottomrule
  \end{tabular}}
  \vspace{5pt}
    \caption{Accuracy results of a depth study on CORA. Test accuracies reported as mean$\pm$stdev over 10 runs with random train/valid/test splits.  
    Top/bottom groups correspond to tasks 1 and 2 of the study, respectively, where Task 1 uses a fixed step-size while Task 2 uses a fixed integration domain.
    }
  \label{tab:depth1}
\end{table}

\begin{table}[htb]
  \centering
  \resizebox{\columnwidth}{!}{
  \begin{tabular}{llllllll}
    \toprule
    Depth study (time) & {\bf 1 layer} & {\bf 2 layers} & {\bf 4 layers} & {\bf 8 layers} & {\bf 16 layers} & {\bf 32 layers} & {\bf 64 layers} \\
    {\bf Hamiltonian} & $0.038\pm 0.003$ & $0.060\pm 0.006$ & $0.106\pm 0.011$ & $0.191\pm 0.035$ & $0.308\pm 0.035$ & $0.497\pm 0.045$ & $0.874\pm 0.035$ \\
    {\bf Gradient} & $0.053\pm 0.005$ & $0.091\pm 0.011$ & $0.159\pm 0.015$ & $0.273\pm 0.023$ & $0.470\pm 0.031$ & $0.809\pm 0.037$ & $1.44\pm 0.038$ \\
    {\bf Double Bracket} & $0.068\pm 0.007$ & $0.125\pm 0.010$ & $0.232\pm 0.023$ & $0.434\pm 0.039$ & $0.770\pm 0.068$ & $1.36\pm 0.098$ & $2.52\pm 0.103$ \\
    {\bf Metriplectic} & $0.161\pm 0.010$ & $0.305\pm 0.011$ & $0.574\pm 0.014$ & $1.10\pm 0.012$ & $2.13\pm 0.040$ & $4.11\pm 0.075$ & $7.86\pm 0.129$ \\
    \midrule
    {\bf Hamiltonian} & $0.052\pm 0.009$ & $0.067\pm 0.013$ & $0.114\pm 0.021$ & $0.200\pm 0.040$ & $0.350\pm 0.054$ & $0.613\pm 0.056$ & $1.17\pm 0.084$ \\
    {\bf Gradient} & $0.365\pm 0.020$ & $0.680\pm 0.012$ & $1.26\pm 0.022$ & $2.26\pm 0.058$ & $4.22\pm 0.069$ & $8.38\pm 0.451$ & $19.4\pm 0.152$ \\
    {\bf Double Bracket} & $0.241\pm 0.036$ & $0.394\pm 0.046$ & $0.731\pm 0.057$ & $1.44\pm 0.132$ & $2.98\pm 0.286$ & $5.42\pm 0.609$ & $9.44\pm 0.487$ \\
    {\bf Metriplectic} & $1.05\pm 0.051$ & $2.05\pm 0.097$ & $3.87\pm 0.100$ & $7.35\pm 0.149$ & $14.1\pm 0.332$ & $27.2\pm 0.378$ & $53.8\pm 0.370$ \\
    \bottomrule
  \end{tabular}}
  \vspace{5pt}
    \caption{Runtime results of a depth study on CORA. Wall clock times reported as mean$\pm$stdev over 10 runs with random train/valid/test splits.  
    Top/bottom groups correspond to tasks 1 and 2 of the study, respectively, where Task 1 uses a fixed step-size while Task 2 uses a fixed integration domain.
    }
  \label{tab:depth2}
\end{table}
\end{document}